\newcolumntype{?}{!{\vrule width 1.5pt}}
\newtheorem{theorem}{Theorem}
\crefname{theorem}{theorem}{Theorems}
\Crefname{Theorem}{Theorem}{Theorems}
\newtheorem*{lemma_nonumber*}{Lemma}
\newaliascnt{lemma}{theorem}
\crefname{lemma}{lemma}{lemmas}
\Crefname{Lemma}{Lemma}{Lemmas}
\newaliascnt{corollary}{theorem}
\newtheorem{corollary}[corollary]{Corollary}
\crefname{corollary}{corollary}{corollaries}
\Crefname{Corollary}{Corollary}{Corollaries}
\newaliascnt{proposition}{theorem}
\newtheorem{proposition}[proposition]{Proposition}
\crefname{proposition}{proposition}{propositions}
\Crefname{Proposition}{Proposition}{Propositions}
\newaliascnt{definition}{theorem}
\crefname{definition}{definition}{definitions}
\Crefname{Definition}{Definition}{Definitions}
\newaliascnt{remark}{theorem}
\crefname{remark}{remark}{remarks}
\Crefname{Remark}{Remark}{Remarks}
\crefname{example}{example}{examples}
\Crefname{Example}{Example}{Examples}
\crefname{figure}{figure}{figures}
\Crefname{Figure}{Figure}{Figures}
\newtheorem{assumption}{\textbf{A}\hspace{-3pt}}
\newtheorem{assumptionF}{\textbf{F}\hspace{-3pt}}
\newtheorem{assumptionB}{\textbf{B}\hspace{-3pt}}
\Crefname{assumptionB}{\textbf{B}\hspace{-3pt}}{\textbf{B}\hspace{-3pt}}
\crefname{assumptionB}{\textbf{B}}{\textbf{B}}
\Crefname{assumptionC}{\textbf{C}\hspace{-3pt}}{\textbf{C}\hspace{-3pt}}
\crefname{assumptionC}{\textbf{C}}{\textbf{C}}
\Crefname{assumptionH}{\textbf{H}\hspace{-3pt}}{\textbf{H}\hspace{-3pt}}
\crefname{assumptionH}{\textbf{H}}{\textbf{H}}
\Crefname{assumptionT}{\textbf{T}\hspace{-3pt}}{\textbf{T}\hspace{-3pt}}
\crefname{assumptionT}{\textbf{T}}{\textbf{T}}
\Crefname{assumptionT}{\textbf{T}\hspace{-3pt}}{\textbf{T}\hspace{-3pt}}
\crefname{assumptionT}{\textbf{T}}{\textbf{T}}
\Crefname{assumptionL}{\textbf{L}\hspace{-3pt}}{\textbf{L}\hspace{-3pt}}
\crefname{assumptionL}{\textbf{L}}{\textbf{L}}
\Crefname{assumptionQ}{\textbf{Q}\hspace{-3pt}}{\textbf{Q}\hspace{-3pt}}
\crefname{assumptionQ}{\textbf{Q}}{\textbf{Q}}
\Crefname{assumptionAR}{\textbf{AR}\hspace{-3pt}}{\textbf{AR}\hspace{-3pt}}
\crefname{assumptionAR}{\textbf{AR}}{\textbf{AR}}
\newcommand*{\centerfloat}{%
  \parindent \z@
  \leftskip \z@ \@plus 1fil \@minus \textwidth
  \rightskip\leftskip
  \parfillskip \z@skip}
\def\funsig{\varphi_{\Sigma}}
\def\rmg{\mathrm{g}}
\def\rmf{\mathrm{f}}
\def\Mip{\mathtt{M}}
\def\tupsigma{\tilde{\upsigma}}
\def\dist{\mathrm{d}}
\def\tw{\tilde{w}}
\def\bfUpsilon{\mathbf{\Upsilon}}
\def\bfW{\mathbf{W}}
\DeclareMathAlphabet{\mathpzc}{OT1}{pzc}{m}{it}
\def\wkN{w^{k,N}}
\def\WkN{W^{k,N}}
\def\datap{\pi}
\def\nablaw{\nabla_{w}}
\def\nunn{\nu_n^{N}}
\def\nun{\nu^N}
\def\MRp{\mathscr{P}(\mathbb{R}^p)}
\def\MRpdeux{\mathscr{P}_2(\mathbb{R}^p)}
\def\bfw{\mathbf{W}}
\def\bfW{\mathbf{W}}
\def\bfwt{\tilde{\mathbf{W}}}
\def\bfwkN{\mathbf{W}^{k,N}}
\def\bfwkNt{\tilde{\mathbf{W}}^{k,N}}
\def\bfnu{\bm{\nu}}
\def\bfmu{\bm{\mu}}
\def\nubf{\bfnu}
\def\bfnun{\bm{\nu}^N}
\def\bfnutN{\tilde{\bfnu}^N}
\newcommandx\pisys[1][1=t]{\Pi_{#1}}
\newcommandx\ctun[1][1=T]{\Capprox_{#1,1}}
\newcommand{\pinv}{^{-1}}
\newcommand{\rref}[1]{\tup{\Cref{#1}}}
\newcommand{\gua}{\gamma_{\alpha, \beta}(N)}
\newcommand{\guaD}[1]{\gamma^{#1}_{\alpha, \beta}(N)}
\newcommandx{\expec}[2]{{\mathbb E}\left[#1 \middle \vert #2  \right]} 
\newcommand{\nn}{_{n+1}}
\def\dim{d}
\newcommand{\tb}{\tilde{b}}
\newcommand{\rme}{\mathrm{e}}
\newcommand{\Lip}{\mathtt{L}}
\newcommand{\Mtt}{\mathtt{M}}
\newcommand{\Ktt}{\mathtt{K}}
\newcommandx{\norm}[2][1=]{\ifthenelse{\equal{#1}{}}{\left\Vert #2 \right\Vert}{\left\Vert #2 \right\Vert^{#1}}}
\newcommandx{\normLigne}[2][1=]{\ifthenelse{\equal{#1}{}}{\Vert #2 \Vert}{\Vert #2\Vert^{#1}}}
\def\bfc{\mathbf{c}}
\def\bfB{\mathbf{B}}
\def\tbfB{\tilde{\mathbf{B}}}
\def\msa{\mathsf{A}}
\def\msk{\mathsf{K}}
\def\mss{\mathsf{S}}
\def\mse{\mathsf{E}}
\def\msf{\mathsf{F}}
\def\msg{\mathsf{G}}
\def\msu{\mathsf{U}}
\def\msx{\mathsf{X}}
\def\msz{\mathsf{Z}}
\def\msy{\mathsf{Y}}
\newcommand{\mcb}[1]{\mathcal{B}(#1)}
\def\mcz{\mathcal{Z}}
\def\mcy{\mathcal{Y}}
\def\mcx{\mathcal{X}}
\def\mce{\mathcal{E}}
\def\mcf{\mathcal{F}}
\def\mcg{\mathcal{G}}
\def\rset{\mathbb{R}}
\def\nset{\mathbb{N}}
\def\nsets{\mathbb{N}^{\star}}
\def\bR{\mathbb{R}}
\def\rmu{\mathrm{u}}
\def\rmd{\mathrm{d}}
\def\rmy{\mathrm{y}}
\def\rmS{\mathrm{S}}
\def\rme{\mathrm{e}}
\def\rmc{\mathrm{C}}
\def\rmC{\mathrm{C}}
\def\rmD{\mathrm{D}}
\def\rmcc{\rmc_{\mathrm{c}}}
\def\rmcb{\rmc_{\mathrm{b}}}
\newcommand{\dd}{\mathrm{d}}
\def\trace{\operatorname{Tr}}
\newcommandx{\functionspace}[2][1=+]{\mathbb{F}_{#1}(#2)}
\newcommandx{\VarDeux}[3][3=]{\operatorname{Var}^{#3}_{#1}\left\{#2 \right\}}
\newcommand{\LeftEqNo}{\let\veqno\@@leqno}
\newcommand{\floor}[1]{\left\lfloor #1 \right\rfloor}
\newcommand{\N}{\ensuremath{\mathbb{N}}}
\newcommand{\PE}{\mathbb{E}}
\newcommand{\PP}{\mathbb{P}}
\newcommand{\abs}[1]{\left\vert #1 \right\vert}
\newcommand{\absLigne}[1]{\vert #1 \vert}
\newcommandx{\Vnorm}[2][1=V]{\| #2 \|_{#1}}
\newcommandx{\VnormEq}[2][1=V]{\left\| #2 \right\|_{#1}}
\newcommand{\parenthese}[1]{\left(#1 \right)}
\newcommand{\parentheseLigne}[1]{(#1 )}
\newcommand{\parentheseDeux}[1]{\left[ #1 \right]}
\newcommand{\parentheseDeuxLigne}[1]{[ #1 ]}
\newcommand{\defEns}[1]{\left\lbrace #1 \right\rbrace }
\newcommand{\defEnsLigne}[1]{\lbrace #1 \rbrace }
\newcommandx\probaMarkovTilde[2][2=]
\newcommand{\expe}[1]{\PE \left[ #1 \right]}
\newcommand{\expeLigne}[1]{\PE [ #1 ]}
\newcommand{\plusinfty}{+\infty}
\def\ie{\textit{i.e.}}
\def\eqsp{\;}
\newcommand{\coint}[1]{\left[#1\right)}
\newcommand{\ocint}[1]{\left(#1\right]}
\newcommand{\ccint}[1]{\left[#1\right]}
\newcommandx{\weight}[2][2=n]{\omega_{#1,#2}^N}
\definecolor{cadmiumgreen}{rgb}{0.0, 0.42, 0.24}
\newcommandx\sequence[3][2=,3=]
\newcommandx\sequenceD[3][2=,3=]
\newcommandx{\sequencen}[2][2=n\in\N]{\ensuremath{\{ #1_n, \eqsp #2 \}}}
\newcommandx\sequenceDouble[4][3=,4=]
\newcommandx{\sequencenDouble}[3][3=n\in\N]{\ensuremath{\{ (#1_{n},#2_{n}), \eqsp #3 \}}}
\def\iid{i.i.d.}
\def\eg{\textit{e.g.}}
\newcommand{\opnorm}[1]{{\left\vert\kern-0.25ex\left\vert\kern-0.25ex\left\vert #1
    \right\vert\kern-0.25ex\right\vert\kern-0.25ex\right\vert}}
\def\Id{\operatorname{Id}}
\newcommandx{\CPE}[3][1=]{{\mathbb E}_{#1}\left[#2 \middle \vert #3  \right]} 
\newcommandx{\CPELigne}[3][1=]{{\mathbb E}_{#1}[#2 \vert #3 ]} 
\newcommandx{\CPEsq}[3][1=]{{\mathbb{E}^{1/2}}_{#1}\left[#2 \middle \vert #3  \right]} 
\newcommandx{\CPVar}[3][1=]{\mathrm{Var}^{#3}_{#1}\left\{ #2 \right\}}
\newcommand{\CPP}[3][]
{\ifthenelse{\equal{#1}{}}{{\mathbb P}\left(\left. #2 \, \right| #3 \right)}{{\mathbb P}_{#1}\left(\left. #2 \, \right | #3 \right)}}
\def\scrC{\mathscr{C}}
\newcommandx{\osc}[2][1=]{\mathrm{osc}_{#1}(#2)}
\newcommandx\dinfwass[1][1=T]{\wassersteinD[2, #1]}
\newcommandx\Mwiener[3][1=\ccint{0,T},2=p,3=]{\mathscr{P}_{#3}(\rmc(#1, \rset^{#2}))}
\def\Id{\operatorname{Id}}
\def\transpose{^\top}
\def\Capprox{C}
\newcommand{\ensembleLigne}[2]{\{#1\,:\eqsp #2\}}
\def\rmD{\mathrm{D}}
\newcommand\coupling[2]{\Gamma(\mu,\nu)}
\def\Leb{\mathrm{Leb}}
\def\vareps{\varepsilon}
\def\Phibf{\mathbf{\Phi}}
\newcommandx{\KL}[2]{\text{KL}\left( #1 | #2 \right)}
\newcommandx{\KLLigne}[2]{\text{KL}( #1 | #2 )}
\def\gaStep
\def\QKer{Q}
\def\distance{\mathbf{d}}
\newcommandx{\wasserstein}[3][1=\distance,3=]{\mathpzc{W}_{#1}^{#3}\left(#2\right)}
\newcommandx{\wassersteinLigne}[3][1=\distance,3=]{\mathpzc{W}_{#1}^{#3}(#2)}
\newcommandx{\wassersteinD}[1][1=\distance]{\mathpzc{W}_{#1}}
\newcommandx{\wassersteinDLigne}[1][1=\distance]{\mathpzc{W}_{#1}}
\def\sigmaD{\sigma^2}
\newcommandx{\phibfs}[1][1=]{\pmb{\varphi}_{\sigmaD_{#1}}}
\newcommandx\sequenceg[3][2=,3=]
\newcommandx{\distV}[1][1=\bfc]{\mathbf{W}_{#1}}
\newcommandx{\distVdeux}[1][1=W_2]{\mathbf{d}_{#1}}
\def\mtt{\mathtt{m}}
\newcommand{\tup}[1]{\textup{#1}}
\def\Pens{\mathscr{P}}
\def\scrC{\mathscr{C}}
\def\bfrho{\boldsymbol{\rho}}
\def\bflambda{\boldsymbol{\lambda}}
\def\risk{\mathscr{R}}
\def\hrisk{\hat{\risk}}
\def\tbfW{\tilde{\mathbf{W}}}
\def\dist{\mathrm{m}}
\def\nbparticlem{m}
\title{Quantitative Propagation of Chaos for SGD in Wide Neural Networks}
\author{%
  Valentin De Bortoli \\
  Centre Borelli \\  
  ENS Paris Saclay
  \And
  Alain Durmus \\
  Centre Borelli \\  
  ENS Paris Saclay  
  \And
  Xavier Fontaine \\
  Centre Borelli \\  
  ENS Paris Saclay  
  \And  
  Umut \c{S}im\c{s}ekli \\ 
  LTCI, T\'{e}l\'{e}com Paris \\
  Institut Polytechnique de Paris
}
\begin{document}

\maketitle

\begin{abstract}
  In this paper, we investigate the limiting behavior of a
  continuous-time counterpart of the Stochastic Gradient Descent (SGD)
  algorithm applied to two-layer overparameterized neural networks, as
  the number or neurons (\ie, the size of the hidden layer)
  $N \to \plusinfty$.  Following a probabilistic approach, we show
  `propagation of chaos' for the particle system defined by this
  continuous-time dynamics under different scenarios, indicating that
  the statistical interaction between the particles asymptotically
  vanishes. In particular, we establish quantitative convergence with
  respect to $N$ of any particle to a solution of a mean-field
  McKean-Vlasov equation in the metric space endowed with the
  Wasserstein distance. In comparison to previous works on the
  subject, we consider settings in which the sequence of stepsizes in
  SGD can potentially depend on the number of neurons and the
  iterations. We then identify two regimes under which different
  mean-field limits are obtained, one of them corresponding to an
  implicitly regularized version of the minimization problem at
  hand. We perform various experiments on real datasets to validate
  our theoretical results, assessing the existence of these two
  regimes on classification problems and illustrating our convergence
  results.
\end{abstract}


\section{Introduction}

Due to their ability to tackle very challenging problems, neural
networks have been extremely popular and keystones in machine learning
\cite{goodfellow:bengio:courville:2016}. Thanks to their practical success,
they have become the de facto tool in many application domains, such as
image processing \cite{krizhevsky2012imagenet} and natural language
processing \cite{manning1999foundations}.  However, the mathematical
understanding of these models and their inherent inference mechanism still remains limited.

Among others, one suprising empirical observation about modern neural networks is that increasing the number of neurons in
a network often leads to better classification testing and training
errors \cite{zhang2016understanding}, contradicting the classical statistical learning theory \cite{shalev2014understanding}. These experimental results
suggest that neural network-based methods exhibit a limiting behavior
when the number of neurons is large, \ie, when the neural network is
\emph{overparameterized}.

In this paper, we contribute to the recent literature on the
theoretical analysis of this phenomenon. To this end, we consider a
simple two-layer (\ie, one hidden layer) neural network that is
parametrized by $N$ weights $w^{1:N} = \{w^{k,N}\}_{k=1}^N$ and
trained to minimize the structural risk
$\risk^{N}$ 
by Stochastic Gradient Descent (SGD) using independent and identically
distributed (\iid) samples $(X_i,Y_i)_{i\in\nsets}$.  Even in such a
simplified setting, the landscape of $\risk^N$ is in many cases
arduous to be explored, since $\risk^N$ is non-convex and might
exhibit many local minima and saddle points
\cite{li:et:al:2018:visualizing,ballard:2017:energy}; hence making the
minimization of $\risk^N$ challenging. However, for large $N$, the
analysis of the landscape of $\risk^N$ turns out to be much simpler in
some situations. For instance \cite{soltanolkotabi2019theoretical} has
shown that local minima are global minima when the activation function
is quadratic as soon as $N$ is larger than twice the size of the
original dataset. More generally, relying on approximation or random
matrix theory, several works
(\eg,\cite{fukumizu2000local,bray2007statistics,pascanu2014saddle,pennington2017geometry,kawaguchi2016deep,freeman2016topology,venturi2018neural,bach2017breaking,choromanska:et:al:2015,venturi:bandeira:bruna:2019,kuditipudi:2019:explaining_landscape})
establish favorable properties for the landscape of $\risk^N$ as
$N \to \plusinfty$, such as absence of saddle points, poor local
minima or connected optima. In addition, minimization by SGD in this
setting has also proved to be efficient for some models
\cite{allen:zho:li:song:2019,sarao:manelli:biroli:et:al:2019}.

In this paper we follow an increasingly popular line of research to
analyze the behavior of gradient descent-type algorithms (stochastic
or deterministic) used for overparameterized models. This approach
consists in establishing a `continuous mean-field limit' for these
algorithms as $N \to \plusinfty$, and has been successively applied in
\cite{sirignano2018mean,mei2018mean,rotskoff2018trainability,mei2019mean,javanmard2019analysis,
  chizat2019sparse,chizat2018global,jabir2019mean}. Based on this
result, the \emph{qualitative} long-time behavior of SGD applied to
overparameterized neural networks can be deduced: these studies all
identify an evolution equation on the limiting probability measure
which corresponds to a mean-field ordinary differential equation
(ODE), \ie, if the initialization is deterministic, then each hidden
unit of the network \emph{independently} evolves along the flow of a
specific ODE. This implies that, even though the update step is
intrinsically stochastic in SGD, the noise completely vanishes in the
limit $N\to \plusinfty$. In this context, two main strategies have
been followed to prove convergence of SGD to this mean-field
dynamics. The first one is based on gradient flows in Wasserstein
spaces
\cite{ambrosio2008gradient,erbar2010heat,ambrosio:savare:zambotti:2009}
and the second one is the `propagation of chaos' phenomenon
\cite{sznitman1991topics,gottlieb2000markov,meleard1988systemes},
indicating that the statistical interaction between the individual
entries of the network asymptotically vanishes. Both approaches are in
fact deeply connected, which stems from the duality between
probability and partial differential equation theories
\cite{jordan1998variational}. We follow in this paper the second
approach and establish that propagation of chaos holds for a
continuous counterpart of SGD to a solution of a McKean-Vlasov type
diffusion \cite{mckean:1967} as $N \to \plusinfty$.

The fact that no noise appears in the mean-field limit of SGD obtained
in previous work can seem surprising. Aiming to demystify this matter, we study in this paper the case where the stepsize in SGD can depend
on the number of neurons. Our main contribution is to identify two
mean-field regimes: The first one is the same as the deterministic
mean-field limit obtained in the described literature. The second one
is a McKean-Vlasov diffusion for which the covariance matrix is
non-zero and depends on the properties of the data distribution. To
the best of our knowledge, this limiting diffusion has not been
reported in the literature and brings interesting insights on the
behavior of neural networks in overparameterized settings. Our results
suggest that taking large stepsizes in the stochastic optimization
procedure corresponds to an \emph{implicit regularization} of the
original problem, which can potentially ease the minimization of the
structural risk.  In addition, in contrast to previous studies, we
establish strong quantitative propagation of chaos and we identify the
convergence rate of each neuron to its mean-field limit with respect
to $N$. Finally we numerically illustrate the existence of these two regimes and the
propagation of chaos phenomenon we derive on several classical
classification examples on MNIST and CIFAR-10 datasets.
In these experiments, the stochastic regime empirically exhibits slightly better generalization properties compared to the deterministic case
  identified in
  \cite{sirignano2018mean,mei2018mean,chizat2018global}.



\section{Overparametrized Neural Networks}
\label{sec:setting}
Consider some feature and label spaces denoted by $\msx$ and $\msy$
endowed with $\sigma$-fields $\mcx$ and $\mcy$ respectively.  In this
paper, we consider a one hidden layer neural network, whose purpose is to classify data
from $\msx$ with labels in $\msy$.  We suppose that the network has
$N \in \nsets$ neurons in the hidden layer whose weights are denoted
by $w^{1:N} = \{\wkN\}_{k=1}^N\in{ (\bR^p)}^N$.  We model the
non-linearity by a function $F : \bR^p \times \msx \to \rset$, and
consider a loss function $\ell:\bR\times \msy \to \bR_+$ and a penalty function
$V: \ \rset^p \to \rset$.  Then, the learning problem corresponding to
this space of hypothesis consists in minimizing the structural risk
\begin{equation}
  \label{eq:exp_loss}
  \textstyle{\risk^N(w^{1:N})= \int_{\msx \times \msy} \ell\parenthese{\dfrac{1}{N}\sum_{k=1}^N F(\wkN,x), y} \dd \datap(x,y) +\dfrac{1}{N} \sum_{k=1}^N V(w^{k,N}) \eqsp ,}
\end{equation}
where $\datap$ is the data distribution on $\msx \times \msy$.  Note
that, in this particular setting, the weights of the second layer are
fixed to $(1/N)$.  This setting is referred to as ``fixed
coefficients'' in \cite[Theorem 1]{mei2019mean} and is less realistic
than the fully-trainable setting.  Nevertheless, we believe that this
shortcoming can be circumvented upon replacing $F(w^{k,N},\cdot)$ by
$F(u^{k,N},\cdot)v^{k,N}$ in \eqref{eq:exp_loss}, where $u^{1:N}$ and
$v^{1:N}$ are the weights of the hidden and the second layer
respectively. However, this raises new theoretical challenges which
are left for future work.


Throughout this paper, we consider the following assumptions.
\begin{assumption}
	\label{assum:all}
	There exist measurable functions $\Phi : \msx \to \coint{1,\plusinfty}$ and $\Psi :\msy \to \coint{1,\plusinfty}$ such that the following conditions hold.
	\begin{enumerate}[wide, labelwidth=!, labelindent=0pt,label=(\alph*)]
        \item \label{item:assuml} $\ell: \ \bR\times\bR \to \bR_+$
          is such that for any $y \in \msy$,
          $(\tilde{\rmy} \mapsto \ell(\tilde{\rmy}, y))$ is three-times
          differentiable and for any
        $\mathrm{y} \in \rset$ and $y \in \msy$ we have
        \begin{equation}
\abs{\partial_1 \ell(0,y) } \leq \Psi(y) \eqsp , \qquad \abs{\partial_1^2 \ell(\rmy,y)} + \abs{\partial_1^3 \ell(\rmy,y)} \leq \Psi(y) \eqsp,
        \end{equation}
where for any $i \in \{1,2,3\}$, $\partial_1^i \ell(\rmy,y)$ is the $i$-th derivative of $(\tilde{\rmy} \mapsto \ell(\tilde{\rmy}, y))$ at $\rmy$.
\item \label{item:assumf} $F: \ \rset^p \times \msx \to \rset$
  is such that for any $x \in \msx$, $(\tw \mapsto F(\tw, x))$ is
  three-times differentiable and for any
        $w \in \rset^p$ and $x \in \msx$
        \begin{equation}
          \norm{F(w, x)} + \norm{\rmD_w^1 F(w, x)}+
          \norm{\rmD_w^2 F(w, x)} + \norm{\rmD_w^3 F(w, x)} \leq \Phi(x)\eqsp,
        \end{equation}
        where for any $i \in \{1, 2, 3\}$, $\rmD_w^i F(w, x)$ is
        the $i$-th differential of
        $(\tilde{w} \mapsto F(\tilde{w}, x))$ at $w$.
      \item \label{item:assumV} $V \in \rmC^3(\rset^p , \rset)$ satisfies
        $ \sup_{w \in \rset^p} \{ \normLigne{\rmD^2V(w)} +
        \normLigne{\rmD^3V(w)}\}< \plusinfty$.
  	\item \label{item:compact} The data distribution $\datap$ satisfies
  $
		\int_{\msx\times\msy} \defEnsLigne{\Phi^{10}(x)+\Psi^4(y)} \dd\datap(x,y) < \infty\eqsp .
  $
	\end{enumerate}
\end{assumption}

%
Note that \Cref{assum:all}-\ref{item:compact} is immediately satisfied
in the case where $\datap$ is compactly supported, $\msx$ and $\msy$
are subsets of $\rset^d$ and $\rset$ respectively and $\Psi$ and $\Phi$
are bounded on the support of $\datap$. For any $N \in \nsets$, under
\Cref{assum:all}, by the Lebesgue dominated convergence theorem,
$\risk^N$ given by \eqref{eq:exp_loss} is well-defined, continuously
differentiable with gradient given for any $w^{1:N} \in (\rset^{p})^N$ by
\begin{equation}
  \label{eq:2}
  \begin{aligned}
 \nabla \risk^N(w^{1:N})&=
\int_{\msx\times \msy} \nabla_w \hrisk^N(w^{1:N}, x, y) \rmd \pi(x,y) \eqsp, \\
  \hrisk^N(w^{1:N}, x, y) &= \textstyle{ \ell\parenthese{\frac{1}{N}\sum_{k=1}^N F(\wkN,x), y}
  + \frac{1}{N} \sum_{k=1}^N V(\wkN)}  \eqsp,\\
N \nabla_w \hrisk^N(w^{1:N}, x, y)        &=\textstyle{\partial_1 \ell
    \parenthese{\frac{1}{N} \sum_{k=1}^N F(w^{k,N},x) ,y}
    \nablaw F^{1:N}(w^{1:N},x)}  + \nabla V^{1:N}(w^{1:N})
\eqsp,
\end{aligned}
\label{eq:h_risk_def}
\end{equation}
setting $\nablaw F^{1:N}(w^{1:N},x) =  \{\nablaw F(w^{k,N},x)\}_{k=1}^N$,  and   $\nabla V^{1:N}(w^{1:N}) =\{\nabla V(w^{k,N})\}_{k=1}^N$.



Let $(W_0^k)_{ k\in\nsets}$ be \iid~$p$ dimensional random variables with distribution $\mu_0$.
Consider the sequence $(W_n^{1:N})_{n \in \nset}$ associated with SGD, starting from $W_0^{1:N}$ and defined by the following recursion:
 for any $n \in \nset$ denoting the iteration index
\begin{equation}
	\label{eq:sgdtotal}
	W^{1:N}\nn=W^{1:N}_n-\gamma N^{\beta}(n+\gua^{-1})^{-\alpha} \nabla \hrisk^N(W^{1:N}_n, X_n, Y_n) \eqsp ,
\end{equation}
where $(X_n, Y_n)_{n \in \nset}$ is a sequence of \iid~input/label
samples distributed according to $\pi$, and
$(\gamma N^{\beta}(n+\gua^{-1})^{-\alpha})_{n \in \nset}$ as a whole
denotes a sequence of stepsizes: here, $\beta \in \ccint{0,1}$,
$\alpha \in \coint{0,1}$, and
$\gua = \gamma^{1/(1-\alpha)}N^{(\beta-1)/(1-\alpha)}$.  Note that in
the constant stepsize setting $\alpha =0$, the recursion
\eqref{eq:sgdtotal} consists in using $\gamma N^{\beta}$ as a stepsize
In addition, it also encompasses the case of decreasing stepsizes (as
soon as $\alpha>0$). The term $\gua^{-1}$ in \eqref{eq:sgdtotal} is a
scaling parameter which appears naturally in the corresponding
continuous-time dynamics, see \eqref{eq:sde} below. We stress that
contrary to previous approaches such as
\cite{chizat2018global,mei2018mean,sirignano2018mean}, the stepsize
appearing in \eqref{eq:sgdtotal} depends on the number of neurons
$N$. Our main contribution is to establish that different mean-field
limiting behaviors of a continuous counterpart of SGD arise depending
on $\beta$.

We will show that the quantity $\gua$ plays the role of a
discretization stepsize in the McKean-Vlasov approximation of SGD. The
case where $\alpha=0$ and $\beta = 0$, \ie, the setting considered by
\cite{chizat2018global,mei2018mean,sirignano2018mean}, corresponds to
choosing the stepsize as $\gamma/N$, which decreases with increasing
$N$. In the new setting $\alpha=0$, $\beta =1$, this corresponds to
take a fixed stepsize $\gamma$. This observation further motivates the scaling
and the parameter we introduced in \eqref{eq:sgdtotal}.

Before stating our result, we present and give an informal derivation of the continuous particle system
dynamics we consider to model \eqref{eq:sgdtotal}.
We first show that \eqref{eq:sgdtotal} can be rewritten as a recursion
corresponding to the discretization of a continuous particle system,
\ie, a stochastic differential equation (SDE) with coefficients
depending on the empirical measure of the particles.  Let us denote by
$\Pens(\mse)$ the set of probability measures on a measurable space
$(\mse,\mce)$.  Remark that for each particle dynamics
  $(W_n^{k,N})_{n \in \nset}$ the SGD update \eqref{eq:sgdtotal} is a
  function of the current position and the empirical measure of the
  weights. To show this, define the mean-field
  $h : \rset^p \times \Pens(\rset^d)\to \rset^p$ and the noise field 
  $\xi : \rset^p \times \Pens(\rset^d) \times \msx\times \msy \to
  \rset^p$, for any $\mu \in \MRp$, $w \in \rset^p$,
$(x,y) \in \msx \times \msy$ by 
\begin{align}
  \label{eq:h}
&  h(w, \mu)=-\int_{\msx \times \msy}  \partial_1
\ell \parenthese{\mu[F(\cdot,x)] ,y} \nablaw F(w,x) \, \dd \datap(x,y) - \nabla V(w) \eqsp, \\
  \label{eq:def_xi_n}
 &  \xi(w,\mu,x,y)=-h(w, \mu) - \partial_1 \ell(\mu[F(\cdot,x)] ,y) \nablaw F(w,x)  - \nabla V(w)\eqsp .
\end{align}
Note that with this notation,
$h(\wkN, \nunn) = -N \partial_{\wkN} \risk^N(w^{1:N})$ and
$ \xi(\wkN, \nunn,X_n,Y_n) = N \defEnsLigne{-\partial_{\wkN}
  \hrisk^N(w^{1:N}, X_n, Y_n) + \partial_{\wkN} \risk^N(w^{1:N})}$,
for any $N \in \nset$, $k \in \{1, \dots, N\}$ and $n \in \nset$,
where $\nun$ is the empirical measure of the discrete particle system
corresponding to SGD defined by
$\nunn=N^{-1}\sum_{k=1}^N \updelta_{\WkN_n}$. Then,
the recursion \eqref{eq:sgdtotal} can be rewritten as follows:
\begin{equation}
	\label{eq:sgd}
	\WkN\nn = \WkN_n + \gamma N^{\beta -1}(n+\gua\pinv)^{-\alpha} \defEns{h(\WkN_n,\nunn) + \xi(\WkN_n,\nunn,X_n,Y_n)} \eqsp .
\end{equation}
We now present the continuous model associated with this discrete
process. For large $N$ or small $\gamma$ these two processes
  can be arbitrarily close. For $N \in\nsets$, consider the particle system diffusion
$(\bfW^{1:N}_t)_{t \geq 0} = (\{\bfW^{k,N}_t\}_{k=1}^N)_{t \geq 0}$
starting from $\bfW^{1:N}_0=W_0^{1:N}$ defined for any $k \in \{1, \dots, N\}$
by
\begin{equation}
	\label{eq:sde}
	\dd \bfwkN_t = (t+1)^{-\alpha} \defEns{ h(\bfwkN_t,\bfnu^N_t)\dd t+\gua^{1/2}\Sigma^{1/2}(\bfwkN_t,\bfnu^N_t) \dd \bfB^{k}_t} \eqsp ,
\end{equation}
where
$\ensembleLigne{(\bfB_t^{k})_{t \geq 0}}{k \in \nsets}$ is a family of independent $p$-dimensional Brownian
motions and $\bfnun_t$ is the empirical probability distribution of the particles defined for any $t \geq 0$ by  $\bfnun_t=N^{-1}\sum_{k=1}^N \updelta_{\bfwkN_t}$.
In addition in \eqref{eq:sde}, $	\Sigma$ is the $p\times p$ matrix given by
\begin{equation}
	\label{eq:Sigma}
\textstyle{	\Sigma(w,\mu)=\int_{\msx\times\msy} \xi(w, \mu,x,y)\xi(w,\mu,x,y)\transpose  \dd \datap(x,y) \eqsp,} \qquad \text{ for any $w \in \rset^p$ and $\mu \in \MRp$} \eqsp,
      \end{equation}
which is well-defined under \Cref{assum:all}. In the supplementary material we show that under \Cref{assum:all}, \eqref{eq:sde}
admits a  unique strong  solution. We now give an informal discussion to justify why
\eqref{eq:sde} can be seen as the continuous-time counterpart of
\eqref{eq:sgd}. For any $N \in \nsets$,
define $(\tbfW^{1:N}_t)_{t \geq 0}$  for any $t \geq 0$ by
$\tbfW^{1:N}_t = W_{n_t}^{1:N}$ with $n_t = \floor{t/\gua}$ and denote
$\bfnutN_t$ the empirical measure associated with
$\tbfW^{1:N}_t$. In this case, by defining the interval $I_{n, \alpha, \beta}^N = \ccint{n\gua, (n+1)\gua}$ and using \eqref{eq:sgd} and
$\gua^{1-\alpha}=\gamma N^{\beta-1}$, we obtain the following approximation for
any $n \in \nset$
\begin{align}
  &\bfwkNt_{(n+1)\gua}-\bfwkNt_{n\gua}\\
  &=\gamma N^{\beta -1} (n+\gua\pinv)^{-\alpha} \defEns{ h(\bfwkNt_{n\gua},\nunn) + \xi(\bfwkNt_{n\gua},\nunn,X_n,Y_n)} \\
  &\approx \gua (n\gua+1)^{-\alpha} \defEns{h(\bfwkNt_{n\gua},\bfnutN_{n\gua}) + \Sigma^{1/2}(\bfwkNt_{n\gua},\bfnutN_{n\gua})G} \\
  &\approx \underbrace{\int_{I_{n, \alpha, \beta}^N} (s+1)^{-\alpha} h(\bfwt^{k,N}_s,\bfnutN_s)\dd s}_{(A)} + \underbrace{\int_{I_{n, \alpha, \beta}^N} \guaD{1/2} (s+1)^{-\alpha} \Sigma^{1/2}(\bfwt^{k,N}_s,\bfnutN_s) \dd \bfB_s^{k}}_{(B)} \eqsp, 	\label{eq:ansatz}
\end{align}
where $G$ is a $p$-dimensional Gaussian random variable with zero mean
and identity covariance matrix.  Note that the second line
  corresponds to \eqref{eq:sgd} and the last to \eqref{eq:sde}.  To
obtain such proxy, we first remark that for any
  $w \in \rset^p$ and $\mu \in \Pens(\rset^p)$, $\xi(w,\mu,X_n,Y_n)$
  has zero mean and covariance matrix $\Sigma(w, \mu)$ and assume
that the noise term is roughly Gaussian.  Second, we use that the
covariance of $(B)$ in \eqref{eq:ansatz} is equal to
$\int_{I_{n, \alpha, \beta}^N} \gua (s+1)^{-2\alpha}
\Sigma(\bfwt^{k,N}_s,\bfnutN_s) \rmd s$. To obtain the last line, we
use some first-order Taylor expansion of this term and $(A)$ as
$\gua \to 0$. Then, \eqref{eq:ansatz} corresponds to \eqref{eq:sde} on
$I_{n,\alpha,\beta}^N$. As a result, \eqref{eq:sde} is the continuous
counterpart to \eqref{eq:sgd} and $n$ iterations in \eqref{eq:sgd}
correspond to the horizon time $n \gua$ in \eqref{eq:sde}.  In the
next section, we show that a strong quantitative propagation of chaos holds
for \eqref{eq:sde} \ie, we show that for $N \to +\infty$ the
  particles become indenpendent and have the same distribution
  associated with a McKean-Vlasov diffusion. The extension of these
results to discrete SGD \eqref{eq:sgd} and the rigorous derivation of
\eqref{eq:ansatz} can be established using strong functional
approximations following \cite[Proposition
1]{fontaine2020continuous}. Due to space constraints, we
leave it as future work.

Finally, note that until now we only considered the case where the batch size in SGD is equal to one. For a batch size $M \in\nsets$, this limitation can be lifted replacing $\pi$ and  $\hrisk^N$ in \eqref{eq:sgdtotal}
by $\pi^{\otimes M}$ and
\begin{equation}
\textstyle{  \hrisk^{N, M}(w^{1:N}, x, y) = \dfrac{1}{M}\sum_{i=1}^M \ell\parenthese{\dfrac{1}{N}\sum_{k=1}^N F(\wkN,x_i), y_i} \eqsp,}
\end{equation}
defined for any $w^{1:N} \in (\rset^p)^N$,
$x = (x_i)_{i \in \{1, \dots, M\}} \in \msx^M$ and
$y = (y_i)_{i \in \{1, \dots, M\}} \in \msy^M$.
In this case, we obtain that the continuous-time counterpart of
\eqref{eq:sgd} is given by \eqref{eq:sde} upon replacing $\Sigma^{1/2}$ by $\Sigma^{1/2}/M^{1/2}$.
This leads to the particle system diffusion $(\bfW^{1:N}_t)_{t \geq 0} = (\{\bfW^{k,N}_t\}_{k=1}^N)_{t \geq 0}$  starting from $\bfW^{1:N}_0$ defined for any
$k \in \{1, \dots, N\}$ by
\begin{equation}
	\label{eq:sde_batch}
	\dd \bfwkN_t = (t+1)^{-\alpha} \defEns{ h(\bfwkN_t,\bfnu^N_t)\dd t+(\gua/M)^{1/2}\Sigma^{1/2}(\bfwkN_t,\bfnu^N_t) \dd \bfB^k_t} \eqsp .
      \end{equation}
      In the supplement \Cref{sec:stoch-grad-lang}, we also present
      the case of a modified Stochastic Gradient Langevin Dynamics
      (mSGLD) algorithm \cite{welling2011bayesian} which was
      considered in \cite{mei2018mean} in the specific case
        $\beta = 0$. We extend our propagation of chaos results to
        this setting.



\section{Mean-Field Approximation and Propagation of Chaos}
\label{sec:mean-field-appr}

In this section we identify the mean-field limit of the diffusion
\eqref{eq:sde_batch}. More precisely, we show that there exist two regimes
depending on how the stepsize scale with the number of hidden units.

Our results are based on the propagation of chaos theory
\cite{sznitman1991topics,meleard1988systemes,gottlieb2000markov} and
extend the recent works of
\cite{chizat2019sparse,chizat2018global,sirignano2020mean,sirignano2018mean,mei2019mean,mei2018mean}.
In what follows, we denote
$\Pens_2(\rset^p) = \{ \mu \in \Pens(\rset^p) \, : \, \int_{\rset^p}
\norm[2]{\tilde{w}} \rmd \mu(\tilde{w}) < \plusinfty\}$ and 
$\rmc(\rset_+, \rset^p)$ the set of continuous functions from
$\rset_+$ to $\rset^p$. We also consider the usual metric $\dist$ on
$\rmc(\rset_+, \rset^p)$ defined for any
$u_1, u_2 \in \rmc(\rset_+, \rset^p)$ by
$\dist(u_1, u_2) = \sum_{n \in \nsets} 2^{-n}
\norm{u_1-u_2}_{\infty,n} / \defEnsLigne{1 + \norm{u_1-u_2}_{\infty,n}
}$, where
$\norm{u_1-u_2}_{\infty,n} = \sup_{t \in \ccint{0,n}}
\normLigne{u_1(t) - u_2(t)}$.  It is well-known that
$(\scrC, \dist) = (\rmc(\rset_+, \rset^p), \dist)$ is a complete
separable space.  For any metric space $(\msf, \dist_{\msf})$, with
Borel $\sigma$-field $\mcb{\msf}$, we define the extended Wasserstein
distance of order $2$, denoted $\wassersteinD[2]: \ \Pens(\msf) \times \Pens(\msf) \to
  \ccint{0,+\infty}$ for any $\mu_1, \mu_2 \in \Pens(\msf)$ by
$ \wassersteinD[2]^2(\mu_1, \mu_2) = \inf_{\Lambda \in \Gamma(\mu_1,
  \mu_2)}\int_{\msf \times \msf} \dist_{\msf}^2(v_1,v_2) \rmd
\Lambda(v_1,v_2)$, where $\Gamma(\mu_1, \mu_2)$ is the set of
transference plans between $\mu_1$ and $\mu_2$, \ie,
$\Lambda \in \Gamma(\mu_1, \mu_2)$ if for any $\msa \in \mcb{\msf}$,
$\Lambda(\msa \times \msf) = \mu_1(\msa)$ and
$\Lambda(\msf \times \msa) = \mu_2(\msa)$.

We start by stating our results in the case where $\beta \in \coint{0,1}$ for which a deterministic mean-field limit is obtained.
Consider the mean-field ODE starting from a random variable $\bfw_0^{\star}$ given by
  \begin{equation}
    \label{eq:mean_field_beta_small}
    \rmd \bfw_t^{\star} = (t+1)^{-\alpha} h(\bfw_t^{ \star}, \bflambda_t^{ \star}) \rmd t \eqsp , \qquad \text{with $\bflambda^{\star}_t$ the distribution of $\bfw_t^{ \star}$} \eqsp.
  \end{equation}
  We show in the supplement that this ODE admits a solution on $\rset_+$.  This mean-field equation
  \eqref{eq:mean_field_beta_small} is deterministic conditionally to
  its initialization.


\begin{theorem}
  \label{thm:empi_conv_cont}
  Assume \rref{assum:all}. Let $(\bfw_0^k)_{k \in \nset}$ be a
  sequence of \iid~$\rset^p$-valued random variables with distribution
  $\mu_0 \in \Pens_2(\rset^p)$ and set for any $N \in \nsets$,
  $\bfw_0^{1:N} = \{\bfw_0^{k}\}_{k=1}^N$. Then, for any
  $\nbparticlem \in \nsets$ and $T \geq 0$, there exists $C_{\nbparticlem,T} \geq 0$
  such that for any $\alpha \in \coint{0,1}$, $\beta \in \coint{0,1}$,
  $M \in \nsets$ and $N \in \nsets$ with $N \geq m$
  \begin{equation}
   \textstyle{ \expe{\sup_{t \in \ccint{0,T}} \normLigne{\bfw_t^{1:\nbparticlem,N} - \bfw_t^{1:\nbparticlem, \star}}^2} \leq C _{\nbparticlem, T} \defEns{N^{-(1-\beta)/(1-\alpha)}M^{-1} + N^{-1}} \eqsp ,}
  \end{equation}
  with
  $(\bfw_t^{1:\nbparticlem,N}, \bfw_t^{1:\nbparticlem, \star}) =
  \{(\bfw_t^{k, N}, \bfw_t^{k, \star})\}_{k=1}^{\nbparticlem}$,
  $(\bfw_t^{1:N})$ the solution of \eqref{eq:sde_batch} starting
  from $\bfw_0^{1:N}$, and for any $k \in \nsets$,
  $\bfw_t^{k, \star}$ the solution of
  \eqref{eq:mean_field_beta_small} starting from $\bfw_0^k$.
\end{theorem}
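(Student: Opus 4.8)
The plan is to run Sznitman's synchronous coupling argument, exploiting that in the regime $\beta\in\coint{0,1}$ the diffusion coefficient in \eqref{eq:sde_batch} carries the prefactor $(\gua/M)^{1/2}$ with $\gua=\gamma^{1/(1-\alpha)}N^{(\beta-1)/(1-\alpha)}\to 0$, so that the noise is a vanishing perturbation of the deterministic flow \eqref{eq:mean_field_beta_small}. For each $k$ I would couple the particle $\bfwkN_t$ with an independent copy $\bfw_t^{k,\star}$ of the mean-field flow, driven by the \emph{same} Brownian motion $\bfB^{k}$ and started from the \emph{same} $\bfw_0^k$; the family $(\bfw^{k,\star})_{k\ge 1}$ is then \iid{} with common marginal law $\bflambda_t^\star$, and $\bfw_t^{1:m,\star}$ is exactly the object in the statement.

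Next I would record the analytic toolbox supplied by \rref{assum:all}, uniformly over $\alpha,\beta,M,N$. Since $\normLigne{\rmD^2V}$ is bounded, $\nabla V$ is globally Lipschitz; from $\absLigne{\partial_1\ell(0,y)}\le\Psi(y)$ and $\absLigne{\partial_1^2\ell(\cdot,y)}\le\Psi(y)$ one gets $\absLigne{\partial_1\ell(a,y)}\le\Psi(y)(1+\abs{a})$ and $\absLigne{\partial_1\ell(a,y)-\partial_1\ell(b,y)}\le\Psi(y)\abs{a-b}$; combined with $\normLigne{F(\cdot,x)},\normLigne{\rmD_wF(\cdot,x)},\normLigne{\rmD_w^2F(\cdot,x)}\le\Phi(x)$ this shows that $w\mapsto h(w,\mu)$ is globally Lipschitz uniformly in $\mu$, and $\mu\mapsto h(w,\mu)$ is Lipschitz for $\wassersteinD[2]$ uniformly in $w$ (because $h$ depends on $\mu$ only through the linear functionals $\mu\mapsto\mu[F(\cdot,x)]$ and $F(\cdot,x)$ is $\Phi(x)$-Lipschitz). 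Moreover the $\nabla V$ contributions cancel in $\xi$ defined by \eqref{eq:def_xi_n}, whence $\xi$, and therefore $\Sigma$ in \eqref{eq:Sigma}, is bounded uniformly in $(w,\mu)$; all the integrals $\int\Psi\Phi^2\rmd\pi$, $\int\Psi^2\Phi^4\rmd\pi$, $\int\Phi^2\rmd\pi$ that arise are finite by \Cref{assum:all}-\ref{item:compact} via Young's inequality and $\Phi\ge 1$. Finally, the linear growth of $h$ and boundedness of $\Sigma$ give, by a Grönwall argument, $\sup_N\sup_{t\in\ccint{0,T}}\expe{\normLigne{\bfwkN_t}^2}<\infty$ and the analogous bound for $\bfw_t^{k,\star}$, which is what makes all quantities below finite.

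Writing $\bar\nu_s^N=N^{-1}\sum_{j=1}^N\updelta_{\bfw_s^{j,\star}}$ for the empirical measure of the mean-field copies, I would decompose
\begin{align*}
\bfwkN_t-\bfw_t^{k,\star}
&=\int_0^t (s+1)^{-\alpha}\left[h(\bfwkN_s,\bfnu_s^N)-h(\bfw_s^{k,\star},\bar\nu_s^N)\right]\rmd s\\
&\quad+\int_0^t (s+1)^{-\alpha}\left[h(\bfw_s^{k,\star},\bar\nu_s^N)-h(\bfw_s^{k,\star},\bflambda_s^\star)\right]\rmd s+(\gua/M)^{1/2}\int_0^t (s+1)^{-\alpha}\Sigma^{1/2}(\bfwkN_s,\bfnu_s^N)\rmd\bfB_s^k .
\end{align*}
For the last (martingale) term, Doob's $\Ltwo$ maximal inequality together with $(s+1)^{-2\alpha}\le 1$ and $\normLigne{\Sigma}\le C$ gives a contribution bounded by $C_{p,T}\,\gua/M=C_{p,T}\,\gamma^{1/(1-\alpha)}N^{-(1-\beta)/(1-\alpha)}M^{-1}$ — one of the two target terms — and uses only boundedness, not Lipschitzness, of $\Sigma^{1/2}$. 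For the first term, the Lipschitz bounds above plus the elementary coupling estimate $\wassersteinD[2]^2(\bfnu_s^N,\bar\nu_s^N)\le N^{-1}\sum_{j=1}^N\normLigne{\bfw_s^{j,N}-\bfw_s^{j,\star}}^2$ bound it, after Cauchy–Schwarz in time, by $C\,T\int_0^t\bigl(\normLigne{\bfw_s^{k,N}-\bfw_s^{k,\star}}^2+N^{-1}\sum_{j=1}^N\normLigne{\bfw_s^{j,N}-\bfw_s^{j,\star}}^2\bigr)\rmd s$. For the middle (fluctuation) term I would use that $h(\bfw_s^{k,\star},\bar\nu_s^N)-h(\bfw_s^{k,\star},\bflambda_s^\star)$ is, pointwise in $(x,y)$, dominated by $\Psi(y)\Phi(x)\,\absLigne{\bar\nu_s^N[F(\cdot,x)]-\bflambda_s^\star[F(\cdot,x)]}$, and since $\bar\nu_s^N[F(\cdot,x)]$ is an average of $N$ \iid{} terms with mean $\bflambda_s^\star[F(\cdot,x)]$ and variance $\le\Phi^2(x)$, a law-of-large-numbers $\Ltwo$ estimate gives $\expe{\normLigne{h(\bfw_s^{k,\star},\bar\nu_s^N)-h(\bfw_s^{k,\star},\bflambda_s^\star)}^2}\le C N^{-1}$ (again using \Cref{assum:all}-\ref{item:compact}); this is the source of the $N^{-1}$ term. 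Setting $u_t:=\expe{\sup_{s\in\ccint{0,t}}\normLigne{\bfw_s^{1,N}-\bfw_s^{1,\star}}^2}$, exchangeability gives $\expe{\sup_{s\le t}N^{-1}\sum_{j}\normLigne{\bfw_s^{j,N}-\bfw_s^{j,\star}}^2}\le u_t$, so taking suprema in $t$ and expectations yields $u_t\le C_T\{N^{-(1-\beta)/(1-\alpha)}M^{-1}+N^{-1}\}+C_T\int_0^t u_s\,\rmd s$; Grönwall's lemma closes this, and summing over $k=1,\dots,m$ (so the left side is $m\,u_T$) gives the claim with $C_{m,T}=m\,C_T$.

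The step I expect to be the main obstacle is not any single estimate but the bookkeeping that keeps every constant simultaneously independent of $N$, $M$, $\alpha$ and $\beta$: one must check that the precise exponents in \Cref{assum:all}-\ref{item:compact} (the power $10$ on $\Phi$ and $4$ on $\Psi$) really do absorb all the Cauchy–Schwarz and Young splittings in the fluctuation term, the bound on $\Sigma$, and the a priori moment estimates, and that the time weight $(t+1)^{-\alpha}\in(0,1]$ is handled uniformly in $\alpha$ (here it can only help). The genuinely mean-field ingredient — the $\Ltwo$ law of large numbers for $\bar\nu_s^N[F(\cdot,x)]$ — is comparatively easy precisely because the interaction enters $h$ only through the linear functionals $\mu\mapsto\mu[F(\cdot,x)]$ and $F$ is bounded; the only care needed there is that the fluctuation is measured after composition with the nonlinearity $\partial_1\ell$, which is why the bound on $\partial_1^2\ell$ (not merely on $\partial_1\ell$) is used.
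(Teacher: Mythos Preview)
Your argument is correct and is essentially the paper's own proof: Sznitman's synchronous coupling, with the drift difference split into a Lipschitz part absorbed by Gr\"onwall and a law-of-large-numbers fluctuation of $\bar\nu_s^N$ against $\bflambda_s^\star$ contributing $N^{-1}$, and the stochastic integral treated as a vanishing perturbation of size $\gua/M=\gamma^{1/(1-\alpha)}N^{-(1-\beta)/(1-\alpha)}M^{-1}$; exchangeability then closes the Gr\"onwall loop exactly as you describe. The paper organises this as an abstract quantitative propagation-of-chaos theorem for $N$-dependent coefficients $(b_N,\upsigma_N)$ converging uniformly to limits $(b,\upsigma)$, and then specialises.

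One difference in emphasis is worth recording: the paper proves that $\Sigma^{1/2}$ is Lipschitz in $(w,\mu)$, by bounding the second derivative of $t\mapsto\Sigma(tw_1+(1-t)w_2,\,t\mu_1+(1-t)\mu_2)$ and invoking Stroock's matrix-square-root lemma --- this is precisely where the \emph{third} derivatives of $\ell$ and $F$ in \rref{assum:all} and the high moment exponents in \rref{assum:all}-\ref{item:compact} are actually consumed. You correctly observe that for the $\beta<1$ estimate itself only \emph{boundedness} of $\Sigma$ is needed, since the whole noise term is a perturbation; this is a genuine simplification for the present theorem, and your approach makes transparent that the third-derivative hypotheses are not used in the coupling bound. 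What the paper's extra work buys is (i) strong well-posedness of the particle SDE \eqref{eq:sde_batch}, which does require some Lipschitz regularity of $\Sigma^{1/2}$ and which you are tacitly assuming, and (ii) a single abstract statement covering the $\beta=1$ case as well, where Lipschitzness of $\Sigma^{1/2}$ is no longer avoidable because the diffusion survives in the limit.
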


In \Cref{thm:empi_conv_cont}, $m$ is a fixed number of particles.
Note that
$\ensembleLigne{(\bfw_t^{k, \star})_{t \geq 0}}{k \in \nsets}$ is
\iid~with distribution $\bflambda^{\star}$ which is the pushfoward measure of
$\mu_0$ by the function $(w_0\mapsto (w_t)_{t \geq 0})$ which from an
initial point $w_0$ gives $(w_t)_{t \geq 0} \in \scrC$ the solution of
\eqref{eq:mean_field_beta_small} on
$\rset_+$. \Cref{thm:empi_conv_cont} shows that the dynamics of the 
particles become deterministic and independent when
$N \to \plusinfty$.    The proofs of \Cref{thm:empi_conv_cont} and the following result, \Cref{thm:empi_conv_cont_one}, are postponed to
\Cref{sec:proofs-crefthm:-cref}.

We now consider the case $\beta = 1$ and derive a similar quantitative
theorem as \Cref{thm:empi_conv_cont} but with a different dynamics
than \eqref{eq:mean_field_beta_small}. Consider the mean-field SDE
starting from variable $\bfw_0^{\star}$ given by
  \begin{equation}
    \label{eq:mean_field_beta_one}
    \rmd \bfw_t^{\star} = (t+1)^{-\alpha} \defEns{h(\bfw_t^{ \star}, \bflambda_t^{ \star}) \rmd t
    + (\gamma^{1/(1-\alpha)}\Sigma(\bfw_t^{ \star}, \bflambda_t^{\star})/M)^{1/2} \rmd \bfB_t} \eqsp ,
  \end{equation}
  where $\bflambda^{\star}_t$ is the distribution of $\bfw_t^{ \star}$
  and $(\bfB_t)_{t \geq 0}$ is a $p$ dimensional Brownian motion.
  Note that taking the limit $\gamma \to 0$ or $M \to +\infty$ in
  \eqref{eq:mean_field_beta_one} we recover
  \eqref{eq:mean_field_beta_small}. We show in the supplement that this SDE admits a solution on
  $\rset_+$.  The following theorem is similar to
  \Cref{thm:empi_conv_cont} in the case $\beta = 1$.

\begin{theorem}
  \label{thm:empi_conv_cont_one}
  Let $\beta = 1$.  Assume \rref{assum:all}. Let
  $(\bfw_0^k)_{k \in \nset}$ be a sequence of $\rset^p$-valued random
  variables with distribution $\mu_0 \in \Pens_2(\rset^p)$ and assume
  that for any $N \in \nsets$,
  $\bfw_0^{1:N} = \{\bfw_0^{k}\}_{k=1}^N$. Then, for any
  $\nbparticlem \in \nsets$ and $T \geq 0$, there exists $C_{\nbparticlem,T} \geq 0$
  such that for any $\alpha \in \coint{0,1}$, $M \in \nsets$ and
  $N \in \nsets$ with $N \geq m$ 
  \begin{equation}
   \textstyle{ \expe{\sup_{t \in \ccint{0,T}} \normLigne{\bfw_t^{1:\nbparticlem,N} - \bfw_t^{1:\nbparticlem, \star}}^2} \leq C _{\nbparticlem, T} N^{-1} \eqsp , }
 \end{equation}
 with
 $(\bfw_t^{1:\nbparticlem,N}, \bfw_t^{1:\nbparticlem, \star}) =
 \{(\bfw_t^{k, N}, \bfw_t^{k, \star})\}_{k=1}^{\nbparticlem}$,
 $(\bfw_t^{1:N})$ the solution of \eqref{eq:sde_batch} starting
 from $\bfw_0^{1:N}$, and for any $k \in \nsets$,
 $\bfw_t^{k, \star}$ the solution of \eqref{eq:mean_field_beta_one}
 starting from $\bfw_0^k$ and Brownian motion $(\bfB^{k}_{t})_{t \geq 0}$.
\end{theorem}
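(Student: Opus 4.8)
\emph{Proof strategy for \Cref{thm:empi_conv_cont_one}.}
The plan is to run the standard synchronous-coupling (Sznitman) argument, the one delicate point being that, to reach the rate $N^{-1}$ rather than the generic empirical-measure rate $N^{-1/p}$, one must exploit throughout the fact that both $h$ in \eqref{eq:h} and $\Sigma$ in \eqref{eq:Sigma} depend on the measure $\mu$ only through the family of linear functionals $x\mapsto\mu[F(\cdot,x)]$. First I would set up the coupling: for each $k\in\nsets$, let $\bfw_t^{k,\star}$ solve the mean-field SDE \eqref{eq:mean_field_beta_one} driven by the same Brownian motion $\bfB^k$ and started from the same $\bfw_0^k$ as the $k$-th particle of \eqref{eq:sde_batch} (well-posedness of both systems under \Cref{assum:all}, with locally Lipschitz coefficients of polynomial growth, being established beforehand). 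Since $(\bfw_0^k,\bfB^k)_{k\in\nsets}$ are \iid, the pairs $\{((\bfw_t^{k,N})_{t\ge0},(\bfw_t^{k,\star})_{t\ge0})\}_k$ are exchangeable, and because $\normLigne{\bfw_t^{1:\nbparticlem,N}-\bfw_t^{1:\nbparticlem,\star}}^2=\sum_{k=1}^{\nbparticlem}\normLigne{\bfw_t^{k,N}-\bfw_t^{k,\star}}^2$, the left-hand side of the theorem is bounded, by exchangeability and $\sup_t\sum_k\le\sum_k\sup_t$, by $\nbparticlem\,u_T$, where
\[
  u_T:=\expe{\textstyle N^{-1}\sum_{k=1}^N\sup_{t\in\ccint{0,T}}\normLigne{\bfw_t^{k,N}-\bfw_t^{k,\star}}^2}
\]
(equivalently $u_T=\expe{\sup_{t\in\ccint{0,T}}\normLigne{\bfw_t^{1,N}-\bfw_t^{1,\star}}^2}$). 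It therefore suffices to show $u_T\le C_T N^{-1}$.

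Two auxiliary ingredients would be recorded first. The first is a family of moment bounds, $\sup_{N}\sup_{k\le N}\expe{\sup_{t\le T}\normLigne{\bfw_t^{k,N}}^q}<\infty$ and the same for $\bfw_t^{k,\star}$, for the exponents $q$ that appear, obtained by Gr\"onwall and the Burkholder--Davis--Gundy (BDG) inequality from the (at most) linear growth of $h(\cdot,\mu)$ in $w$, the (at most) quadratic growth of $\Sigma(\cdot,\mu)$ in $w$, and the $\Phi^{10}$-, $\Psi^{4}$-integrability of \Cref{assum:all}-\ref{item:compact}. The second is coefficient regularity: differentiating \eqref{eq:h} shows $w\mapsto h(w,\mu)$ is globally Lipschitz uniformly in $\mu$ (from the bounds on $\rmD_w^2 F$, $\rmD^2 V$, $\partial_1^2\ell$), while $w\mapsto\Sigma^{1/2}(w,\mu)$ is only \emph{locally} Lipschitz, with constant of polynomial growth in $\normLigne{w}$ — the matrix square root being merely $1/2$-H\"older in general, this uses the Stroock--Varadhan-type fact that $a\ge0$ and $\|a''\|\le M$ force $\sqrt a$ to be $\sqrt{M/2}$-Lipschitz, applied on balls $\{\normLigne{w}\le R\}$ with $M=M_R$ growing in $R$ since $\Sigma$ grows quadratically. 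The same tools give, for fixed $w$ (with a $w$-dependent constant of polynomial growth), that $\mu\mapsto h(w,\mu)$ and $\mu\mapsto\Sigma^{1/2}(w,\mu)$ are Lipschitz with respect to the pseudo-metric $\mathbf{d}_F(\mu,\mu'):=\big(\int_{\msx\times\msy}\varrho(x,y)\,|\mu[F(\cdot,x)]-\mu'[F(\cdot,x)]|^2\,\dd\datap(x,y)\big)^{1/2}$ for a suitable $\datap$-integrable weight $\varrho$ built from $\Phi,\Psi$. It is essential to use this weak metric rather than $\wassersteinD[2]$: $\wassersteinD[2]$ between an empirical measure of \iid~samples and their common law decays only like $N^{-1/p}$, whereas $\mathbf{d}_F$ will decay like $N^{-1/2}$, and the high power of $\Phi$ in \Cref{assum:all}-\ref{item:compact} is exactly what leaves room for the various weighted integrals this entails.

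For the core estimate I would write $\Delta_t^k:=\bfw_t^{k,N}-\bfw_t^{k,\star}$, subtract \eqref{eq:mean_field_beta_one} from \eqref{eq:sde_batch}, perform the standard localisation that trades the polynomially-growing local Lipschitz constants against the moment bounds, and use $(t+1)^{-\alpha}\le1$, $\big(\int_0^Tg\big)^2\le T\int_0^Tg^2$ for the drift and BDG for the stochastic integral, to obtain
\[
  u_T\le C\int_0^T\expe{N^{-1}\textstyle\sum_{k=1}^N\Big(\normLigne{h(\bfw_s^{k,N},\bfnu_s^N)-h(\bfw_s^{k,\star},\bflambda_s^{\star})}^2+\normLigne{\Sigma^{1/2}(\bfw_s^{k,N},\bfnu_s^N)-\Sigma^{1/2}(\bfw_s^{k,\star},\bflambda_s^{\star})}^2\Big)}\dd s.
\]
Inserting the empirical measure $\bar\nu_s^N:=N^{-1}\sum_{j=1}^N\updelta_{\bfw_s^{j,\star}}$ of the mean-field particles, each of the two differences is split into three: (i) a move in $w$ only, $h(\bfw_s^{k,N},\bfnu_s^N)-h(\bfw_s^{k,\star},\bfnu_s^N)$ and its $\Sigma^{1/2}$-analogue, controlled by the Lipschitz-in-$w$ estimates by $C\,\normLigne{\Delta_s^k}^2$, hence $\le C\,u_s$ after averaging over $k$; (ii) a move of the measure along the coupling, $h(\bfw_s^{k,\star},\bfnu_s^N)-h(\bfw_s^{k,\star},\bar\nu_s^N)$ and analogue, controlled by the $\mathbf{d}_F$-Lipschitz estimates by $C\,\mathbf{d}_F(\bfnu_s^N,\bar\nu_s^N)^2$, and since $\normLigne{\nabla_{w}F(\cdot,x)}\le\Phi(x)$ the coupling $N^{-1}\sum_j\updelta_{(\bfw_s^{j,N},\bfw_s^{j,\star})}$ yields $\mathbf{d}_F(\bfnu_s^N,\bar\nu_s^N)^2\le C\,N^{-1}\sum_j\normLigne{\Delta_s^j}^2$, again $\le C\,u_s$; and (iii) the genuine fluctuation $h(\bfw_s^{k,\star},\bar\nu_s^N)-h(\bfw_s^{k,\star},\bflambda_s^{\star})$ and analogue, which, from the explicit formulas \eqref{eq:h}, \eqref{eq:def_xi_n}, \eqref{eq:Sigma}, I would bound, uniformly in $k$, by a $\datap$-integral whose integrand carries the factor $|\bar\nu_s^N[F(\cdot,x)]-\bflambda_s^{\star}[F(\cdot,x)]|=|N^{-1}\sum_{j}(F(\bfw_s^{j,\star},x)-\bflambda_s^{\star}[F(\cdot,x)])|$; this last quantity being a centred average of \iid~terms bounded by $\Phi(x)$, one has $\expe{|\bar\nu_s^N[F(\cdot,x)]-\bflambda_s^{\star}[F(\cdot,x)]|^2}\le N^{-1}\Phi(x)^2$, so Cauchy--Schwarz in $(x,y)$ together with \Cref{assum:all}-\ref{item:compact} (calibrated exactly so that the resulting mixed integrals of $\Phi,\Psi$, including the quadratic ones coming from $\Sigma$, are finite) gives a contribution $\le C\,N^{-1}$. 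Collecting the three pieces, $u_T\le C\int_0^T(u_s+N^{-1})\,\dd s$, whence $u_T\le C_T N^{-1}$ by Gr\"onwall's lemma, and the theorem follows with $C_{\nbparticlem,T}=\nbparticlem\,C_T$.

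The hard part will be the diffusion coefficient, \ie, pieces (i)--(ii) for $\Sigma^{1/2}$: obtaining Lipschitz — not merely $1/2$-H\"older — control of $\Sigma^{1/2}$ in both $w$ and $\mu$, which forces the $\|a''\|\le M\Rightarrow\sqrt a$-Lipschitz argument together with the localisation absorbing the quadratic growth of $\Sigma$ in $w$, and, crucially, establishing this with respect to the weak pseudo-metric $\mathbf{d}_F$ and not $\wassersteinD[2]$ — without which the fluctuation term in (iii) would feed in only at rate $N^{-1/p}$ rather than $N^{-1}$. Everything else is the routine Gr\"onwall/BDG/exchangeability machinery, structurally identical to the proof of \Cref{thm:empi_conv_cont}; the sole qualitative difference is that here the limiting dynamics \eqref{eq:mean_field_beta_one} keeps the Brownian term (matching the order-one noise amplitude of the $\beta=1$ particle system, up to the bounded constant $(\gamma^{1/(1-\alpha)}/M)^{1/2}$), so no vanishing-noise, discretisation-type error appears and the bound is the pure propagation-of-chaos rate $N^{-1}$.
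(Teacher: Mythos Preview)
Your proposal is correct and follows essentially the same route as the paper: synchronous coupling, exchangeability to reduce to one particle, the three-term split (move in $w$ / move measure along coupling / \iid~fluctuation), the weak pseudo-metric through $\mu[F(\cdot,x)]$ to get the $N^{-1}$ rate, and Gr\"onwall. The paper packages (i)--(iii) into a generic quantitative propagation-of-chaos lemma (\Cref{thm:quantitative_prop_lip}) under abstract assumptions \Cref{assum:propagation}--\Cref{assum:limit_b_sig}, then checks these via a regularity proposition (\Cref{prop:h_reg}); your pseudo-metric $\mathbf{d}_F$ is exactly the paper's $\big(\int|\mu_1[G]-\mu_2[G]|^2\dd\datap\big)^{1/2}$ with $G=(\Phi^4+\Psi^2)F$ absorbing the weights.

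The one substantive difference is your treatment of $\Sigma^{1/2}$. You assert $\Sigma$ has ``at most quadratic growth in $w$'' and plan a localisation to trade polynomially-growing local Lipschitz constants against moment bounds. In fact no such growth occurs: from \eqref{eq:h} and \eqref{eq:def_xi_n} the $\nabla V$ terms cancel in $\xi$, leaving $\xi(w,\mu,x,y)=\int\partial_1\ell\,\nablaw F\,\dd\datap-\partial_1\ell(\mu[F(\cdot,x)],y)\nablaw F(w,x)$, and since $|F|,|\nablaw F|\le\Phi(x)$ \emph{uniformly in $w$} by \Cref{assum:all}-\ref{item:assumf} (and $|\mu[F(\cdot,x)]|\le\Phi(x)$), both $\xi$ and hence $\Sigma$ are bounded uniformly in $(w,\mu)$. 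The paper then runs the same Stroock--Varadhan $\|a''\|$-bounded $\Rightarrow\sqrt{a}$-Lipschitz argument you mention, but along the segment $t\mapsto\Sigma(tw_1+(1-t)w_2,t\mu_1+(1-t)\mu_2)$ with a \emph{global} bound on the second derivative, yielding that each entry of $\Sigma^{1/2}$ is globally Lipschitz in $(w,\mu)$ for the weak metric with constant independent of $w$ (\Cref{prop:h_reg}(b)). This removes the localisation step and the need for any moment bounds in the core estimate; your approach would still work but does unnecessary extra labour.
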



The main difference between \eqref{eq:mean_field_beta_small} and
\eqref{eq:mean_field_beta_one} is that now this mean-field limit is
now longer deterministic up to its initialization but is a SDE driven
by a Brownian motion. The stochastic nature of SGD is preserved in
this second regime. \eqref{eq:mean_field_beta_small} corresponds to
some implicit regularization of \eqref{eq:mean_field_beta_one}.  In
the case where for any $w \in \rset^p$ and $\mu \in \Pens(\rset^p)$,
$\Sigma(w, \mu) = \upsigma^2 \Id$ with $\upsigma >0$, it can shown
that $(\bflambda_t^{\star})_{t \geq 0}$ is a gradient flow for an
entropic-regularized functional. This relation between our approach
and the gradient flow perspective is investigated in the supplement
\Cref{sec:gradient_flows}.

Denote for any $N \in \nsets$ and $\nbparticlem \in \{1, \dots, N\}$,
$\bflambda^{1:\nbparticlem,N}$ the distribution on $\scrC$ of
$\{(\bfw_t^{k,N})_{t \geq 0}\}_{k=1}^{\nbparticlem}$.  Recall that
$\{\bfw_0^{k,N}\}_{k=1}^N$ are $N$ i.i.d. $\rset^p$-valued random
variables with distribution $\mu_0 \in \Pens_2(\rset^p)$.  As an
immediate consequence of \Cref{thm:empi_conv_cont},
\Cref{thm:empi_conv_cont_one} and the definition of $\wassersteinD[2]$
for the distance $\dist$ on $\scrC$, we have the following propagation
of chaos result.

\begin{corollary}
  \label{coro:cv_propa_chaos}
  Assume \rref{assum:all}. Then for any $\beta \in \ccint{0,1}$,
  $\alpha \in \coint{0,1}$, $M \in \nsets$ and
  $\nbparticlem \in \nset$ we have
  $ \lim_{N \to +\infty} \wassersteinD[2](\bflambda^{1:\nbparticlem,
    N}, (\bflambda^{\star})^{\otimes \nbparticlem}) = 0$ where
  $\bflambda^{\star}$ is the distribution of
  $(\bfw^{\star}_t)_{t \geq 0}$ solution of
  \eqref{eq:mean_field_beta_small} if $\beta \in \ocint{0,1}$ and
  \eqref{eq:mean_field_beta_one} if $\beta =1$ with $\bfw_0^{\star}$
  distributed according to $\mu_0$.
\end{corollary}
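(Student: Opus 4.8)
Under \rref{assum:all}, the plan is to exhibit the natural \emph{synchronous coupling} between the interacting particle system \eqref{eq:sde_batch} and $\nbparticlem$ independent copies of the mean-field dynamics, and to bound the resulting path-space Wasserstein distance directly by the quantitative $\mathrm{L}^2$ estimates of \Cref{thm:empi_conv_cont} and \Cref{thm:empi_conv_cont_one}. Fix $\beta \in \ccint{0,1}$, $\alpha \in \coint{0,1}$, $M \in \nsets$, $\nbparticlem \in \nsets$. For $N \geq \nbparticlem$ let $(\bfw_t^{1:N})_{t \geq 0}$ be the solution of \eqref{eq:sde_batch} started from $\bfw_0^{1:N} = \{\bfw_0^{k}\}_{k=1}^N$ (whose existence is established in the supplement), and for each $k \in \nsets$ let $(\bfw_t^{k,\star})_{t \geq 0}$ be the solution of the mean-field equation started from $\bfw_0^{k}$, namely \eqref{eq:mean_field_beta_small} if $\beta \in \coint{0,1}$ and \eqref{eq:mean_field_beta_one} if $\beta = 1$, in the latter case driven by the same Brownian motion $(\bfB_t^{k})_{t \geq 0}$ as in \eqref{eq:sde_batch}. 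Since all these processes are realised on one probability space, the joint law $\Lambda$ of the pair $(\{(\bfw_t^{k,N})_{t \geq 0}\}_{k=1}^{\nbparticlem}, \{(\bfw_t^{k,\star})_{t \geq 0}\}_{k=1}^{\nbparticlem})$ is a well-defined transference plan on $\scrC^{\nbparticlem} \times \scrC^{\nbparticlem}$, where $\scrC^{\nbparticlem}$ is identified with $\rmc(\rset_+, \rset^{p\nbparticlem})$ carrying the metric $\dist$ defined as on $\scrC$ but with the Euclidean norm on $\rset^{p\nbparticlem}$.

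The next step is to identify the two marginals of $\Lambda$. The first is $\bflambda^{1:\nbparticlem,N}$ by construction. For the second, pathwise uniqueness for the mean-field equation (proved in the supplement) makes each $(\bfw_t^{k,\star})_{t \geq 0}$ a measurable functional of $(\bfw_0^{k}, (\bfB_t^{k})_{t \geq 0})$ alone, the curve of marginals $(\bflambda_t^{\star})_{t \geq 0}$ entering the coefficients being a fixed non-random input; since the pairs $(\bfw_0^{k}, (\bfB_t^{k})_{t \geq 0})$ are \iid~in $k$, the trajectories $\{(\bfw_t^{k,\star})_{t \geq 0}\}_{k \in \nsets}$ are \iid~with common law $\bflambda^{\star}$, so the second marginal of $\Lambda$ is exactly $(\bflambda^{\star})^{\otimes \nbparticlem}$. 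Hence, by the definition of $\wassersteinD[2]$,
\[
\wassersteinD[2]^2\!\left(\bflambda^{1:\nbparticlem,N}, (\bflambda^{\star})^{\otimes \nbparticlem}\right) \leq \expe{\dist^2\!\left(\{(\bfw_t^{k,N})_{t \geq 0}\}_{k=1}^{\nbparticlem}, \{(\bfw_t^{k,\star})_{t \geq 0}\}_{k=1}^{\nbparticlem}\right)} \eqsp.
\]

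It remains to reduce this path-space quantity to a finite horizon. Using $a/(1+a) \leq \min(1,a)$ for $a \geq 0$ and the monotonicity of $n \mapsto \normLigne{u_1 - u_2}_{\infty,n}$, one checks that for every integer $T \geq 1$, $\dist(u_1,u_2) \leq \normLigne{u_1 - u_2}_{\infty,T} + 2^{-T}$, hence $\dist^2(u_1,u_2) \leq 2 \normLigne{u_1 - u_2}_{\infty,T}^2 + 2^{1-2T}$. Plugging this into the previous display and invoking \Cref{thm:empi_conv_cont} (if $\beta \in \coint{0,1}$) or \Cref{thm:empi_conv_cont_one} (if $\beta = 1$) at horizon $T$ gives, for $N \geq \nbparticlem$, a bound of the form $\wassersteinD[2]^2(\bflambda^{1:\nbparticlem,N}, (\bflambda^{\star})^{\otimes \nbparticlem}) \leq 2 C_{\nbparticlem,T}\varepsilon_N + 2^{1-2T}$, with $\varepsilon_N = N^{-(1-\beta)/(1-\alpha)}M^{-1} + N^{-1}$ if $\beta < 1$ and $\varepsilon_N = N^{-1}$ if $\beta = 1$. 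Given $\delta > 0$, first pick $T$ with $2^{1-2T} < \delta/2$, then let $N \to \plusinfty$: when $\beta < 1$ the conditions $\alpha < 1$ and $\beta < 1$ force $(1-\beta)/(1-\alpha) > 0$, so $\varepsilon_N \to 0$, and trivially $\varepsilon_N = N^{-1} \to 0$ when $\beta = 1$; therefore $\limsup_{N \to \plusinfty} \wassersteinD[2]^2(\bflambda^{1:\nbparticlem,N}, (\bflambda^{\star})^{\otimes \nbparticlem}) \leq \delta/2 < \delta$, and as $\delta$ is arbitrary the limit is $0$. There is no real analytic difficulty in all this; the two points that require some care — and keep the argument from being purely formal — are the identification of the second marginal of $\Lambda$ as a genuine product measure (which relies on pathwise uniqueness of the McKean-Vlasov dynamics and the deterministic nature of $(\bflambda^{\star}_t)_{t \geq 0}$) and the elementary truncation converting control of $\dist$ into control of $\normLigne{\cdot}_{\infty,T}$.
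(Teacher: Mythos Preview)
Your proof is correct and follows exactly the route the paper intends: the paper states \Cref{coro:cv_propa_chaos} as ``an immediate consequence of \Cref{thm:empi_conv_cont}, \Cref{thm:empi_conv_cont_one} and the definition of $\wassersteinD[2]$ for the distance $\dist$ on $\scrC$'', and you have spelled out precisely that implication via the synchronous coupling and the truncation $\dist(u_1,u_2) \leq \normLigne{u_1-u_2}_{\infty,T} + 2^{-T}$. The same truncation device is the one the paper uses explicitly in its proof of \Cref{prop:cv_w2_empi}, so your details match the paper's implicit argument.
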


\Cref{coro:cv_propa_chaos} has two main
consequences: when the number of hidden units is large
\begin{enumerate*}[label=(\roman*)]
\item all the units have the same distribution $\bflambda^{\star}$, and
\item the units are independent.
\end{enumerate*}
Note also that this corollary is valid for the whole trajectory and
not only for a fixed time horizon.

Finally, we derive similar results to
  \Cref{coro:cv_propa_chaos} for the sequence of the empirical
  measures.  Let $(\bfnu^N)_{N \in \nsets}$ be the sequence of
empirical measures associated with \eqref{eq:sde_batch} and given by
$\bfnu^N = N^{-1} \sum_{k=1}^N \updelta_{(\bfw_t^{k,N})_{t \geq
    0}}$. Note that for any $N \in \nset$, $\bfnu^N$ is a random
probability measure on $\Pens(\scrC)$. Denote for any $N \in \nsets$,
$\bfUpsilon^N$ its distribution which then belongs to
$ \Pens(\Pens(\scrC))$.  
Since the convergence with respect to the $\wassersteinD[2]$ distance
implies the weak convergence, using \Cref{coro:cv_propa_chaos} and the
Tanaka-Sznitman theorem \cite[Proposition 2.2]{sznitman1991topics}, we
get that $(\bfUpsilon^N)_{N \in \nsets}$ weakly converges towards
$\updelta_{\bflambda^{\star}}$. In fact, we prove the following
stronger proposition whose proof is postponed to \Cref{sec:main_res}.

\begin{proposition}
  \label{prop:cv_w2_empi}
  Assume \rref{assum:all}. Then, for any $\beta \in \coint{0,1}$,
  $\alpha \in \coint{0,1}$ and $M \in \nsets$ we have
  $ \lim_{N \to +\infty} \wassersteinD[2](\bfUpsilon^N,
  \updelta_{\bflambda^{\star}}) = 0 $, where
  $\bflambda^{\star}$ is the distribution of
  $(\bfw^{\star}_t)_{t \geq 0}$ solution of
  \eqref{eq:mean_field_beta_small} if $\beta \in \ocint{0,1}$ and
  \eqref{eq:mean_field_beta_one} if $\beta =1$ with $\bfw_0^{\star}$
  distributed according to $\mu_0$.
\end{proposition}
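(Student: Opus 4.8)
The plan is to reduce the claim, via a triangle inequality, to two convergences: a quantitative comparison of the interacting particles with their mean-field surrogates, supplied by \Cref{thm:empi_conv_cont} (resp.\ \Cref{thm:empi_conv_cont_one} when $\beta=1$), and a law of large numbers for the empirical measure of the mean-field particles on the path space $\scrC$. First I would observe that $\updelta_{\bflambda^{\star}}$ is a Dirac mass on $\Pens(\scrC)$, so the product coupling is the only transference plan between $\bfUpsilon^N$ and $\updelta_{\bflambda^{\star}}$, whence
\begin{equation}
  \label{eq:prop_reduction}
  \wassersteinD[2]^2(\bfUpsilon^N, \updelta_{\bflambda^{\star}}) = \int_{\Pens(\scrC)} \wassersteinD[2]^2(\mu, \bflambda^{\star}) \, \rmd \bfUpsilon^N(\mu) = \expe{\wassersteinD[2]^2(\bfnu^N, \bflambda^{\star})} \eqsp,
\end{equation}
the expectation being over the empirical measure $\bfnu^N \in \Pens(\scrC)$ of \eqref{eq:sde_batch}; here the outer $\wassersteinD[2]$ is the Wasserstein distance on $\Pens(\Pens(\scrC))$ associated with $\wassersteinD[2]$ on $\Pens(\scrC)$, itself associated with $\dist$ on $\scrC$. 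It therefore suffices to show that the right-hand side of \eqref{eq:prop_reduction} tends to $0$.

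Next I would introduce the \emph{coupled} mean-field particles: for $k \in \nsets$, let $(\bfw_t^{k,\star})_{t \geq 0}$ solve \eqref{eq:mean_field_beta_small} (resp.\ \eqref{eq:mean_field_beta_one} when $\beta=1$) started from $\bfw_0^k$ and driven by $(\bfB_t^k)_{t \geq 0}$, exactly as in \Cref{thm:empi_conv_cont}; the processes $(\bfw_t^{k,\star})_{t \geq 0}$, $k \in \nsets$, are then \iid~with common law $\bflambda^{\star}$ on $\scrC$, and I set $\bbfnu^N = N^{-1} \sum_{k=1}^N \updelta_{(\bfw_t^{k,\star})_{t \geq 0}}$. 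The triangle inequality for $\wassersteinD[2]$ gives
\begin{equation}
  \expe{\wassersteinD[2]^2(\bfnu^N, \bflambda^{\star})} \leq 2 \, \expe{\wassersteinD[2]^2(\bfnu^N, \bbfnu^N)} + 2 \, \expe{\wassersteinD[2]^2(\bbfnu^N, \bflambda^{\star})} \eqsp .
\end{equation}
For the first term I would plug the diagonal transference plan $N^{-1} \sum_{k=1}^N \updelta_{((\bfw_t^{k,N})_t, (\bfw_t^{k,\star})_t)}$ into the definition of $\wassersteinD[2]$ and use exchangeability in $k$ to get $\expe{\wassersteinD[2]^2(\bfnu^N, \bbfnu^N)} \leq \expe{\dist^2((\bfw_t^{1,N})_t, (\bfw_t^{1,\star})_t)}$. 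Since $x/(1+x) \leq \min(1,x)$ and $n \mapsto \sup_{t \in \ccint{0,n}} \normLigne{u_1(t) - u_2(t)}$ is nondecreasing, one has $\dist^2(u_1, u_2) \leq 2 \sup_{t \in \ccint{0,q}} \normLigne{u_1(t) - u_2(t)}^2 + 2^{1-2q}$ for every $q \in \nsets$ and all $u_1, u_2 \in \scrC$; applying \Cref{thm:empi_conv_cont} with $\nbparticlem = 1$ and $T = q$ shows $\expe{\sup_{t \in \ccint{0,q}} \normLigne{\bfw_t^{1,N} - \bfw_t^{1,\star}}^2} \to 0$, hence $\limsup_{N \to +\infty} \expe{\dist^2((\bfw_t^{1,N})_t, (\bfw_t^{1,\star})_t)} \leq 2^{1-2q}$ for all $q$, and letting $q \to +\infty$ kills the first term.

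For the second term I would note that $\dist$ is bounded by $1$, so $\bflambda^{\star} \in \Pens_2(\scrC)$ automatically and $\wassersteinD[2]^2(\bbfnu^N, \bflambda^{\star}) \leq 1$. Because $(\scrC, \dist)$ is complete and separable and the $(\bfw_t^{k,\star})_{t \geq 0}$, $k\in\nsets$, are \iid~with law $\bflambda^{\star}$, the law of large numbers for empirical measures (Varadarajan's theorem, see \cite{sznitman1991topics}) gives $\bbfnu^N \to \bflambda^{\star}$ weakly, almost surely; and on a bounded metric space weak convergence coincides with $\wassersteinD[2]$-convergence (via the Skorokhod representation theorem and dominated convergence), so $\wassersteinD[2](\bbfnu^N, \bflambda^{\star}) \to 0$ almost surely. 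A final dominated convergence, the integrand being bounded by $1$, then yields $\expe{\wassersteinD[2]^2(\bbfnu^N, \bflambda^{\star})} \to 0$, which together with the first term and \eqref{eq:prop_reduction} concludes.

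I expect the first term to be essentially routine once \Cref{thm:empi_conv_cont} (resp.\ \Cref{thm:empi_conv_cont_one}) is available, the only care needed being the truncation passing from the finite-horizon supremum to the metric $\dist$. The delicate point will be the second term: it rests on an infinite-dimensional law of large numbers on the path space $\scrC$ together with the identification of the weak and $\wassersteinD[2]$ topologies there, and — in contrast with the particle-level estimate — it carries no quantitative rate, which is consistent with \Cref{prop:cv_w2_empi} asserting only a limit rather than a speed of convergence.
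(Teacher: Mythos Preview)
Your argument is correct. It shares with the paper the opening identity $\wassersteinD[2]^2(\bfUpsilon^N,\updelta_{\bflambda^\star})=\expe{\wassersteinD[2]^2(\bfnu^N,\bflambda^\star)}$ and the truncation of $\dist$ at a finite horizon followed by \Cref{thm:empi_conv_cont} (resp.\ \Cref{thm:empi_conv_cont_one}), but it differs in how the right-hand side is bounded. The paper passes directly, citing \Cref{prop:fm_bound}, to
\[
\expe{\wassersteinD[2]^2(\bfnu^N,\bflambda^\star)}\;\leq\; N^{-1}\sum_{k=1}^N\expe{\dist^2\bigl((\bfw_t^{k,N})_{t\geq 0},(\bfw_t^{k,\star})_{t\geq 0}\bigr)}\,,
\]
whereas you insert the intermediate empirical measure $\bbfnu^N=N^{-1}\sum_k\updelta_{(\bfw^{k,\star})_{t\geq 0}}$, bound $\wassersteinD[2]^2(\bfnu^N,\bbfnu^N)$ by the diagonal coupling, and handle $\expe{\wassersteinD[2]^2(\bbfnu^N,\bflambda^\star)}$ via Varadarajan's law of large numbers on the bounded Polish space $(\scrC,\dist)$ plus dominated convergence.

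Your route is in fact the more robust one. \Cref{prop:fm_bound} only yields $\wassersteinD[2]^2(\bfnu^N,\bflambda^\star)\leq N^{-1}\sum_k\wassersteinD[2]^2(\updelta_{(\bfw^{k,N})},\bflambda^\star)$, and since the unique coupling of a Dirac with $\bflambda^\star$ is the product one, the expectation of each summand equals the expected squared $\dist$-distance from $(\bfw^{k,N})$ to an \emph{independent} draw from $\bflambda^\star$, which need not be dominated by the distance to the synchronously coupled $(\bfw^{k,\star})$; the paper's displayed inequality is thus not justified as written. Your triangle-inequality decomposition with the Varadarajan term is precisely the standard way to close this gap. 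The only cost is that the second term carries no rate, but since \Cref{prop:cv_w2_empi} only asserts a limit, nothing is lost.
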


    \begin{proof}[Proof of \Cref{prop:cv_w2_empi}]
  We consider only the case $\beta =1$, the proof for
  $\beta \in \coint{0,1}$  following the same lines.  Let
 $M \in
  \nsets$. We have for any $N \in \nsets$ using \Cref{prop:fm_bound},
  \begin{equation}
    \wassersteinD[2](\bfUpsilon^N, \updelta_{\bflambda^{\star}})^2 \leq \expeLigne{\wassersteinD[2](\bfnu^N, \bflambda^{\star})^2} 
    \leq N^{-1}\sum_{k=1}^N \expeLigne{\dist^2((\bfw_t^{k,N})_{t \geq 0}, (\bfw_t^{k,\star})_{t \geq 0})} \eqsp . \label{eq:majo_wass}                                                                     
  \end{equation}
  Let $\vareps >0$ and $n_0\in \nsets$ such that
  $\sum_{n =n_0+1}^{+\infty} 2^{-n} \leq \vareps$.  Combining
  \eqref{eq:majo_wass}, \Cref{thm:empi_conv_cont_one} and the
  Cauchy-Schwarz inequality we get that for any $N \in \nsets$
  \begin{equation}
    \wassersteinD[2](\bfUpsilon^N, \updelta_{\bflambda^{\star}})^2 \leq 2 \vareps^2 + \frac{2n_0}{N} \sum_{k=1}^N \sum_{n=1}^{n_0}\expe{\sup_{t \in \ccint{0,n}}
      \normLigne{\bfw_t^{k,N} - \bfw_t^{k, \star}}^2}
  \leq 2 \vareps^2 + 2 n_0N^{-1} \sum_{n=0}^{n_0} C_{1,n} \eqsp .
  \end{equation}
  Therefore, for any $\vareps > 0$ there exists $N_0 \in \nsets$ such
  that for any $N \geq N_0$,
  $\wassersteinD[2](\bfUpsilon^N, \updelta_{\bflambda^{\star}}) \leq
  \vareps$.
\end{proof}

\paragraph{Relation to existing results.}
To the authors knowledge, only the case $\beta=0$ has been considered
in the current literature. More precisely, \Cref{thm:empi_conv_cont}
is a functional and quantitative extension of the results established
in \cite{sirignano2018mean, mei2018mean, chizat2018global,
  rotskoff2018trainability, sirignano2020mean}. First, in
\cite[Theorem~1.6]{sirignano2018mean}, it is shown that
$(\bflambda^{1:\nbparticlem, N})_{N \in \nsets, N \geq \ell}$ weakly
converges towards $(\bflambda^{\star})^{\otimes
  \nbparticlem}$. \cite[Theorem 3]{mei2018mean} shows weak convergence
of SGD to \eqref{eq:mean_field_beta_small} with high probability in
the case $V=0$ and the quadratic loss
$\ell(\rmy_1,\rmy_2) = (\rmy_1-\rmy_2)^2$.
\cite[Theorem~1.5]{sirignano2020mean} establishes a central limit
theorem for $(\bfnu^N)_{N \in \nsets}$ with rate $N^{-1/2}$ which is
in accordance with the convergence rate identified in
\Cref{thm:empi_conv_cont}. Finally, \cite[Theorem
2.6]{chizat2018global} and \cite[Proposition
3.2]{rotskoff2018trainability} imply the convergence of $\bfnu^N$
almost surely under the setting $\Sigma = 0$ in \eqref{eq:sde_batch}
which corresponds to the continuous gradient flow dynamics associated
with $\hrisk^{N,M}$. We conclude this part by mentioning that similar
results are derived for mSGLD in the supplement
\Cref{sec:stoch-grad-lang} which extend the ones obtained in
\cite{mei2019mean}.

Having established the convergence of \eqref{eq:sde_batch} to
\eqref{eq:mean_field_beta_one}, we are interested in the long-time
behaviour of $(\bfw^{\star}_{t})_{t \geq 0}$ in the case
$\alpha=0$. To address this problem, the first step is to show that
this SDE admits at least one stationary distribution, \ie \ a
probability measure $\mu^{\star}$ such that if $\bfw_0^{\star}$ has
distribution $\mu^{\star}$, then for any $t \geq 0$, $\bfw_t^{\star}$
has distribution $\mu^{\star}$.  If $V$ is strongly convex, we are
able to answer positively to this question in the case $p=1$. The
proof of this result is postponed to \Cref{sec:exist-invar-meas}.

  \begin{proposition}
    \label{prop:existence_inv}
    Assume \rref{assum:all}, $\alpha=0$  and  $p=1$. In addition, assume that there
    exist $\eta, \bar{\upsigma} > 0$ such that for any $w \in \rset$
    and $\mu \in \Pens(\rset)$, $\Sigma(w, \mu) \geq \bar{\upsigma}^2$
    and $V$ is $\eta$-strongly convex. Let
    $H: \ \Pens_2(\rset) \to \Pens_2(\rset)$ defined for any
    $\mu \in \Pens_2(\rset)$ and $w \in \rset$ by
  \begin{equation}
\textstyle{    (\rmd H(\mu) / \rmd \Leb)(w) \propto \bar{\Sigma}^{-1}(w, \mu) \exp \parentheseDeux{-2 \int_0^{w} h(\tilde{w}, \mu) / \bar{\Sigma}(\tilde{w}, \mu) \rmd \tilde{w}}  \eqsp ,}
  \end{equation}
  where
  $\bar{\Sigma}(w, \mu) = \gamma^{1/(1-\alpha)} \Sigma(w, \mu) / M$ and
  $\Leb$ is the Lebesgue measure on $\rset$. Then
  $\mss = \ensembleLigne{\mu \in \Pens_2(\rset)}{H(\mu) = \mu} \neq
  \emptyset$ and for any $\mu \in \mss$, $\mu$ is invariant for
  \eqref{eq:mean_field_beta_one}.
  \end{proposition}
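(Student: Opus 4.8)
The plan is to obtain $\mu^{\star}$ as a fixed point of the map $H$ and then to check that any fixed point of $H$ is a stationary law of \eqref{eq:mean_field_beta_one} with $\alpha=0$. \textbf{Step 1 (the frozen scalar diffusion).} For fixed $\mu\in\Pens_2(\rset)$, consider the time-homogeneous one-dimensional SDE $\rmd X_t=h(X_t,\mu)\rmd t+\bar{\Sigma}(X_t,\mu)^{1/2}\rmd B_t$ obtained by freezing the measure argument. Under \rref{assum:all}, $w\mapsto h(w,\mu)$ and $w\mapsto\bar{\Sigma}(w,\mu)^{1/2}$ are globally Lipschitz uniformly in $\mu$: indeed the $\nabla V$ contributions cancel inside $\xi$, so $\xi(\cdot,\mu,x,y)$ and its first $w$-derivative are dominated by a square-$\pi$-integrable envelope built from $\Phi,\Psi$, while $\rmD^2 V$ is bounded; combined with the hypothesis $\bar{\upsigma}^2\le\Sigma(w,\mu)$ this also yields $\bar{\upsigma}^2\le\bar{\Sigma}(w,\mu)\le C_\Sigma$ for some $C_\Sigma$ independent of $(w,\mu)$. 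Hence the SDE is a non-degenerate, non-explosive scalar diffusion on $\rset$, and by the classical scale-function and speed-measure description it is positive recurrent with unique invariant probability measure given by $H(\mu)$, provided the speed density is integrable. Integrability, and in fact a \emph{uniform} sub-Gaussian bound, is where $\eta$-strong convexity of $V$ enters: writing $h(w,\mu)=-\nabla V(w)+e(w,\mu)$ with $\sup_{w,\mu}|e(w,\mu)|<\infty$ and using the Lipschitz bound on $\nabla V$ together with the two-sided bound on $\bar{\Sigma}$, one gets $h(w,\mu)\,\mathrm{sgn}(w)\le-\eta|w|+c_0$ with $c_0$ uniform, whence constants $c>0$, $C_0\ge 0$, \emph{independent of $\mu$}, such that $(\rmd H(\mu)/\rmd\Leb)(w)\le C_0\rme^{-cw^2}$; in particular $H(\Pens_2(\rset))\subseteq\Pens_2(\rset)$.

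\textbf{Step 2 (fixed point).} Let $\mck$ denote the set of $\mu\in\Pens(\rset)$ such that $\mu\ll\Leb$ and $\rmd\mu/\rmd\Leb\le C_0\rme^{-cw^2}$ for $\Leb$-almost every $w$. It is convex and, by the uniform Gaussian bound, tight with uniformly $2$-integrable second moment and weakly closed, hence compact for $\wassersteinD[2]$, which on $\mck$ metrises weak convergence. Step 1 gives $H(\Pens_2(\rset))\subseteq\mck$, so $H$ maps $\mck$ into itself. Next one checks continuity of $H:(\mck,\wassersteinD[2])\to(\mck,\wassersteinD[2])$: if $\mu_n\to\mu$, then $\mu_n[F(\cdot,x)]\to\mu[F(\cdot,x)]$ for each $x$ (as $F(\cdot,x)$ is bounded continuous), so $h(\tilde w,\mu_n)\to h(\tilde w,\mu)$ and $\bar{\Sigma}(\tilde w,\mu_n)\to\bar{\Sigma}(\tilde w,\mu)$ pointwise by dominated convergence over $\pi$, hence $\int_0^w h(\cdot,\mu_n)/\bar{\Sigma}(\cdot,\mu_n)\to\int_0^w h(\cdot,\mu)/\bar{\Sigma}(\cdot,\mu)$ and the unnormalised densities converge pointwise; the uniform Gaussian bound dominates the normalising constants, so $\rmd H(\mu_n)/\rmd\Leb\to\rmd H(\mu)/\rmd\Leb$ $\Leb$-a.e., and Scheff\'e's lemma upgrades this to convergence in $\Lone(\Leb)$, hence in total variation and, with the uniform moments, in $\wassersteinD[2]$. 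Viewing $\mck$ as a convex compact subset of the locally convex space of finite signed measures with the weak topology, the Schauder--Tychonoff theorem produces $\mu^{\star}\in\mck$ with $H(\mu^{\star})=\mu^{\star}$, so $\mss\ne\emptyset$.

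\textbf{Step 3 (invariance).} Let $\mu\in\mss$. Running the classical SDE $\rmd X_t=h(X_t,\mu)\rmd t+\bar{\Sigma}(X_t,\mu)^{1/2}\rmd B_t$ from $X_0\sim\mu=H(\mu)$ keeps $\mathrm{Law}(X_t)=\mu$ for all $t\ge 0$; hence the constant curve $t\mapsto\mu$ is self-consistent and $(X_t)_{t\ge 0}$ solves \eqref{eq:mean_field_beta_one} with $\alpha=0$ and initial law $\mu$. By well-posedness of \eqref{eq:mean_field_beta_one} (established in the supplement) this is its unique solution started from $\mu$, and its time-marginals are constantly $\mu$, so $\mu$ is invariant. \textbf{Main obstacle.} The crux is the two uniform-in-$\mu$ facts of Steps 1 and 2: the sub-Gaussian tail bound on $\rmd H(\mu)/\rmd\Leb$ that confines $H$ to a single compact convex $\mck$ (this is exactly where the cancellation of $\nabla V$ in $\xi$, the two-sided bound on $\bar{\Sigma}$, and the strong convexity of $V$ are all used), and the continuity of $\mu\mapsto H(\mu)$; once these are in hand the fixed-point argument and the self-consistency step are routine.
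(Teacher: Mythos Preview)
Your proof is correct and follows essentially the same strategy as the paper's: a uniform-in-$\mu$ sub-Gaussian bound on the density $\rmd H(\mu)/\rmd\Leb$ (from strong convexity of $V$, boundedness of the $\ell$-$F$ part of $h$, and the two-sided bound on $\Sigma$), continuity of $H$ via pointwise convergence of densities plus Scheff\'e, a Schauder fixed-point argument, and finally invariance by freezing the measure and using self-consistency. The only cosmetic difference is that you build an explicit compact convex set $\mck=\{\mu:\rmd\mu/\rmd\Leb\le C_0\rme^{-cw^2}\}$ and apply Schauder--Tychonoff there, whereas the paper shows $H(\Pens_2(\rset))$ is relatively compact in $(\Pens_2(\rset),\wassersteinD[2])$ via a uniform fourth-moment bound and invokes Schauder on $\Pens_2(\rset)$ directly; and you upgrade Scheff\'e to $\wassersteinD[2]$-convergence via uniform integrability of second moments, while the paper does it by a subsequence argument using relative compactness.
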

\section{Experiments}

We now empirically illustrate the results derived in the previous
section. More precisely, we focus on the classification task for two
datasets: MNIST \cite{mnist} and CIFAR-10
\cite{krizhevsky2009learning}. In all of our experiments we consider a
fully-connected neural network with one hidden layer and ReLU
activation function. We consider the cross-entropy loss in order to
train the neural network using SGD as described in
\Cref{sec:setting}. All along this section we fix a time horizon
$T \geq 0$ and sample $W_{n_T}^{1:N}$ defined by \eqref{eq:sgd} with
$n_T = \floor{T / \gua}$ for
$\gua = \gamma^{1/(1-\alpha)}N^{(\beta-1)/(1-\alpha)}$ and taking a batch of
size $M\in\nsets$. We aim at illustrating the results of
\Cref{sec:mean-field-appr} taking $N \to \plusinfty$ and different
sets of values for the parameters $\alpha,\beta,M,\gamma$ in
\eqref{eq:sde_batch}. Indeed, recall that as observed in
\eqref{eq:ansatz}, $W_{n_T}^{1:N}$ is an approximation of
$\bfW^{1:N}_{T}$.  See \Cref{sec:addit-exper} for a detailed
description of our experimental setting. If not specified, we set
$\alpha =0$, $M=100$, $T = 100$, $\gamma =1$.

\paragraph{Convergence of the empirical measure.}
\begin{figure}[H]
 \begin{minipage}[b]{.45\linewidth}
  \centering
  \includegraphics[width=1.1\linewidth]{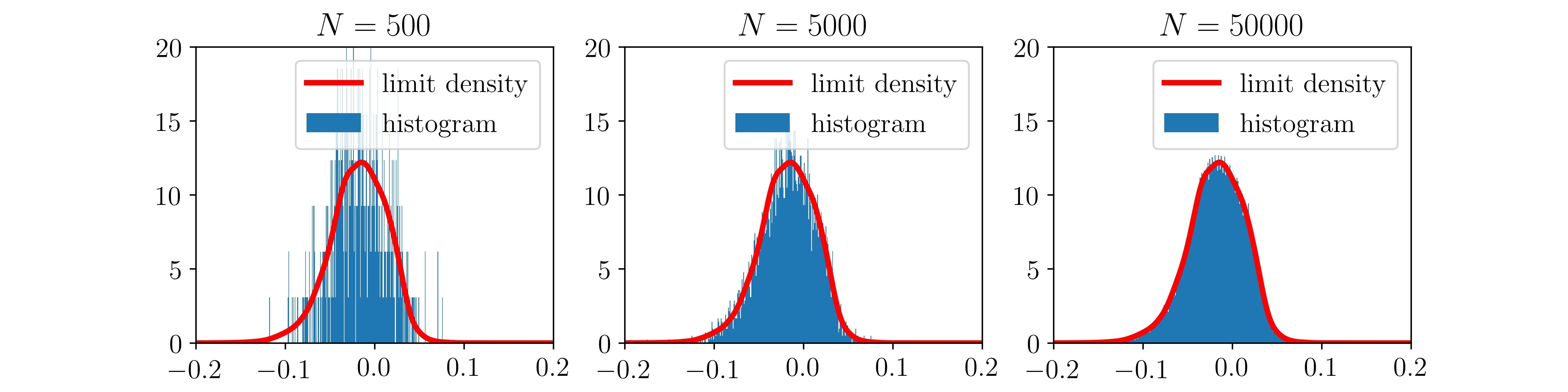} \\
  \includegraphics[width=1.1\linewidth]{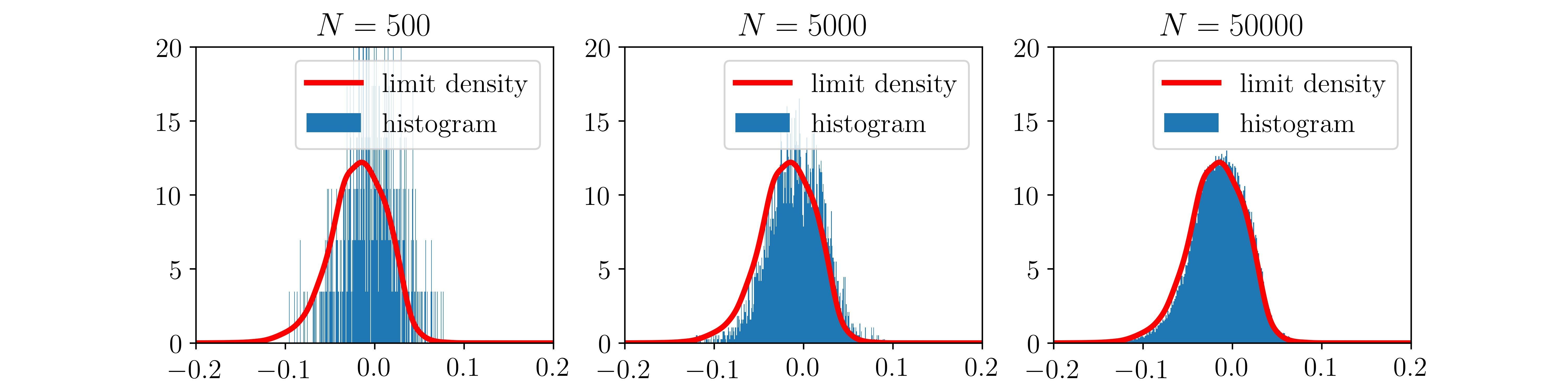} \\
  \includegraphics[width=1.1\linewidth]{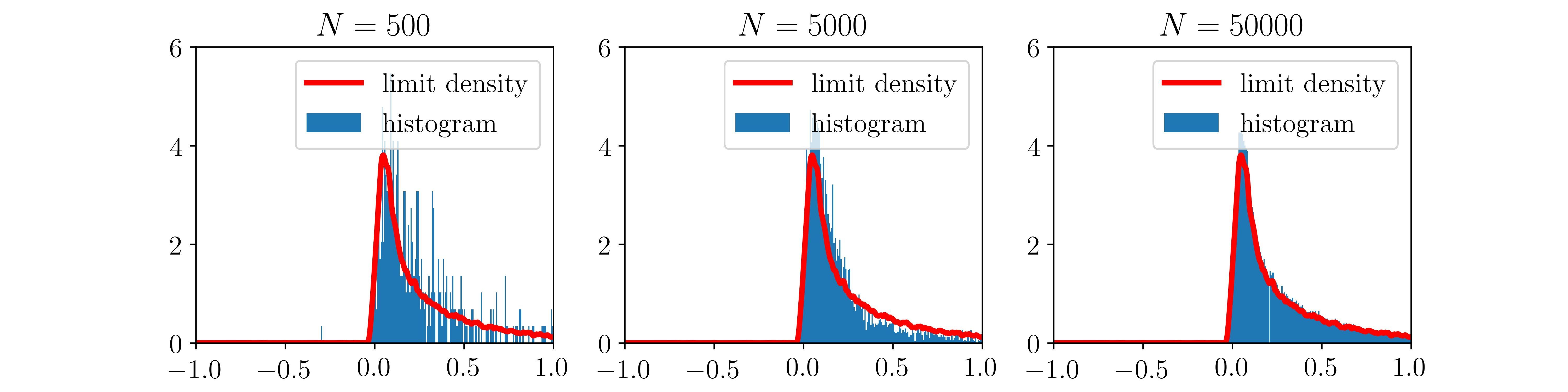} \\
  \subcaption{Case $\alpha=0$ \label{fig:cv_alpha0}}
 \end{minipage} \hfill
 \begin{minipage}[b]{.45\linewidth}
  \centering
  \includegraphics[width=1.1\linewidth]{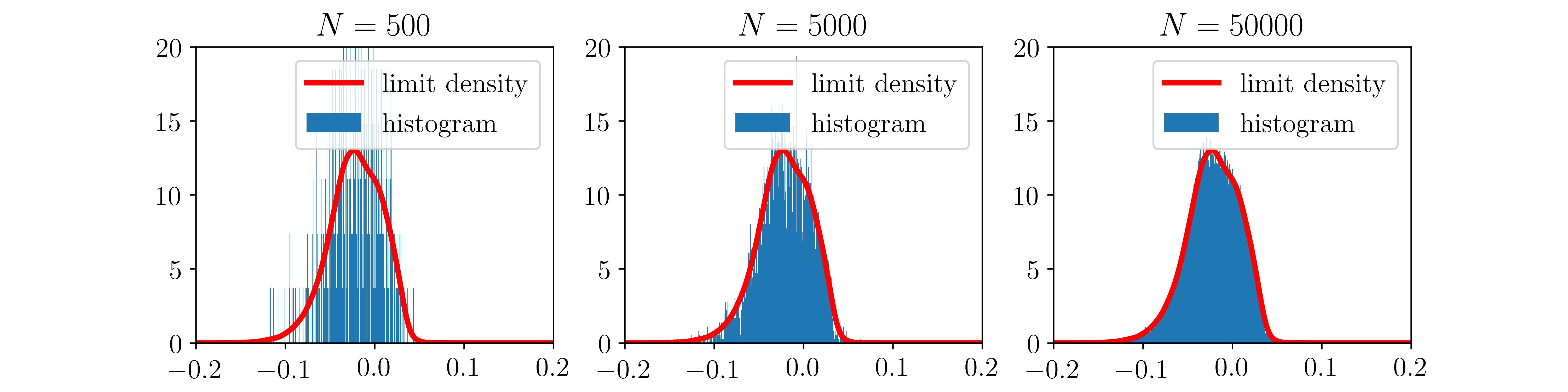}
	\includegraphics[width=1.1\linewidth]{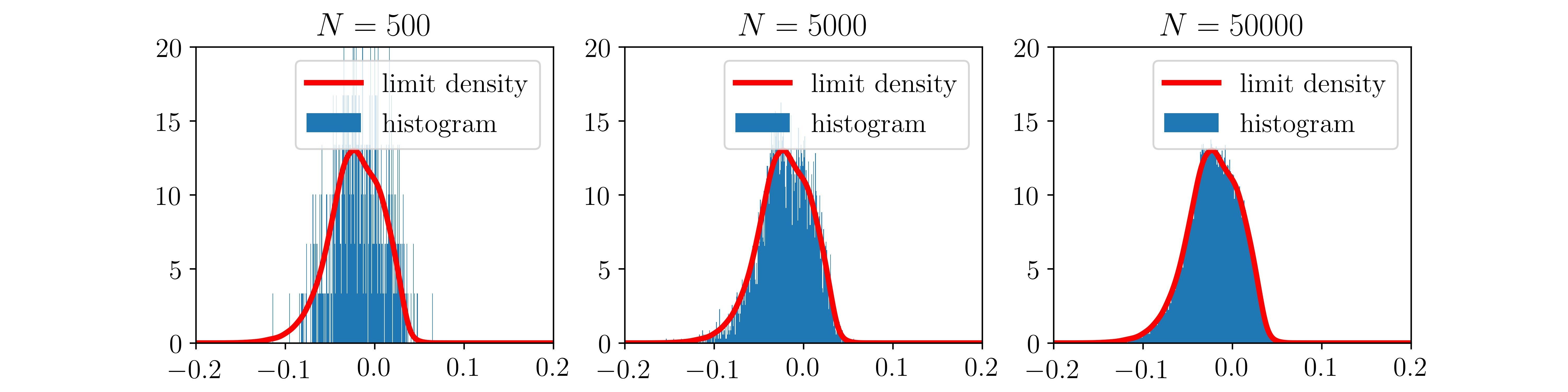}
	\includegraphics[width=1.1\linewidth]{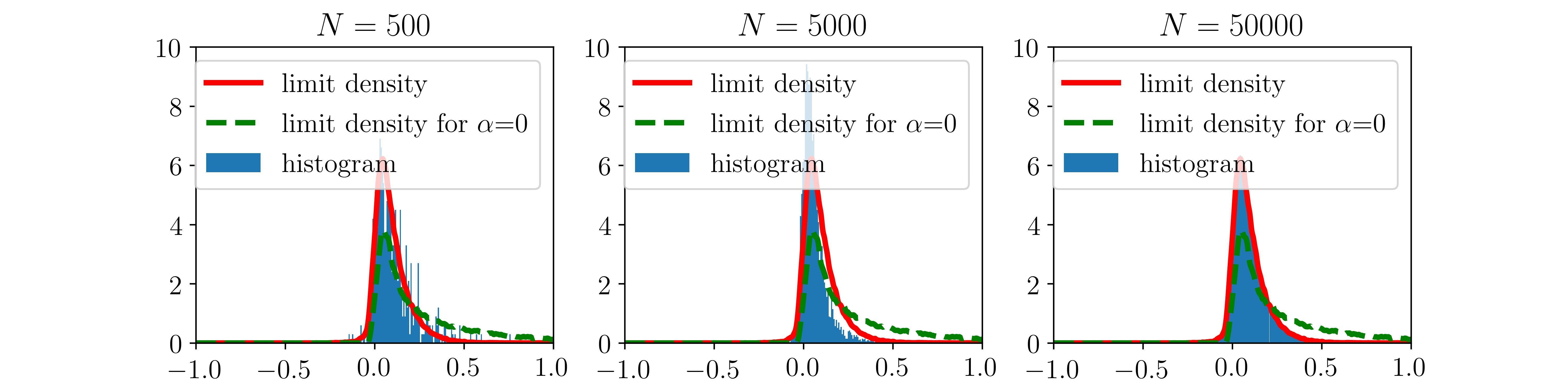}
	\subcaption{Case $\alpha=0.25$ \label{fig:cv_alpha25}}
      \end{minipage}
      \vspace{0.2cm}
 \caption{Convergence of the empirical distribution of the
weights as $N \to \plusinfty$.\label{fig:cv_empirical_measures}}
\end{figure}

First we assess the convergence of the empirical distribution of the
weights of the hidden layer to a limit distribution when
$N \to \plusinfty$. We focus on the MNIST classification task. Note that in
this case $p = 28 \times 28 =784$. In
\Cref{fig:cv_empirical_measures}, we observe the behavior of the
histograms of the weights $W_{n_T}^{1:N}$ of the hidden layer along
the coordinate $(1,1)$ as $N\to +\infty$. We
experimentally observe the existence of two different regimes, one for
$\beta<1$ and the other one for $\beta=1$.  In
\Cref{fig:cv_empirical_measures}, the first line corresponds to the
evolution of the histogram in the case where $\beta = 0.5$. The second
and the third lines correspond to the same experiment with
$\beta = 0.75$ and $\beta = 1$, respectively. Note that in both cases
the histograms converge to a limit. This limit histogram exhibits two
regimes depending if $\beta < 1$ or $\beta = 1$.

\vspace{-0.35cm}
\paragraph{Existence of two regimes.}

\begin{wrapfigure}{R}{0.5\textwidth}
\begin{tikzpicture}[spy using outlines={rectangle, black,magnification=5, connect spies}]
  \node {\pgfimage[interpolate=true,width=1.1\linewidth]{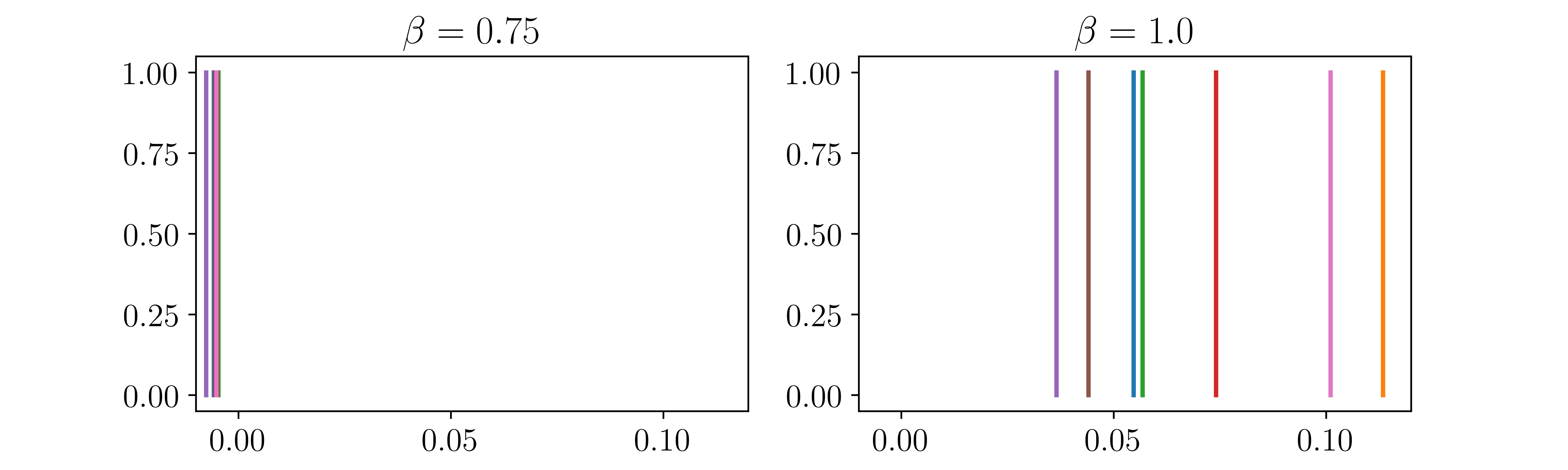}};
	\coordinate (spypoint) at (-2.77, -0.6);
	\coordinate (spyviewer) at (-1.5,-0.25);
 \spy[width=1.cm,height=1.cm] on (spypoint) in node [fill=white] at (spyviewer);
\end{tikzpicture}
\caption{Deterministic versus stochastic behavior depending on the value $\beta$.}
\label{fig:regimes}
\end{wrapfigure}

Now we assess the stochastic nature of the second regime we obtain in
the case $\beta =1$ in contrast to the regime for $\beta <1$ which is
deterministic. In order to highlight this situation, all the weights
of the neural network are initialized with a fixed value, \ie, for any
$N \in \nsets$ and $k \in \{1,\ldots,N\}$,
$W^{k,N}_0 = w_0 \in \rset^p$. Then, the neural network is trained on
the MNIST dataset for $N=10^6$ and $\beta = 0.75$ or $\beta =
1$. \Cref{fig:regimes} represents $7$ samples of the first component
of $W_{n_T}^{1:N}$ obtained with independent runs of SGD.  We can
observe that for $\beta=0.75$ all the samples converge to the same
value which agrees with \eqref{eq:mean_field_beta_small} while in the
case where $\beta=1$ they exhibit different values, which is in
accordance with \eqref{eq:mean_field_beta_one}.

\paragraph{From stochastic to deterministic.}

\begin{wrapfigure}{R}{0.62\textwidth}
	\includegraphics[width=1.1\linewidth]{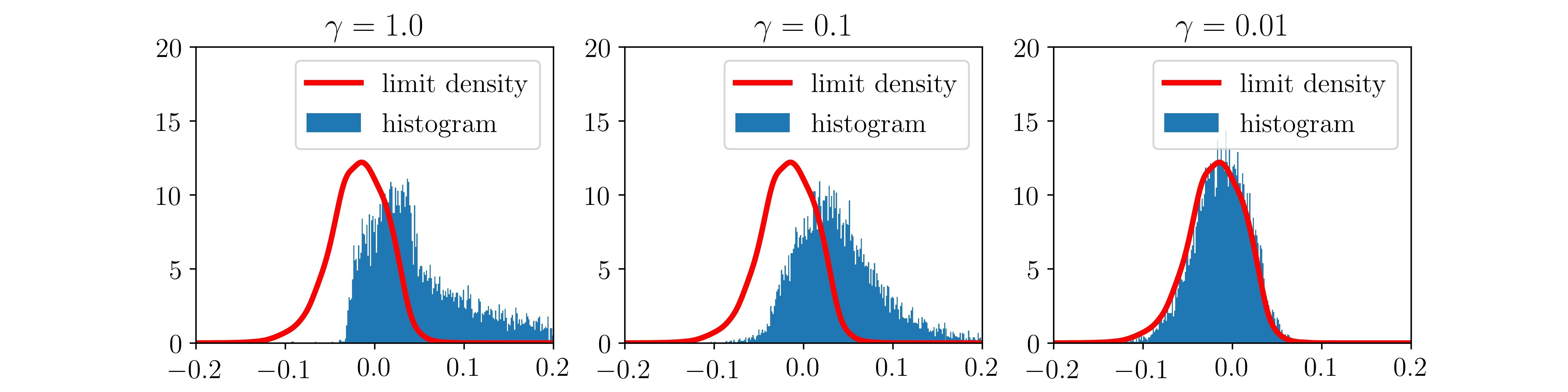}
\caption{Convergence to the deterministic regime as $\gamma \to 0$. \label{fig:cv_gammas}}
\end{wrapfigure}

We illustrate that when $\gamma \to 0$ the dynamics identified in
\eqref{eq:mean_field_beta_one} tends to the one identified in
\eqref{eq:mean_field_beta_small}. We fix $\beta = 1$ and $N=10000$ and
focus on the MNIST classification task. In \Cref{fig:cv_gammas} we
show the histogram of the weights $W_{n_T}^{1:N}$ along the coordinate
$(1,1)$ for different values of $\gamma$. As expected, see
\eqref{eq:mean_field_beta_one} and the following remark, when
$\gamma \to 0$ we recover the limit histogram with $\beta <
1$.
In \Cref{fig:CV_BS} we also study the convergence of the empirical measure when $M \to +\infty$ in the case where $\beta<1$.





\paragraph{Long-time behavior.}

Finally, we illustrate the interest of taking $\beta=1$ in our setting
by considering the more challenging classification task on the
CIFAR-10 dataset. We consider the following set of parameters
$\alpha = 0$, $M=100$, $T=10000$, $\gamma=0.1$. We emphasize that this
experiment aims at comparing the performance of the setting
$\beta <1$ and the one with $\beta=1$ and that we are not trying to
reach state-of-the-art
results. 
In \Cref{table:cifar} we present training and test accuracies for the
classification task at hand. To build the classification estimator we
average the weights along their trajectory, \ie, we perform averaging and consider the average
estimator
$\bar{W}_{n_T}^{1:N} = (n_T-n_0+1)^{-1} \sum_{n=n_0}^{n_T} W_{n}^{1:N}$, where
$n_0=1000$.
Using $\beta=1$ roughly increases
the test accuracy by $1\%$, while the training accuracy is not
$100\%$. This empirically seems to demonstrate that using a
smaller value of $\beta$ tends to overfit the data, whereas using
$\beta=1$ has a regularizing
effect. 
\vspace{-0.3cm}
\begin{table}[H]
  \centering
   \centerfloat
	\caption{Training and Test accuracies for different settings on the CIFAR-10 dataset, with $\alpha=0$, $M=100$ and $\gamma=0.1$ for $T=10000$}
    \vskip 0.15in
  \begin{tabular}{c||c|c?c|c?c|c}
    \hline
    Values & $N=5000$ &  $N=5000$ & $N=10000$ &  $N=10000$ & $N=50000$ &  $N=50000$ \\
    of $N$ and $\beta$ & $\beta=0.75$ & $\beta=1.0$ & $\beta=0.75$ & $\beta=1.0$ &  $\beta=0.75$ & $\beta=1.0$ \\ \hline
    Train acc. & $\textbf{100\%}$ & $97.2\%$ & $\textbf{100\%}$ & $97.2\%$ & $\textbf{100\%}$ & $99 \%$ \\ \hline
    Test acc. & $55.5\%$ & $\textbf{56.5\%}$ & $56.0\%$ & $\textbf{56.5\%}$ & $56.7\%$ & $\textbf{57.7\%}$ \\ \hline
  \end{tabular}
  \label{table:cifar}
\end{table}


\section{Conclusion}
\label{sec:conclusion}

We show in this paper that taking a stepsize in SGD depending on the
number of hidden units leads to particle systems with two possible
mean-field behaviours.  The first was already identified in
\cite{sirignano2018mean,mei2018mean,chizat2018global} and corresponds
to a deterministic mean-field ODE. The second is new and corresponds
to a McKean-Vlasov diffusion. Our numerical experiments on two real
datasets support our findings. In a future work, we intend to follow
the same approach for deep neural networks, \ie, with a growing number
of hidden layers. 


\nocite{sirignano2018mean}
\bibliography{main.bbl}
\bibliographystyle{ieeetr}
\section{Preliminaries}
\subsection{Notation}
Let $(\mse, d_E)$ and $(\msf, d_F)$ be two metric spaces.
$\rmc(\mse, \msf)$ stands for the set of continuous $\msf$-valued
functions. If $\msf = \rset$, then we simply note
$\rmc(\mse)$.

We say that $f: \ \mse \to \rset^p$ is $L$-Lipschitz
if there exists $L \geq 0$ such that for any $x,y \in \mse$,
$\normLigne{f(x) - f(y)} \leq L d_E(x,y)$. 
Let $\rmcb(\mse, \rset^p)$ (respectively $\rmcc(\mse, \rset^p)$) be the set of
bounded continuous functions from $\mse$ to $\rset^p$ (respectively the set of
compactly supported functions from $\mse$ to $\rset^p$). If $p=1$, we
simply note $\rmcb(\mse)$ (respectively $\rmcc(\mse)$).

For $\msu$ an open set of $\rset^d$, $n \in \nsets$ and define
$\rmC^n(\msu, \rset^p)$ the set of the $n$-differentiable
$\rset^p$-valued functions over $\msu$. If $p=1$ then we
simply note $\rmC^n(\msu)$. Let $f \in \rmC^1(\msu)$ we denote by
$\nabla f$ its gradient.  More generally, if
$f \in \rmC^n(\msu, \rset^p)$ with $n, p \in \nsets$, we denote by
$\rmD^k f(x)$ the $k$-th differential of $f$. We also denote for any
$i \in \{1, \dots, d\}$ and $\ell \in \{1, \dots, k\}$,
$\partial_i^{\ell} f$ the $i$-th partial derivative of $f$ of order $\ell$.  If
$f \in \rmc^2(\rset^{\dim}, \rset)$, we denote by $\Delta f$ its
Laplacian. $\rmcc^n(\msu, \rset^p)$ is the subset of
$\rmc^n(\msu, \rset^p)$ such that for any
$f \in \rmcc^n(\msu, \rset^p)$ and $\ell \in \{0, \dots, n\}$,
$\rmD^{\ell} f$ has compact support.


Consider $(\msf, d)$ a metric space. Let $\Pens(\msf)$ be the space of
probability measures over $\msf$ equipped with its Borel
$\sigma$-field $\mcb{\msf}$.  For any $\mu \in \Pens(\msf)$ and
$f: \ \msf \to \rset$, we say that $f$ is $\mu$-integrable if
$\int_{\msf} \absLigne{f(x)} \rmd \mu(x) < +\infty$. In this case, we
set $\mu[f] = \int_{\msf} f(x) \rmd \mu(x)$. Let
$\mu_0 \in \Pens(\msf)$. For any $r \geq 1$, define
$\Pens_r(\msf) = \ensembleLigne{\mu \in \Pens(\msf)}{\int_{\rset^p}
  d(\mu_0, \mu)^r \rmd \mu(x) < +\infty}$.  If not specified, we
consider a filtered probability space
$(\Omega,\mcf,\PP,(\mcf_t)_{t \geq 0})$ satisfying the usual
conditions and any random variables is defined on this probability
space.  Let $f: \ (\mse, \mce) \to (\msg, \mcg)$ be a measurable
function. Then for any measure $\mu$ on $\mce$ we define its
pushforward measure by $f$, $f_{\#} \mu$, for any $\msa \in \mcg$ by
$f_{\#}\mu(\msa) = \mu(f^{-1}(\msa))$.

The set of $m \times n$ real matrices is denoted by
$\rset^{m \times n}$. The set of symmetric real matrices of size
$p$ is denoted $\mathbb{S}_p(\rset)$.

\subsection{Wasserstein distances}
\label{sec:preliminaries}

Let $(\msf, d)$ be a metric space.  Let $\mu_1,\mu_2 \in \Pens(\msf)$,
where $\msf$ is equipped with its Borel $\sigma$-field $\mcb{\msf}$.
A probability measure $\zeta$ over $\mcb{\msf}^{\otimes 2}$ is said
to be a transference plan between $\mu_1$ and $\mu_2$ if for any
$\msa \in \mcb{\msf}$,
$\zeta(\msa \times \msf) = \mu_1(\msa)$ and
$\zeta(\msf \times \msa) = \mu_2(\msa)$. We denote by
$\Lambda(\mu_1, \mu_2)$ the set of all transference plans
between $\mu_1$ and $\mu_2$.  If $\mu_1,\mu_2 \in \Pens_r(\rset^p)$,
we define the Wasserstein distance $\wassersteinD[r](\mu_1, \mu_2)$ of
order $r$ between $\mu_1$ and $\mu_2$ by
\begin{equation}
  \label{eq:def_distance_wasser}
  \wassersteinD[r]^r(\mu_1, \mu_2) = \inf_{\zeta \in \Lambda(\mu_1, \mu_2)} \defEns{ \int_{\msf \times \msf}  d(x,y)^r \rmd \zeta (x,y)} \eqsp.
\end{equation}
Note that $\wassersteinD[r]$ is a distance on $\Pens_r(\msf)$ by
\cite[Theorem 6.18]{villani2009optimal}. In addition
$(\Pens_r(\rset^p),\wassersteinD[r])$ is a complete separable metric
space.  For any $\mu_1, \mu_2 \in \Pens_p(\msf)$ we say that a
couple of random variables $(X,Y)$ is an optimal coupling of
$(\mu_1, \mu_2)$ for $\wassersteinD[p]$ if it has distribution $\xi$
where $\xi$ is an optimal transference plan between $\mu_1$ and
$\mu_2$.  
For any $T\geq 0$, the space
$\scrC_{2,T}^p = \rmC(\ccint{0,T},\MRpdeux)$ is 
a complete separable metric space \cite[Theorem 4.19]{kechris:2012} with the
metric $\dinfwass$ given for any $(\nu_t)_{t \in \ccint{0,T}}$ and
$(\mu_t)_{t \in \ccint{0,T}}$ by
\begin{equation}
  \dinfwass((\nu_t)_{t \in \ccint{0,T}}, (\mu_t)_{t \in \ccint{0,T}}) = \sup_{t \in \ccint{0,T}} \wassersteinD[2](\nu_t, \mu_t) \eqsp .
\end{equation}


In the case where the measures we consider can be written as sums of
Dirac we have the following proposition.
\begin{proposition}
  \label{prop:fm_bound}
  Let $r \geq 1$, $N \in \nsets$, $\{\alpha_k\}_{k=1}^N \in \ccint{0,1}^N$ with
  $\sum_{k=1}^N \alpha_k = 1$, $\{\mu_{k,a}\}_{k=1}^N \in \Pens(\msf)^N$ and
  $\{\mu_{k,b}\}_{k=1}^N \in \Pens(\msf)^N$. Then, setting
  $\nu_i = \sum_{k=1}^N \alpha_k \mu_{k,i}$ with $i\in\{a,b\}$,
  we have
  \begin{equation}
    \wassersteinD[r]^r(\nu_a,\nu_b)w  \leq \sum_{k=1}^N \wassersteinD[r]^r(\mu_{k,a},\mu_{k,b}) \eqsp .
  \end{equation}
\end{proposition}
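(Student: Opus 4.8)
The plan is to build an explicit transference plan between $\nu_a$ and $\nu_b$ by combining optimal couplings of each pair $(\mu_{k,a},\mu_{k,b})$, weighted according to the coefficients $\alpha_k$. First I would, for each $k \in \{1,\dots,N\}$, invoke the existence of an optimal transference plan $\zeta_k \in \Lambda(\mu_{k,a},\mu_{k,b})$ realizing $\wassersteinD[r]^r(\mu_{k,a},\mu_{k,b}) = \int_{\msf\times\msf} d(x,y)^r\,\rmd\zeta_k(x,y)$; such a plan exists because $(\msf,d)$ is a metric space and the infimum in \eqref{eq:def_distance_wasser} is attained (this is the standard existence result for optimal plans, and in the cases used in this paper $\msf$ is Polish so \cite[Theorem 6.18]{villani2009optimal} applies; when $\wassersteinD[r]$ is infinite the inequality is trivial, and when finite one can also argue with near-optimal plans and a limiting $\varepsilon \downarrow 0$ argument to avoid any topological hypothesis on $\msf$).

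Next I would define $\zeta = \sum_{k=1}^N \alpha_k \zeta_k$, a probability measure on $\mcb{\msf}^{\otimes 2}$ since the $\alpha_k$ are nonnegative and sum to one. The key step is to check that $\zeta \in \Lambda(\nu_a,\nu_b)$: for any $\msa \in \mcb{\msf}$,
\begin{equation}
\zeta(\msa\times\msf) = \sum_{k=1}^N \alpha_k \zeta_k(\msa\times\msf) = \sum_{k=1}^N \alpha_k \mu_{k,a}(\msa) = \nu_a(\msa)\eqsp,
\end{equation}
and symmetrically $\zeta(\msf\times\msa) = \nu_b(\msa)$, so $\zeta$ is indeed a transference plan between $\nu_a$ and $\nu_b$. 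It then remains to bound the cost of $\zeta$ from above: by definition of the Wasserstein distance as an infimum over transference plans,
\begin{equation}
\wassersteinD[r]^r(\nu_a,\nu_b) \leq \int_{\msf\times\msf} d(x,y)^r\,\rmd\zeta(x,y) = \sum_{k=1}^N \alpha_k \int_{\msf\times\msf} d(x,y)^r\,\rmd\zeta_k(x,y) = \sum_{k=1}^N \alpha_k \wassersteinD[r]^r(\mu_{k,a},\mu_{k,b})\eqsp.
\end{equation}
Since each $\alpha_k \in \ccint{0,1}$, we have $\alpha_k \leq 1$ and hence $\sum_{k=1}^N \alpha_k \wassersteinD[r]^r(\mu_{k,a},\mu_{k,b}) \leq \sum_{k=1}^N \wassersteinD[r]^r(\mu_{k,a},\mu_{k,b})$, which gives the claimed bound. (Note the statement as written uses $\alpha_k \le 1$ at the last step; the sharper inequality with the weights retained is also immediate from the same computation.)

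There is no serious obstacle here: the only point requiring a little care is the existence of the optimal plans $\zeta_k$, which is why I would either appeal to a Polish-space hypothesis on $\msf$ and \cite[Theorem 6.18]{villani2009optimal}, or else work with $\varepsilon$-optimal plans $\zeta_k^\varepsilon$ satisfying $\int d(x,y)^r \rmd\zeta_k^\varepsilon \leq \wassersteinD[r]^r(\mu_{k,a},\mu_{k,b}) + \varepsilon$, form $\zeta^\varepsilon = \sum_k \alpha_k \zeta_k^\varepsilon$, obtain $\wassersteinD[r]^r(\nu_a,\nu_b) \leq \sum_k \alpha_k \wassersteinD[r]^r(\mu_{k,a},\mu_{k,b}) + \varepsilon$, and let $\varepsilon \downarrow 0$. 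The measurability of $(x,y)\mapsto d(x,y)^r$ and the fact that a finite convex combination of probability measures is again a probability measure are routine and need no comment.
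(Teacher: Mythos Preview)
Your proposal is correct and follows essentially the same approach as the paper: form $\zeta = \sum_{k=1}^N \alpha_k \zeta_k$ from optimal (or $\varepsilon$-optimal) couplings $\zeta_k \in \Lambda(\mu_{k,a},\mu_{k,b})$, check that $\zeta \in \Lambda(\nu_a,\nu_b)$, and bound the Wasserstein cost of $\zeta$. Your write-up is in fact more careful than the paper's, which omits the marginal verification and the existence discussion for the $\zeta_k$.
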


\begin{proof}
  Consider
  $\zeta = \sum_{k=1}^N \alpha_k \zeta_k \in
  \Lambda(\nu_a,\nu_b)$ with $\zeta_k$ the optimal transference plan
  between $\mu_{k,a}$ and $\mu_{k,b}$. Then, we have
  \begin{equation}
    \wasserstein[r]{\nu_a, \nu_b}[r] \leq \int_{\rset^p \times \rset^p} d(x,y)^r \rmd \zeta (x,y)  \leq N^{-1} \sum_{k=1}^N \wassersteinD[r]^r(\mu_{k,a},\mu_{k,b}) \eqsp .
  \end{equation}
\end{proof}

As a special case of \Cref{prop:fm_bound}, we obtain that for any
$r \geq 1$, $\{w_{k,a}\}_{k=1}^N \in \msf^N$ and
$\{w_{k,a}\}_{k=1}^N \in \msf^N$,
\begin{equation}
  \wassersteinD[r](N^{-1} \sum_{k=1}^N \updelta_{w_{k,a}}, N^{-1} \sum_{k=1}^N \updelta_{w_{k,b}}) \leq N^{-1} \sum_{k=1}^N d(w_{k,a}, w_{k,b})^r \eqsp .
\end{equation}
As another special case of \Cref{prop:fm_bound}, we obtain that for
any $\mu \in \Pens_r(\msf)$ and $\{w_{k}\}_{k=1}^N \in \msf^N$
\begin{equation}
  \wassersteinD[r](N^{-1} \sum_{k=1}^N \updelta_{w_{k}}, N^{-1}, \mu) \leq N^{-1} \sum_{k=1}^N \wassersteinD[r](w_{k},\mu)^r   \eqsp .
\end{equation}




\section{A mean-field modification of  Stochastic Gradient Langevin Dynamics}
\label{sec:stoch-grad-lang}

\subsection{Presentation of the modified SGLD and its continuous counterpart}

We start by introducing a modified Stochastic Gradient Langevin Dynamics
(mSGLD) \cite{welling2011bayesian}.  In the mean-field regime, this
setting was studied in the case $\beta = 0$ in \cite{mei2018mean}.  We
recall that the mean-field
$h : \rset^p \times \Pens(\rset^d)\to \rset^p$ and
$\xi : \rset^p \times \Pens(\rset^d) \times \msx\times \msy \to
\rset^p$ are given for any $\mu \in \MRp$, $w \in \rset^p$,
$(x,y) \in \msx \times \msy$ by
\begin{align}
  \label{eq:h}
&  h(w, \mu)=-\int_{\msx \times \msy}  \partial_1
\ell \parenthese{\mu[F(\cdot,x)] ,y} \nablaw F(w,x) \, \dd \datap(x,y) - \nabla V(w) \eqsp, \\
  \label{eq:def_xi_n}
 &  \xi(w,\mu,x,y)=-h(w, \mu) - \partial_1 \ell(\mu[F(\cdot,x)] ,y) \nablaw F(w,x)  - \nabla V(w)\eqsp .
\end{align}
 Let
$(W_0^k)_{ k\in\nsets}$ be \iid~$p$ dimensional random variables with
distribution $\mu_0$ and       $\ensembleLigne{Z_k^n}{k, n \in \nsets}$ be \iid~$p$
      dimensional independent Gaussian random variables with zero mean
      and identity covariance matrix.  Consider the sequence
$(W_n^{1:N})_{n \in \nset}$ associated with mSGLD starting from
$W_0^{1:N}$ and defined by the following recursion: for any
$n \in \nset$, $k\in \{1,\ldots,N\}$,
\begin{multline}
	\label{eq:sgld}
	\WkN\nn = \WkN_n + \gamma N^{\beta - 1}(n+\gua\pinv)^{-\alpha} \defEns{h(\WkN_n,\nunn) + \xi(\WkN_n,\nunn,X_n,Y_n)} \\ + \parentheseDeux{2 \eta \gamma N^{\beta - 1} (n+\gua^{-1})^{-\alpha}}^{1/2} Z_{k,n}  \eqsp ,
      \end{multline}
      where $\eta \geq 0$, $\beta \in \ccint{0,1}$,
      $\alpha \in \coint{0,1}$,  $\gamma>0$, $(X_n, Y_n)_{n \in \nset}$ is a sequence of
      \iid~input/label samples distributed according to $\pi$ and
      $\gua = \gamma^{1/(1-\alpha)} N^{(\beta - 1)/(1 - \alpha)}$. Note that in the cas $\eta = 0$,
      we obtain \eqref{eq:sgd}. In addition, \eqref{eq:sgld} does not exactly correspond to the usual implementation of mSGLD as introduced in \cite{welling2011bayesian}. Indeed, to recover this algorithm, we should replace $\parentheseDeuxLigne{2 \eta \gamma N^{\beta - 1} (n+\gua^{-1})^{-\alpha}}^{1/2} Z_{k,n}$ by $\parentheseDeuxLigne{2 \eta \gamma N^{\beta} (n+\gua^{-1})^{-\alpha}}^{1/2} Z_{k,n}$ in \eqref{eq:sgld}.  The scheme presented in \eqref{eq:sgld} amounts to consider a temperature which scales as $\gamma N^{\beta - 1}$ with the number of particles. As emphasized before, this scheme was also considered in \cite{mei2018mean}.

      We now present the continuous model associated with this discrete
process in the limit $\gamma \to 0$ or $N \to \plusinfty$. For $N \in\nsets$, consider the particle system diffusion $(\bfW^{1:N}_t)_{t \geq 0} = (\{\bfW^{k,N}_t\}_{k=1}^N)_{t \geq 0}$  starting from $\bfW^{1:N}_0$ defined for any
$k \in \{1, \dots, N\}$ by
\begin{equation}
	\label{eq:sde_diff}
	\dd \bfwkN_t = (t+1)^{-\alpha} \defEns{ h(\bfwkN_t,\bfnu^N_t)\dd t+\gua^{1/2}\Sigma^{1/2}(\bfwkN_t,\bfnu^N_t) \dd \bfB^{k}_t + \sqrt{2 \eta} \rmd \tbfB_t^{k}}\eqsp ,
\end{equation}
where $\ensembleLigne{(\bfB_t^{k})_{t \geq 0}}{k \in \nsets}$ and
$\ensembleLigne{(\tbfB_t^{k})_{t \geq 0}}{k \in \nsets}$ are two
independent families of independent $p$ dimensional Brownian motions
and $\bfnun_t$ is the empirical probability distribution of the
particles defined for any $t \geq 0$ by
$\bfnun_t=N^{-1}\sum_{k=1}^N \updelta_{\bfwkN_t}$.
Similarly to \Cref{sec:setting}, \eqref{eq:sde_diff} is the continuous
counterpart of \eqref{eq:sgld}. Let $M \in \nsets$. Similarly to \eqref{eq:sde_batch}, we consider the
following particle system diffusion
$(\bfW^{1:N}_t)_{t \geq 0} = (\{\bfW^{k,N}_t\}_{k=1}^N)_{t \geq 0}$
starting from $\bfW^{1:N}_0$ defined for any $k \in \{1, \dots, N\}$
by
\begin{equation}
  \label{eq:sde_diff_batch}
	\dd \bfwkN_t = (t+1)^{-\alpha} \defEns{ h(\bfwkN_t,\bfnu^N_t)\dd t+(\gua/M)^{1/2}\Sigma^{1/2}(\bfwkN_t,\bfnu^N_t) \dd \bfB^k_t + \sqrt{2 \eta} \rmd \tbfB_t^{k}} \eqsp .
      \end{equation}

\subsection{Mean field approximation and propagation of chaos for mSGLD}

The following theorems are the extensions of \Cref{thm:empi_conv_cont}
and \Cref{thm:empi_conv_cont_one} to \eqref{eq:sde_diff} for any
$\eta \geq 0$. Note that in the case $\eta = 0$,
\Cref{thm:empi_conv_cont_sgld} boils down to \Cref{thm:empi_conv_cont}
and \Cref{thm:empi_conv_cont_one_sgld} to
\Cref{thm:empi_conv_cont_one}.

We start by stating our results in the case $\beta \in \coint{0,1}$.
Consider the mean-field SDE starting from a random variable
$\bfw_0^{\star}$ given by
  \begin{equation}
    \label{eq:mean_field_beta_small_sgld}
    \rmd \bfw_t^{\star} = (t+1)^{-\alpha} \defEns{h(\bfw_t^{ \star}, \bflambda_t^{ \star}) \rmd t + \sqrt{2 \eta} \tbfB_t} \eqsp , \qquad \text{with $\bflambda^{\star}_t$ the distribution of $\bfw_t^{ \star}$} \eqsp.
  \end{equation}
\begin{theorem}
  \label{thm:empi_conv_cont_sgld}
    Assume \rref{assum:all}. Let $(\bfw_0^k)_{k \in \nset}$ be a
  sequence of \iid~$\rset^p$-valued random variables with distribution
  $\mu_0 \in \Pens_2(\rset^p)$ and set for any $N \in \nsets$,
  $\bfw_0^{1:N} = \{\bfw_0^{k}\}_{k=1}^N$. Then, for any
  $\nbparticlem \in \nsets$ and $T \geq 0$, there exists $C_{\nbparticlem,T} \geq 0$
  such that for any $\alpha \in \coint{0,1}$, $\beta \in \coint{0,1}$,
  $M \in \nsets$ and $N \in \nsets$
  \begin{equation}
   \textstyle{ \expe{\sup_{t \in \ccint{0,T}} \normLigne{\bfw_t^{1:\nbparticlem,N} - \bfw_t^{1:\nbparticlem, \star}}^2} \leq C _{\nbparticlem, T} \defEns{N^{-(1-\beta)/(1-\alpha)}M^{-1} + N^{-1}} \eqsp ,}
  \end{equation}
with
$(\bfw_t^{1:\nbparticlem,N}, \bfw_t^{1:\nbparticlem, \star}) = \{(\bfw_t^{k, N}, \bfw_t^{k, \star})\}_{k=1}^{\nbparticlem}$, $(\bfw_t^{1:N})$ is the solution of \eqref{eq:sde_diff_batch} starting from $\bfw_0^{1:N}$, and for any $k \in \{1,\ldots,N\}$, $\bfw_t^{k, \star}$ is the solution of \eqref{eq:mean_field_beta_small_sgld} starting from $\bfw_0^k$ and Brownian motion $(\tbfB^{k}_{t})_{t \geq 0}$.
\end{theorem}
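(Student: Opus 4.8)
The plan is to run a synchronous coupling argument in the spirit of Sznitman and close a Grönwall estimate; the only genuinely new feature compared with the $\eta=0$ case (\Cref{thm:empi_conv_cont}) is that the additional noise $\sqrt{2\eta}\,\tbfB^k$ is \emph{shared} by \eqref{eq:sde_diff_batch} and \eqref{eq:mean_field_beta_small_sgld}, hence cancels in the difference, so the bound is unchanged. First I would collect a priori estimates on the coefficients under \rref{assum:all}: that $h$ is Lipschitz in its first argument uniformly in $\mu$ and Lipschitz in $\mu$ for $\wassersteinD[2]$ uniformly in $w$ (using the bounds on $\rmD^2 V, \rmD^3 V$, on $\rmD_w^i F$ and on $\partial_1^i \ell$ in \rref{assum:all}, together with the fact that all moment products that occur, e.g. $\Phi^2\Psi^2$ and $\Phi^4\Psi^2$, are $\datap$-integrable by Hölder from \Cref{assum:all}-\ref{item:compact}), and that $\Sigma$ is \emph{bounded} uniformly in $(w,\mu)$: indeed, in \eqref{eq:def_xi_n} the two $-\nabla V(w)$ contributions to $\xi$ cancel, so $\xi$ reduces to a difference of terms $\partial_1\ell(\mu[F(\cdot,x)],y)\,\nablaw F(w,x)$, each controlled by a moment of $\Phi\Psi$. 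I would also invoke the well-posedness of \eqref{eq:sde_diff_batch} and \eqref{eq:mean_field_beta_small_sgld} with second moments finite on $[0,T]$ (standard, from the linear growth of $h$, boundedness of $\Sigma$, and $\mu_0\in\Pens_2(\rset^p)$), as established in the supplement; this guarantees the quantities below are finite.

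Next I would set up the coupling: for each $k$ let $(\bfw_t^{k,\star})_{t\ge0}$ solve \eqref{eq:mean_field_beta_small_sgld} with initial condition $\bfw_0^k$ and Brownian motion $\tbfB^k$, the very ones driving \eqref{eq:sde_diff_batch}, so that $\{(\bfw_t^{k,\star})_{t\ge0}\}_{k\in\nsets}$ are i.i.d. with common time-$t$ marginal $\bflambda_t^\star$. Writing $\Delta_t^k = \bfw_t^{k,N}-\bfw_t^{k,\star}$ and $\bbfnu_t^N = N^{-1}\sum_{k=1}^N\updelta_{\bfw_t^{k,\star}}$, subtracting the two integrated equations makes the $\sqrt{2\eta}\,\tbfB^k$ terms disappear and leaves, for $\Delta_t^k$, a drift integral plus the stochastic integral $\int_0^t (s+1)^{-\alpha}(\gua/M)^{1/2}\Sigma^{1/2}(\bfw_s^{k,N},\bfnu_s^N)\,\dd\bfB_s^k$, which has no mean-field counterpart. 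I would split the drift difference as $h(\bfw_s^{k,N},\bfnu_s^N)-h(\bfw_s^{k,\star},\bfnu_s^N)$, bounded by $\Lip\normLigne{\Delta_s^k}$; plus $h(\bfw_s^{k,\star},\bfnu_s^N)-h(\bfw_s^{k,\star},\bbfnu_s^N)$, bounded by $\Lip\,\wassersteinD[2](\bfnu_s^N,\bbfnu_s^N)$ and hence, by \Cref{prop:fm_bound}, by $\Lip\,(N^{-1}\sum_{k=1}^N\normLigne{\Delta_s^k}^2)^{1/2}$; plus the fluctuation term $h(\bfw_s^{k,\star},\bbfnu_s^N)-h(\bfw_s^{k,\star},\bflambda_s^\star)$.

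Then, setting $v_t = \expeLigne{\sup_{r\le t}\normLigne{\Delta_r^1}^2}$ (which by exchangeability dominates $\expeLigne{\sup_{r\le t}N^{-1}\sum_{k=1}^N\normLigne{\Delta_r^k}^2}$), I would bound the three pieces: Doob's $L^2$ inequality, the Itô isometry and boundedness of $\Sigma$ give $\expeLigne{\sup_{t\le T}\normLigne{\int_0^t (s+1)^{-\alpha}(\gua/M)^{1/2}\Sigma^{1/2}\,\dd\bfB_s^k}^2}\le C_T\,\gua/M$; the first two drift pieces, after Cauchy--Schwarz in time and using $(s+1)^{-\alpha}\le 1$, contribute $C_T\int_0^T v_s\,\dd s$; and for the fluctuation term, since $h$ depends on $\mu$ only through the \emph{linear} functionals $\mu[F(\cdot,x)]=\int F(\tilde w,x)\,\dd\mu(\tilde w)$, a first-order expansion of $\partial_1\ell$ gives $\normLigne{h(\bfw_s^{k,\star},\bbfnu_s^N)-h(\bfw_s^{k,\star},\bflambda_s^\star)}\le \int_{\msx\times\msy}\Psi(y)\Phi(x)\,\absLigne{\bbfnu_s^N[F(\cdot,x)]-\bflambda_s^\star[F(\cdot,x)]}\,\dd\datap(x,y)$; and since $\bbfnu_s^N[F(\cdot,x)]=N^{-1}\sum_{k=1}^N F(\bfw_s^{k,\star},x)$ is an average of i.i.d. variables bounded by $\Phi(x)$ with mean $\bflambda_s^\star[F(\cdot,x)]$, Cauchy--Schwarz in $\datap$ and the variance bound $\expeLigne{(\bbfnu_s^N[F(\cdot,x)]-\bflambda_s^\star[F(\cdot,x)])^2}\le \Phi(x)^2/N$ yield $\expeLigne{\normLigne{h(\bfw_s^{k,\star},\bbfnu_s^N)-h(\bfw_s^{k,\star},\bflambda_s^\star)}^2}\le C/N$, with $C<\infty$ by \Cref{assum:all}-\ref{item:compact}. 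Collecting everything gives $v_T\le C_T(\gua/M + N^{-1}) + C_T\int_0^T v_s\,\dd s$; Grönwall yields $v_T\le C_T'(\gua/M + N^{-1})$; and since $\normLigne{\bfw_t^{1:m,N}-\bfw_t^{1:m,\star}}^2=\sum_{k=1}^m\normLigne{\Delta_t^k}^2$ and $\sup$ is subadditive, $\expeLigne{\sup_{t\le T}\normLigne{\bfw_t^{1:m,N}-\bfw_t^{1:m,\star}}^2}\le m\,v_T$, which upon substituting $\gua=\gamma^{1/(1-\alpha)}N^{-(1-\beta)/(1-\alpha)}$ and absorbing the remaining $N$-independent factor into $C_{m,T}$ is exactly the claimed bound.

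I expect the main obstacle to be the sharp $N^{-1}$ rate in the fluctuation term: it cannot come from a generic empirical-measure concentration estimate (which would only give the dimension-dependent rate $N^{-2/p}$ in $\wassersteinD[2]$) but must exploit the structural fact that $\mu$ enters the dynamics only through the linear map $\mu\mapsto\mu[F(\cdot,x)]$ composed with a smooth function having bounded derivatives, turning the error into an honest law-of-large-numbers fluctuation of an average of i.i.d. bounded random variables. The rest is essentially the moment bookkeeping of the first step (keeping every constant finite under $\int(\Phi^{10}+\Psi^4)\,\dd\datap<\infty$); note that because $\Sigma^{1/2}$ appears in \eqref{eq:sde_diff_batch} only in a term with no mean-field counterpart, here --- unlike for the $\beta=1$ analogue \Cref{thm:empi_conv_cont_one_sgld} --- one never needs Lipschitz continuity of $A\mapsto A^{1/2}$, and the extra noise $\sqrt{2\eta}\,\tbfB^k$ contributes nothing beyond its cancellation, which is why the estimate coincides with that of \Cref{thm:empi_conv_cont}.
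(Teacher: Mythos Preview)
Your proposal is correct and follows essentially the same strategy as the paper: synchronous coupling, cancellation of the shared $\sqrt{2\eta}\,\tbfB^k$ term, a three-way decomposition of the drift difference, Doob/It\^o for the residual stochastic integral (yielding the $\gua/M$ contribution), exploitation of the fact that $\mu$ enters only through linear functionals $\mu[F(\cdot,x)]$ to get an honest i.i.d.\ variance and hence the $N^{-1}$ rate for the fluctuation term, and Gr\"onwall to close. The only difference is organizational: the paper first proves an abstract propagation-of-chaos theorem (\Cref{thm:quantitative_prop_lip}) under the Lipschitz-in-$\int|\mu_1[\rmg]-\mu_2[\rmg]|^2\,\rmd\pi_{\msz}$ condition \rref{assum:propagation} and the uniform approximation condition \rref{assum:limit_b_sig}, and then verifies these via \Cref{prop:h_reg}, packaging the $\sqrt{2\eta}\,\Id$ block into a shared column of $\upsigma_N$ and $\upsigma$ so that the difference $\upsigma_N-\upsigma$ is just the $(\gua/M)^{1/2}\Sigma^{1/2}$ block; you instead unroll the same estimates directly for this specific model.
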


\begin{proof}
  The proof is postponed to \Cref{sec:proofs-crefthm:-cref}
\end{proof}

Consider now the mean-field SDE starting from a random variable
$\bfw_0^{\star}$ given by
  \begin{equation}
    \label{eq:mean_field_beta_one_sgld}
    \rmd \bfw_t^{\star} = (t+1)^{-\alpha} \defEns{h(\bfw_t^{ \star}, \bflambda_t^{ \star}) \rmd t
    + (\gamma^{1/(1-\alpha)}\Sigma(\bfw_t^{ \star}, \bflambda_t^{\star})/M)^{1/2} \rmd \bfB_t + \sqrt{2 \eta} \rmd \tbfB_t} \eqsp ,
  \end{equation}
  where $\bflambda^{\star}_t$ is the distribution of $\bfw_t^{ \star}$
  and $(\bfB_t)_{t \geq 0}$ and $(\tbfB_t)_{t \geq 0}$ are independent
  $p$ dimensional Brownian motions.

\begin{theorem}
  \label{thm:empi_conv_cont_one_sgld}
  Let $\beta = 1$.  Assume \rref{assum:all}. Let
  $(\bfw_0^k)_{k \in \nset}$ be a sequence of $\rset^p$-valued random
  variables with distribution $\mu_0 \in \Pens_2(\rset^p)$ and assume
  that for any $N \in \nsets$,
  $\bfw_0^{1:N} = \{\bfw_0^{k}\}_{k=1}^N$. Then, for any
  $\nbparticlem \in \nsets$ and $T \geq 0$, there exists $C_{\nbparticlem,T} \geq 0$
  such that for any $\alpha \in \coint{0,1}$, $M \in \nsets$ and
  $N \in \nsets$ we have
  \begin{equation}
   \textstyle{ \expe{\sup_{t \in \ccint{0,T}} \normLigne{\bfw_t^{1:\nbparticlem,N} - \bfw_t^{1:\nbparticlem, \star}}^2} \leq C _{\nbparticlem, T} N^{-1} \eqsp , }
 \end{equation}
 with
 $(\bfw_t^{1:\nbparticlem,N}, \bfw_t^{1:\nbparticlem, \star}) =
 \{(\bfw_t^{k, N}, \bfw_t^{k, \star})\}_{k=1}^{\nbparticlem}$,
 $(\bfw_t^{1:N})$ is the solution of \eqref{eq:sde_diff_batch}
 starting from $\bfw_0^{1:N}$, and for any $k \in \{1,\ldots,N\}$,
 $\bfw_t^{k, \star}$ is the solution of
 \eqref{eq:mean_field_beta_one_sgld} starting from $\bfw_0^k$ and
 Brownian motions $(\bfB^{k}_{t})_{t \geq 0}$ and
 $(\tbfB^{k}_{t})_{t \geq 0}$.
\end{theorem}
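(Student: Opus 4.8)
The plan is to reduce \Cref{thm:empi_conv_cont_one_sgld} to (the proof of) \Cref{thm:empi_conv_cont_one} by a synchronous coupling, the decisive point being that the added Langevin noise is common to the particle system and to the mean-field dynamics. Fix $m$, fix $k\le m$, and realize the particle $\bfw_t^{k,N}$ solving \eqref{eq:sde_diff_batch} and the mean-field particle $\bfw_t^{k,\star}$ solving \eqref{eq:mean_field_beta_one_sgld} with the \emph{same} pair of Brownian motions $(\bfB^k,\tbfB^k)$ and the same starting point $\bfw_0^k$. Since $\beta=1$, the scaling $\gua=\gamma^{1/(1-\alpha)}$ no longer depends on $N$, and the increment $(t+1)^{-\alpha}\sqrt{2\eta}\,\rmd\tbfB_t^k$ enters both equations identically, hence cancels in the difference. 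Writing $\Delta_t^{k,N}=\bfw_t^{k,N}-\bfw_t^{k,\star}$ and $\bar\nu_t^N=N^{-1}\sum_{j=1}^N\updelta_{\bfw_t^{j,\star}}$, one gets $\Delta_0^{k,N}=0$ and
\begin{align*}
  \rmd \Delta_t^{k,N} &= (t+1)^{-\alpha}\big[h(\bfw_t^{k,N},\bfnu^N_t) - h(\bfw_t^{k,\star},\bflambda_t^\star)\big]\,\rmd t \\
  &\quad + (t+1)^{-\alpha}(\gamma^{1/(1-\alpha)}/M)^{1/2}\big[\Sigma^{1/2}(\bfw_t^{k,N},\bfnu^N_t) - \Sigma^{1/2}(\bfw_t^{k,\star},\bflambda_t^\star)\big]\,\rmd\bfB^k_t ,
\end{align*}
which is \emph{exactly} the difference equation analysed in the proof of \Cref{thm:empi_conv_cont_one} (the factor $(t+1)^{-\alpha}\le1$ on $\rset_+$ only helps). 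Consequently that argument transfers verbatim, once the well-posedness of \eqref{eq:sde_diff_batch} and \eqref{eq:mean_field_beta_one_sgld} under \rref{assum:all}, established in the supplement, is invoked.

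For completeness I would then run the standard steps. Apply It\^o's formula to $t\mapsto\|\Delta_t^{k,N}\|^2$ and the Burkholder--Davis--Gundy inequality to pass to $g_N(t)=\expe{\sup_{s\le t}\|\Delta_s^{1,N}\|^2}$, reducing to $k=1$ by exchangeability of $\{(\bfw^{j,N},\bfw^{j,\star})\}_{j=1}^N$. Control the drift and diffusion increments with the global Lipschitz continuity of $w\mapsto h(w,\mu)$ and $w\mapsto\Sigma(w,\mu)$, and the Lipschitz continuity of $\mu\mapsto h(w,\mu),\ \mu\mapsto\Sigma(w,\mu)$ in the $\wassersteinD[1]$-topology, all of which follow from \rref{assum:all} (in particular the boundedness of the first three derivatives of $F,\ell,V$ and the moment bound \Cref{assum:all}-\ref{item:compact}). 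Split $\wassersteinD[2](\bfnu^N_t,\bflambda_t^\star)\le\wassersteinD[2](\bfnu^N_t,\bar\nu_t^N)+\wassersteinD[2](\bar\nu_t^N,\bflambda_t^\star)$: by \Cref{prop:fm_bound} the first term satisfies $\wassersteinD[2](\bfnu^N_t,\bar\nu_t^N)^2\le N^{-1}\sum_{j=1}^N\|\Delta_t^{j,N}\|^2$, which feeds $g_N(s)$ into the Grönwall recursion; the second term I would \emph{not} estimate via a Wasserstein law of large numbers, but instead observe that the only statistics of $\bar\nu_t^N$ entering the coefficients are the averages $N^{-1}\sum_{j=1}^N F(\bfw_t^{j,\star},x)$ appearing inside $\partial_1\ell$ in $h$ and $\Sigma$; these are sums of i.i.d. terms bounded by $\Phi(x)$, so Lipschitzness of $\partial_1\ell$, Cauchy--Schwarz and \Cref{assum:all}-\ref{item:compact} give $\expe{|h(w,\bar\nu_t^N)-h(w,\bflambda_t^\star)|^2}+\expe{\|\Sigma(w,\bar\nu_t^N)-\Sigma(w,\bflambda_t^\star)\|^2}=O(N^{-1})$, uniformly in $w$ and $t\le T$. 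Grönwall's lemma then yields $g_N(T)\le C_{1,T}N^{-1}$, and $\expe{\sup_{t\le T}\|\bfw_t^{1:m,N}-\bfw_t^{1:m,\star}\|^2}\le m\,g_N(T)$ gives the claim with $C_{m,T}=m\,C_{1,T}$.

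The main obstacle is not the Langevin term, which simply cancels, but the difficulty already present in \Cref{thm:empi_conv_cont_one}: handling the diffusion coefficient $\Sigma^{1/2}$ without a lower bound on $\Sigma$. Since $\Sigma(w,\mu)$ is only known to be Lipschitz and positive semidefinite, one has at best the H\"older bound $\|\Sigma^{1/2}(w_1,\mu_1)-\Sigma^{1/2}(w_2,\mu_2)\|_{\mathrm F}^2\le p^{1/2}\|\Sigma(w_1,\mu_1)-\Sigma(w_2,\mu_2)\|_{\mathrm F}$ (a Powers--St{\o}rmer-type inequality), which introduces a square root; the It\^o estimate must therefore be closed on $g_N$ together with a term of the form $\int_0^t (g_N(s))^{1/2}\,\rmd s$, which is still integrable to an $O(N^{-1})$ bound precisely because the inhomogeneous term is itself $O(N^{-1})$, but this is the delicate point. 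The second delicate point, also inherited, is obtaining the sharp $N^{-1}$ rate in the i.i.d. variance estimate above, which is exactly why \Cref{assum:all}-\ref{item:compact} demands the strong moment bound $\int(\Phi^{10}+\Psi^4)\,\rmd\pi<\infty$: the products of coefficients arising from the $\xi\xi^{\top}$ structure of $\Sigma$ and from squaring the coefficient differences must remain $\pi$-integrable.
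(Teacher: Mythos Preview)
Your overall architecture is right and matches the paper: synchronous coupling with the same $(\bfB^k,\tbfB^k)$, the Langevin increment $\sqrt{2\eta}\,\rmd\tbfB^k$ cancels in the difference, and one closes a Gr\"onwall inequality on $g_N(t)=\expe{\sup_{s\le t}\|\Delta_s^{1,N}\|^2}$ using exchangeability and an i.i.d.\ variance estimate for $\bar\nu^N_t$ against $\bflambda_t^\star$ (indeed via the linear statistics $\mu\mapsto\mu[F(\cdot,x)]$, not a Wasserstein LLN). The paper packages this as a general result (\Cref{thm:quantitative_prop_lip}) under abstract Lipschitz assumptions \Cref{assum:propagation}--\Cref{assum:limit_b_sig}, with the block diffusion $\upsigma_N=(t+1)^{-\alpha}\big((\gamma^{1/(1-\alpha)}/M)^{1/2}\Sigma^{1/2},\ \sqrt{2\eta}\,\Id\big)$, and takes $b=b_N$, $\upsigma=\upsigma_N$, $\Mtt_2=0$.

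The genuine gap is your treatment of $\Sigma^{1/2}$. A Powers--St{\o}rmer bound $\|\Sigma_1^{1/2}-\Sigma_2^{1/2}\|^2\lesssim\|\Sigma_1-\Sigma_2\|$ feeds a term $\int_0^t\sqrt{g_N(s)}\,\rmd s$ (and an inhomogeneous $O(N^{-1/2})$ from the diffusion part) into the Gr\"onwall recursion, and an inequality of the form $g(t)\le a+C\int_0^t g(s)\,\rmd s+C\int_0^t\sqrt{g(s)}\,\rmd s$ does \emph{not} close to $O(a)$: its solution is $O(1)$ regardless of how small $a$ is (set $u=\sqrt g$ and integrate $u'\le Cu/2+C/2$). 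So your ``delicate point'' is actually fatal for the $N^{-1}$ rate. The paper avoids this entirely by proving in \Cref{prop:h_reg}(b) that $\rmS=\Sigma^{1/2}$ is itself Lipschitz in $(w,\mu)$, with no lower bound on $\Sigma$ needed. The mechanism is not Powers--St{\o}rmer but a $C^2$ estimate: because $\Sigma(w,\mu)=\int\xi\xi^{\top}\,\rmd\pi$ with $\xi$ twice differentiable in $(w,\mu)$, the interpolation $t\mapsto\Sigma(tw_1+(1-t)w_2,\,t\mu_1+(1-t)\mu_2)$ has $\|\funsig''(t)\|\lesssim(\|w_1-w_2\|+\cdots)^2$, and Stroock's square-root lemma \cite[Lemma~3.2.3, Theorem~5.2.3]{stroock2006multi} then yields Lipschitz continuity of the square root. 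This is precisely why \Cref{assum:all} demands \emph{three} derivatives of $\ell$, $F$, $V$ (and the high moment $\int(\Phi^{10}+\Psi^4)\,\rmd\pi<\infty$): the extra order of smoothness is spent on making $\Sigma^{1/2}$ Lipschitz, after which the Gr\"onwall argument is the standard linear one and delivers $C_{m,T}N^{-1}$.
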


\begin{proof}
  The proof is postponed to \Cref{sec:proofs-crefthm:-cref}
\end{proof}



\section{Technical results}
\label{sec:mean-field-prop}


In this section, we derive technical results needed to establish \Cref{thm:empi_conv_cont}, \Cref{thm:empi_conv_cont_one}, \Cref{thm:empi_conv_cont_sgld} and \Cref{thm:empi_conv_cont_one_sgld}. In particular, we are interested in the regularity properties of
the mean field $h$ and the diffusion matrix $\Sigma$ under
\rref{assum:all}. We recall that in this setting, for any
$w \in \rset^p$, $\mu \in \MRp$, $(x,y) \in \msx \times \msy$, we have
\begin{align}
  h(w, \mu)&= \tilde{h}(w,\mu) - \nabla V(w) \eqsp ,\\
  \text{ with  }\quad   \tilde{h}(w,\mu)& = -\int_{\msx \times \msy}  \partial_1 \ell \parenthese{\int_{\bR^p} F(\zeta,x) \,\dd \mu(\zeta) ,y}
                                          \nablaw F(w,x) \, \dd \datap(x,y) \eqsp, \\
  \xi(w,\mu,x,y)&=-\tilde{h}(w,\mu)- \partial_1 \ell \parenthese{ \int_{\bR^p}
                  F(\zeta,x) \,\dd \mu(\zeta) ,y} \nablaw F(w,x) \eqsp ,\\
  \Sigma(w,\mu)&=\int_{\msx\times\msy} \{\xi\xi\transpose\}(w,\mu,x,y)  \dd \datap(x,y)\eqsp, \qquad
                 \rmS(w,\mu) = \Sigma^{1/2}(w,\mu) \eqsp. \label{eq:h_supp_vrai}
\end{align}
Note that by \Cref{assum:all}-\ref{item:assuml}, we obtain the
following estimate used in the proof of the results of this Section:
for any $\rmy, y \in \rset$
  \begin{equation}
    \label{eq:majo_partial_ell}
    \abs{\partial_1 \ell(\rmy, y)} \leq \abs{\partial_1 \ell(0, y)} + \Psi(y) \abs{\rmy} \leq 2 \Psi(y) \max(1, \abs{\rmy})  \eqsp .
  \end{equation}
  In addition, note that under \Cref{assum:all}-\ref{item:assumV},
  there exists $\Ktt \geq 0$ such that for any $w \in \rset^p$
  \begin{equation}
    \label{eq:V_ineq}
\norm{\nabla^2 V(w)} + \norm{\rmD^3 V(w)} \leq \Ktt \eqsp , \qquad \norm{\nabla V(w)} \leq \Ktt (1 + \norm{w}) \eqsp .
\end{equation}

Let $G: \ \rset^p \times \msx \times \msy \to \rset$ given for any $(x,y) \in \msx \times \msy$ and $w \in \rset^p$ by
\begin{equation}
  \label{eq:G_def}
    G(w, x,y) = \defEnsLigne{\Phi^4(x) + \Psi^2(y)} F(w, x)  \eqsp .
  \end{equation}

  We now state our main regularity/boundedness proposition.
\begin{proposition}
  \label{prop:h_reg}
  Assume \rref{assum:all}.
Then, there exists $\Lip \geq 0$
  such that the following hold.
  \begin{enumerate}[wide, labelwidth=!, labelindent=0pt,label=(\alph*)]
  \item For any $\mu_1, \mu_2 \in \MRp$ and $w_1, w_2 \in \rset^p$ we have
    \begin{multline}
    \norm{h(w_1,\mu_1) - h(w_2,\mu_2)} \\ \leq \Lip \defEns{\norm{w_1 - w_2} + \parenthese{\int_{\msx\times\msy}\norm{\mu_1[G(\cdot, x, y)] - \mu_2[G(\cdot, x, y)]}^2 \rmd \datap(x,y)}^{1/2}} \eqsp .       \label{eq:lip_h}
  \end{multline}
  In addition, we have for any $\mu \in \Pens(\rset^p)$ and $w \in \rset^p$, $\normLigne{h(w,\mu)} \leq \Lip(1+\normLigne{w})$ and $\normLigne{\bar{h}(w, \mu)} \leq \Lip$.
  \item For any $\mu_1, \mu_2 \in \MRp$, $w_1, w_2 \in \rset^p$ and $i,j \in \{1, \dots, p \}$ we have
    \begin{multline}
    \abs{\rmS_{i,j}(w_1, \mu_1) - \rmS_{i,j}(w_2, \mu_2)} \\ \leq \Lip \defEns{\norm{w_1 - w_2} + \parenthese{\int_{\msx \times \msy }\norm{\mu_1[G(\cdot, x, y)] - \mu_2[G(\cdot, x, y)]}^2 \rmd \datap(x,y)}^{1/2}} \eqsp . \label{eq:lip_sig}
  \end{multline}
  In addition, we have for any $\mu \in \Pens(\rset^p)$,
  $w \in \rset^p$ and $i, j \in \{1, \dots, p\}$,
  $\absLigne{\rmS_{i,j}(w, \mu)} \leq \Lip$.
\item For any $\mu \in \MRp$ and $w \in \rset^p$,
  $\int_{\msx \times \msy} \norm{\xi(w, \mu, x, y)}^2 \rmd \datap(x,y)
  \leq p^2\Lip^2$.
  \end{enumerate}
\end{proposition}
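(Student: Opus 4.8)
The plan is to reduce the entire proposition to a short list of pointwise inequalities between $\Phi$ and $\Psi$ together with two finite moments, handling the diffusion coefficient $\rmS=\Sigma^{1/2}$ through the classical fact that a $\rmC^{1,1}$ non-negative-definite matrix field has a Lipschitz square root. The preliminary tools I would record, all consequences of $\Phi,\Psi\geq 1$, \rref{assum:all} and \eqref{eq:majo_partial_ell}, are: \textbf{(i)} the pointwise estimates $\Psi(y)\Phi(x)/(\Phi^4(x)+\Psi^2(y))\leq\tfrac12$, $1/(\Phi^4(x)+\Psi^2(y))\leq\tfrac12$ and $\Psi^2(y)\Phi^3(x)/(\Phi^4(x)+\Psi^2(y))^2\leq\tfrac14$ (arithmetic--geometric inequality); \textbf{(ii)} $c_0:=2\int\Psi(y)\Phi^2(x)\,\rmd\pi(x,y)<\infty$ and $\int\Psi^2(y)\Phi^4(x)\,\rmd\pi(x,y)<\infty$ (Young's inequality with \Cref{assum:all}-\ref{item:compact}); \textbf{(iii)} $\absLigne{\mu[F(\cdot,x)]}\leq\Phi(x)$ for every $\mu$, hence $\absLigne{\partial_1\ell(\mu[F(\cdot,x)],y)}\leq 2\Psi(y)\Phi(x)$ and $\absLigne{\partial_1\ell(\mu[F(\cdot,x)],y)}\normLigne{\nablaw F(w,x)}\leq 2\Psi(y)\Phi^2(x)$; \textbf{(iv)} the identity $\mu_1[G(\cdot,x,y)]-\mu_2[G(\cdot,x,y)]=(\Phi^4(x)+\Psi^2(y))(\mu_1[F(\cdot,x)]-\mu_2[F(\cdot,x)])$, valid for each $(x,y)$, which lets me trade a factor $\absLigne{\mu_1[F(\cdot,x)]-\mu_2[F(\cdot,x)]}$ for $\absLigne{\mu_1[G(\cdot,x,y)]-\mu_2[G(\cdot,x,y)]}/(\Phi^4(x)+\Psi^2(y))$.

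Item (c) is then immediate: by (iii), $\normLigne{\xi(w,\mu,x,y)}\leq\normLigne{\tilde{h}(w,\mu)}+2\Psi(y)\Phi^2(x)\leq c_0+2\Psi(y)\Phi^2(x)$ (the bound on $\tilde{h}$ being the same integral), so $\int\normLigne{\xi(w,\mu,x,y)}^2\rmd\pi\leq 2c_0^2+8\int\Psi^2\Phi^4\,\rmd\pi$, uniformly in $(w,\mu)$. For item (a) I would write $h=\tilde{h}-\nabla V$ and take $\bar h=\tilde{h}$: $\nabla V$ is $\Ktt$-Lipschitz by \eqref{eq:V_ineq}, and in $\tilde{h}(w_1,\mu_1)-\tilde{h}(w_2,\mu_2)$ I add and subtract $\partial_1\ell(\mu_1[F(\cdot,x)],y)\nablaw F(w_2,x)$ under the integral. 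The first resulting term is $\leq c_0\normLigne{w_1-w_2}$ using $\normLigne{\rmD_w^2 F}\leq\Phi$; the second, using $\absLigne{\partial_1\ell(a,y)-\partial_1\ell(b,y)}\leq\Psi(y)\absLigne{a-b}$ (\Cref{assum:all}-\ref{item:assuml}), $\normLigne{\nablaw F(w_2,x)}\leq\Phi$, then (iv), Cauchy--Schwarz and (i), is $\leq\tfrac12\big(\int\normLigne{\mu_1[G(\cdot,x,y)]-\mu_2[G(\cdot,x,y)]}^2\rmd\pi\big)^{1/2}$. This gives \eqref{eq:lip_h}, and the growth bounds follow from $\normLigne{\tilde{h}(w,\mu)}\leq c_0$ and $\normLigne{\nabla V(w)}\leq\Ktt(1+\normLigne{w})$.

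For item (b), boundedness of $\rmS$ follows from (c): $\Sigma(w,\mu)\succeq 0$ with $\trace\Sigma(w,\mu)=\int\normLigne{\xi(w,\mu,x,y)}^2\rmd\pi\leq p^2\Lip^2$, so $\absLigne{\rmS_{i,j}(w,\mu)}\leq\opnorm{\Sigma(w,\mu)}^{1/2}\leq(\trace\Sigma(w,\mu))^{1/2}\leq p\Lip$. For the Lipschitz bound I would invoke the lemma that a $\rmC^2$ path $\ccint{0,1}\ni t\mapsto A(t)\in\mathbb{S}_p(\rset)$ with $A(t)\succeq 0$ satisfies $\opnorm{A(1)^{1/2}-A(0)^{1/2}}\leq c_p\,(\sup_t\opnorm{A''(t)})^{1/2}$ with $c_p$ depending only on $p$, and apply it along straight segments. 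For the $w$-variable, with $A(t)=\Sigma((1-t)w_1+tw_2,\mu)$, differentiating twice under the integral (legitimate from the uniform bounds and the three-times differentiability of $F$ --- which is precisely why $\rmD_w^3 F$ is assumed bounded) gives $\normLigne{\partial_t\xi}\leq(c_0+2\Psi\Phi^2)\normLigne{w_1-w_2}$ and $\normLigne{\partial_t^2\xi}\leq(c_0+2\Psi\Phi^2)\normLigne{w_1-w_2}^2$, whence $\sup_t\opnorm{A''(t)}\leq C\normLigne{w_1-w_2}^2$ (with $C$ built from $c_0$, $\int\Psi^2\Phi^4\,\rmd\pi$ and $p\Lip$) and the lemma yields the $\normLigne{w_1-w_2}$ term of \eqref{eq:lip_sig}. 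For the measure variable, with $A(t)=\Sigma(w,(1-t)\mu_1+t\mu_2)$, the key structural point is that $\mu_t[F(\cdot,x)]$ is affine in $t$, so $\partial_t\xi$ and $\partial_t^2\xi$ bring down only $\partial_1^2\ell,\partial_1^3\ell$ (bounded by $\Psi$) and powers of $\delta(x):=\mu_2[F(\cdot,x)]-\mu_1[F(\cdot,x)]$, which I rewrite via (iv) as $\delta(x)^k=(\mu_1[G(\cdot,x,y)]-\mu_2[G(\cdot,x,y)])^k/(\Phi^4(x)+\Psi^2(y))^k$; after these substitutions one gets $\sup_t\opnorm{A''(t)}\leq C\,D^2$ with $D:=\big(\int\normLigne{\mu_1[G(\cdot,x,y)]-\mu_2[G(\cdot,x,y)]}^2\rmd\pi\big)^{1/2}$, and the lemma gives the $D$ term. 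A triangle inequality then yields \eqref{eq:lip_sig} and its entrywise consequence; taking $\Lip$ to be the largest of the finitely many constants produced above makes one $\Lip$ work in (a), (b) and (c).

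The step I expect to be the main obstacle is the $\mu$-bound on $A''$ in item (b). If one estimates the cross term $\xi\cdot\partial_t^2\xi$ by bounding $\Psi\Phi/(\Phi^4+\Psi^2)^2$ on its own and only afterwards pairs the surviving factor $(\mu_1[G(\cdot,x,y)]-\mu_2[G(\cdot,x,y)])^2$ against the $\Psi(y)\Phi^2(x)$ growth of $\xi$, one obtains merely $\opnorm{A''(t)}\lesssim D^2+D$ --- hence only a useless $\tfrac12$-Hölder modulus for $\rmS$ --- and moreover needs moments of $\Phi$ of order far above $10$, which \Cref{assum:all}-\ref{item:compact} does not supply. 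The fix, and the one genuinely delicate piece of bookkeeping, is to keep the denominators $(\Phi^4(x)+\Psi^2(y))^k$ coming from the $\delta(x)^k$-substitution multiplied together with the $\Psi(y)\Phi^2(x)$ growth of $\xi$ inside the integrand before estimating, so that by (i)--(iii) the high-degree monomials in $\Phi,\Psi$ cancel against the weight built into $G$ and the remaining contributions are controlled by $c_0$ and $\int\Psi^2\Phi^4\,\rmd\pi$; this is exactly what upgrades the modulus of $\rmS$ from Hölder to Lipschitz in the metric induced by $G$.
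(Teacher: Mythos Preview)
Your proposal is correct and follows essentially the same route as the paper: the same add-and-subtract decomposition for $\tilde h$, the same identity (iv) exchanging $\mu_1[F]-\mu_2[F]$ for the $G$-metric, and the same reliance on the Stroock--Varadhan square-root lemma (their citation is \cite[Lemma~3.2.3, Theorem~5.2.3]{stroock2006multi}) to deduce Lipschitz regularity of $\rmS=\Sigma^{1/2}$ from a quadratic bound on $\funsig''$. The only cosmetic difference is that the paper applies the square-root lemma along the single joint path $t\mapsto(tw_1+(1-t)w_2,\,t\mu_1+(1-t)\mu_2)$, whereas you separate into a $w$-path and a $\mu$-path and finish with the triangle inequality; both yield the same estimate, and your final paragraph correctly identifies the bookkeeping needed to keep the exponents of $\Phi,\Psi$ within \Cref{assum:all}-\ref{item:compact}.
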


\begin{proof}
  \begin{enumerate}[label=(\alph*),wide, labelwidth=!, labelindent=0pt]
  \item
    \label{item:bound_h}
    First, we show that \eqref{eq:lip_h} holds. Note that by
    the triangle inequality and \eqref{eq:h_supp_vrai}, we only need
    to consider $h \leftarrow \tilde{h}$ and $h \leftarrow V$. The
    case $h \leftarrow V$ is straightforward using \eqref{eq:V_ineq}.
    We now deal with the first case. For any $w_1, w_2 \in \rset^p$
    and $\mu_1, \mu_2 \in \MRp$, consider the decomposition,
    \begin{equation}
      \label{eq:decomp_lip_h}
      \normLigne{\tilde{h}(w_1, \mu_1) - \tilde{h}(w_2, \mu_2)} \leq  \normLigne{\tilde{h}(w_1, \mu_1)
        - \tilde{h}(w_2, \mu_1)} + \normLigne{\tilde{h}(w_2, \mu_1) - \tilde{h}(w_2, \mu_2)} \eqsp.
    \end{equation}
    In what follows, we bound separately the two terms in the
    right-hand side.  Using \rref{assum:all}-\ref{item:assuml},
    \rref{assum:all}-\ref{item:assumf}, \eqref{eq:h_supp_vrai} and \eqref{eq:majo_partial_ell} we have for any
    $w_1, w_2 \in \rset^p$ and $\mu_1 \in \MRp$
  \begin{align}
    \normLigne{\tilde{h}(w_1, \mu_1) - \tilde{h}(w_2, \mu_1)} &\leq \left\| \int_{\msx \times \msy}  \partial_1 \ell(\mu_1[F( \cdot , x)] ,y) \nablaw F(w_1,x) \, \dd \datap(x,y) \right. \\  & \qquad   \left. - \int_{\msx \times \msy}  \partial_1 \ell(\mu_1[F( \cdot , x)] ,y) \nablaw F(w_2,x) \, \dd \datap(x,y) \right\|  \\
    &  \leq \int_{\msx \times \msy}  \abs{\partial_1 \ell(\mu_1[F(\cdot, x)], y)} \Phi(x) \rmd \datap(x,y) \norm{w_1 - w_2} \\
    &    \leq   \int_{\msx \times \msy} \Psi(y)\Phi(x) \parenthese{1 + \abs{\mu_1[F(\cdot, x)]}}\dd\datap(x,y)\norm{w_1 - w_2}  \\
    &   \leq  2 \int_{\msx \times \msy} \Psi(y)\Phi^2(x)\dd\datap(x,y)\norm{w_1 - w_2}  \eqsp .     \label{eq:lip_weights}
  \end{align}
  Using \rref{assum:all}-\ref{item:assuml},
  \rref{assum:all}-\ref{item:assumf}, \eqref{eq:h_supp_vrai} and the
  Cauchy-Schwarz inequality, we also have for any $w_1 \in \rset^p$
  and $\mu_1, \mu_2 \in \MRp$
  \begin{align}
    &\normLigne{\tilde{h}(\mu_1, w_1) - \tilde{h}(\mu_2, w_1)} \\
&  \qquad  \leq \left\| \int_{\msx \times \msy}  \{\partial_1 \ell(\mu_1[F( \cdot , x)] ,y) \nablaw F(w_1,x) -\partial_1 \ell(\mu_2[F( \cdot , x)] ,y) \nablaw F(w_1,x) \}\, \dd \datap(x,y) \right\| \\
    & \qquad \leq \int_{\msx \times \msy} \abs{\partial_1 \ell(\mu_1[F(\cdot, x)], y) - \partial_1\ell(\mu_2[F(\cdot, x)],y)} \norm{\nabla_w F(w_1, x)} \rmd \datap(x,y) \\
    & \qquad \leq \int_{\msx \times \msy} \Psi(y) \norm{\mu_1[F(\cdot,x)] - \mu_2[F(\cdot,x)]} \Phi(x) \rmd \datap(x,y) \\
    & \qquad \leq \parenthese{\int_{\msx \times \msy} \Psi^2(y) \Phi^2(x) \rmd \datap(x,y)}^{1/2} \parenthese{\int_{\msx} \norm{\mu_1[F(\cdot, x)] - \mu_2[F(\cdot, x)]}^2 \rmd \datap(x)}^{1/2}  \eqsp.    \label{eq:lip_emp}
  \end{align}
  Combining \eqref{eq:G_def}, \eqref{eq:lip_weights}, \eqref{eq:lip_emp}, the fact that
  for any $a,b \geq 0$, $2ab \leq a^2 + b^2$ and
  \Cref{assum:all}-\ref{item:compact}, we obtain that there exists
  $\Lip_1 \geq 0$ such that for any $\mu_1, \mu_2 \in \MRp$ and
  $w_1, w_2 \in \rset^p$ we have
  \begin{multline}
    \normLigne{\tilde{h}(w_1,\mu_1) - \tilde{h}(w_2, \mu_2)} \\ \leq
    \Lip_1 \defEns{\norm{w_1 - w_2} +
      \parenthese{\int_{\msx\times\msy}\norm{\mu_1[G(\cdot, x,y)] -
          \mu_2[G(\cdot, x,y)]}^2 \rmd \datap(x,y)}^{1/2}} \eqsp .
  \end{multline}
  In addition, using \Cref{assum:all}-\ref{item:assumf} and
  \eqref{eq:majo_partial_ell}, we have for any $w \in \rset^p$,
  $\mu \in \MRp$, $x \in \msx$ and $y \in \msy$
  \begin{equation}
    \abs{\partial_1 \ell(\mu[F(\cdot, x)], y)} \norm{\nabla_w F(w, x)} \leq   \Psi(y)\Phi(x)(1 + \Phi(x)) \leq 2 \Psi(y)\Phi^2(x) \eqsp .
                                                                             \label{eq:majo_xy}
  \end{equation}
  Therefore, combining this result and \eqref{eq:h_supp_vrai}, we get
  that for any $w \in \rset^p$ and $\mu \in \MRp$
  \begin{equation}
  	\label{eq:boundh}
    \normLigne{\tilde{h}(w,\mu)} \leq  \int_{\msx \times \msy} 2 \Psi(y)\Phi^2(x) \dd \datap(x,y)\eqsp .
  \end{equation}
  Using the fact that for any $a,b \geq 0$, $2ab \leq a^2 + b^2$ and
  \Cref{assum:all}-\ref{item:compact}, there exists $\Lip_2 \geq 0$ such that
  for any $w \in \rset^p$ and $\mu \in \MRp$,
  \begin{equation}
    \label{eq:tilde_h_bound}
   \normLigne{\tilde{h}(w, \mu)} \leq \Lip_2
  \end{equation}
\item Second, we first show that there exists $\Lip_3 \geq 0$ such
  that for any $\mu \in \Pens(\rset^p)$, $w \in \rset^p$ and
  $i, j \in \{1, \dots, p\}$,
  $\absLigne{\rmS_{i,j}(w, \mu)} \leq \Lip$. Let
  $i,j \in \{1, \dots, p\}$. We have for any $w \in \rset^p$ and
  $\mu \in \MRp$
  \begin{equation}
    \label{eq:majo_beta_uno}
    \abs{\rmS_{i,j}(w, \mu)} \leq \| \rmS(w, \mu) \| \leq \trace^{1/2}\parenthese{\Sigma(w, \mu)}\eqsp .
  \end{equation}
  Similarly to \eqref{eq:majo_xy}, using \eqref{eq:h_supp_vrai}, \eqref{eq:tilde_h_bound}, the
  fact that for any $a,b \geq 0$, $(a+b)^2 \leq 2 (a^2 + b^2)$ and the
  Cauchy-Schwarz inequality, we get for any $w \in \rset^p$ and
  $\mu \in \MRp$
  \begin{equation}
    \label{eq:majo_beta_duo}
    \trace\parenthese{\Sigma(w, \mu)} \leq  \int_{\msx \times \msy}  \norm{\xi(w,\mu, x, y)}^2 \rmd \datap(x,y) \leq 2 \int_{\msx \times \msy} \defEnsLigne{\Lip_2^2 + 2 \Psi^2(y)\Phi^4(x)} \rmd \datap(x,y) \eqsp .
  \end{equation}
  Combining \eqref{eq:majo_beta_uno}, \eqref{eq:majo_beta_duo} and
  \Cref{assum:all}-\ref{item:compact}, there exists $\Lip_3 \geq 0$
  such that for any $w \in \rset^p$ and $\mu \in \MRp$, $\max_{1\leq i,j\leq p}\absLigne{\rmS_{i,j}(w, \mu)} \leq \Lip_3$.

  We now show that \eqref{eq:lip_sig} holds.  For any
  $w_1, w_2 \in \rset^p$, $\mu_1, \mu_2 \in \MRp$ define
  $\funsig: \ccint{0,1} \to \mathbb{S}_p(\rset)$ for any
  $t \in \ccint{0,1}$ by
  \begin{equation}
    \label{eq:g_def}
    \funsig(t) = \Sigma(t w_1 + (1-t) w_2, t \mu_1 + (1-t) \mu_2) \eqsp .
  \end{equation}
  For ease of notation, the dependency of $\funsig$ with respect to
  $w_1, w_2 \in \rset^p$ and $\mu_1, \mu_2 \in \Pens(\rset^p)$ is
  omitted.  In what follows, we show that for any
  $w_1, w_2 \in \rset^p$, $\mu_1, \mu_2 \in \MRp$,
  $\funsig \in \rmc^2(\ccint{0,1}, \mathbb{S}_p(\rset))$ and that there
  exists $\Lip_4 \geq 0$ such that for any $t \in \ccint{0,1}$
  \begin{equation}
    \label{eq:g_seconde_bound}
    \norm{\funsig''(t)} \leq \Lip_4 \defEns{\norm{w_1 - w_2} + \parenthese{\int_{\msx \times \msy}\norm{\mu_1[G(\cdot, x,y)] - \mu_2[G(\cdot, x,y)]}^2 \rmd \datap(x,y)}^{1/2}}^2 \eqsp ,
  \end{equation}
  which will conclude the proof of \eqref{eq:lip_sig} upon using a
  straightforward adaptation of \cite[Lemma 3.2.3, Theorem
  5.2.3]{stroock2006multi}. We conclude the proof of \Cref{prop:h_reg}
  upon letting $\Lip = \max(\Lip_1, \Lip_2, \Lip_3, \Lip_4)$.

  For any $t \in \ccint{0,1}$, let
  $\mu_t = \mu_1 + t(\mu_2 - \mu_1) \in \MRp$ and
  $w_t = w_1 + t(w_2 - w_1) \in \rset^p$ and for any
  $(x,y) \in \msx \times \msy$ define
  \begin{equation}
    \label{eq:f_definition}
    \begin{aligned}
    &\rmf (t,x,y) = \partial_1 \ell(\mu_t[F(\cdot, x)], y) \nabla_w F(w_t, x) \eqsp , \\
    &\tilde{\rmf}(t,x,y) = \xi(w_t, \mu_t, x, y) = \int_{\msx \times \msy} \rmf(t,x,y) \rmd \datap(x,y) - \rmf(t,x,y)   \eqsp .
  \end{aligned}
  \end{equation}
  The rest of the proof consists in showing that $\funsig$ is twice
  differentiable with dominated derivatives using the Lebesgue
  convergence theorem.


  By \eqref{eq:h_supp_vrai}, \eqref{eq:majo_xy} and
  \eqref{eq:tilde_h_bound}, we get that for any $w_1,w_2 \in \rset^p$,
  $\mu_1, \mu_2 \in \Pens(\rset^p)$, $(x,y) \in \msx \times \msy$
  and $t \in \ccint{0, 1}$
\begin{equation}
  \label{eq:first_bound_g}
  \norm{\rmf(t,x,y)} \leq 2\Psi(y) \Phi^2(x) \eqsp , \qquad \normLigne{\tilde{\rmf}(t,x,y)} \leq \Lip_2 + 2 \Psi(y) \Phi^2(x) \eqsp .
\end{equation}

Using \eqref{eq:f_definition}, \Cref{assum:all}-\ref{item:assuml} and
\Cref{assum:all}-\ref{item:assumf}, we have that for any
$(x,y) \in \msx \times \msy$,
$\rmf(\cdot, x, y) \in \rmc^1(\ccint{0,1}, \rset^p)$ and for any
$w_1,w_2 \in \rset^p$, $\mu_1, \mu_2 \in \Pens(\rset^p)$,
$(x,y) \in \msx \times \msy$ and $t \in \ccint{0, 1}$
\begin{multline}
  \label{eq:def_f_prime}
    \partial_1 \rmf(t,x,y) = \partial_1^2 \ell(\mu_t[F(\cdot, x)], y) \nabla_w F(w_t, x)  \parenthese{\mu_2[F(\cdot, x)] - \mu_1[F(\cdot, x)]} \\ +  \partial_{1} \ell(\mu_t[F(\cdot, x)], y) \nabla_w^2 F(w_t, x) (w_2 - w_1) \eqsp .
  \end{multline}
  Using \Cref{assum:all}-\ref{item:assuml},
  \Cref{assum:all}-\ref{item:assumf}, \eqref{eq:G_def} and \eqref{eq:majo_partial_ell}, we
  get that for any $(x,y) \in \msx \times \msy$ and
  $t \in \ccint{0, 1}$
  \begin{equation}
    \label{eq:bound_g_prime}
    \norm{\partial_1 \rmf(t,x,y)} \leq 3 \Psi(y)\Phi^2(x) \parenthese{\norm{w_2 - w_1} +\norm{\mu_1[F(\cdot, x)] - \mu_2[F(\cdot, x)]} } \eqsp ,
  \end{equation}
  Similarly, using \eqref{eq:def_f_prime},
  \Cref{assum:all}-\ref{item:assuml} and
  \Cref{assum:all}-\ref{item:assumf}, we have that for any
  $(x,y) \in \msx \times \msy$,
  $\rmf(\cdot, x, y) \in \rmc^2(\ccint{0,1}, \rset^p)$ and for any
  $w_1, w_2 \in \rset^p$, $\mu_1, \mu_2 \in \Pens(\rset^p)$,
  $(x,y) \in \msx \times \msy$ and $t \in \ccint{0,1}$
  \begin{multline}
    \partial_1^2\rmf(t,x,y) = \partial_1^3 \ell(\mu_t[F(\cdot, x)], y) \nabla_w F(w_t, x)  \parenthese{\mu_2[F(\cdot, x)] - \mu_1[F(\cdot, x)]}^2 \\ +  2\partial_{1}^2 \ell(\mu_t[F(\cdot, x)], y) \nabla_w^2 F(w_t, x) (w_2 - w_1) \parenthese{\mu_2[F(\cdot, x)] - \mu_1[F(\cdot, x)]} \\
    + \partial_{1} \ell(\mu_t[F(\cdot, x)], y) \rmD_w^3 F(w_t, x) (w_2 - w_1)^{\otimes 2} \eqsp .
  \end{multline}
  Using \Cref{assum:all}-\ref{item:assuml},
  \Cref{assum:all}-\ref{item:assumf} and \eqref{eq:majo_partial_ell}
  and that for any $a,b \geq 0$, $2ab \leq a^2 + b^2$, we get that for
  any $(x,y) \in \msx \times \msy$ and $t \in \ccint{0, 1}$
  \begin{align}
    \norm{\partial_1^2 \rmf (t,x,y)} \leq 5 \Psi(y) \Phi^2(x) \parenthese{\norm{w_2 - w_1}^2 + \norm{\mu_1[F(\cdot, x)] - \mu_2[F(\cdot, x)]}^2} \eqsp .     \label{eq:bound_g_secon}
  \end{align}
  Combining \eqref{eq:f_definition}, \eqref{eq:bound_g_prime},
  \eqref{eq:bound_g_secon}, \Cref{assum:all}-\ref{item:compact} and
  the dominated convergence theorem, we get that for any
  $(x,y) \in \msx \times \msy$,
  $\tilde{\rmf}(\cdot, x, y) \in \rmc^2(\ccint{0,1}, \rset^p)$.  In
  addition, using \eqref{eq:f_definition}, \eqref{eq:first_bound_g},
  \eqref{eq:bound_g_prime}, \eqref{eq:bound_g_secon}, the
  Cauchy-Schwarz inequality and the fact that for any $a,b \geq 0$,
  $2ab \leq a^2 + b^2$, there exists $C \geq 0$, such that for any
  $w_1, w_2 \in \rset^p$, $\mu_1, \mu_2 \in \MRp$,
  $(x,y) \in \msx \times \msy$ and $t \in \ccint{0,1}$
  \begin{align}
    \normLigne{\tilde{\rmf}(t, x, y)} &\leq C \parenthese{\Phi^4(x)+\Psi^2(y)} \eqsp ,\\
    \normLigne{\partial_1 \tilde{\rmf}(t, x, y)} &\leq C \parenthese{\Phi^4(x)+\Psi^2(y)} \chi(w_1, w_2, \mu_1, \mu_2,x) \eqsp , \\
    \normLigne{\partial_1^2 \tilde{\rmf}(t, x, y)} &\leq C \parenthese{\Phi^4(x)+\Psi^2(y)} \chi^2(w_1, w_2, \mu_1, \mu_2,x) \eqsp ,     \label{eq:h_bound}
  \end{align}
  where
  \begin{multline}
    \label{eq:chi_def}
    \chi(w_1, w_2, \mu_1, \mu_2,x) = \norm{w_1 - w_2} \\ + \norm{\mu_1[F(\cdot, x)] - \mu_2[F(\cdot, x)]} + \parenthese{\int_{\msx \times \msy} \norm{\mu_1[G(\cdot, \tilde{x}, \tilde{y})] - \mu_2[G(\cdot, \tilde{x}, \tilde{y})]}^2\rmd \pi(\tilde{x}, \tilde{y})}^{1/2} \eqsp .
  \end{multline}

  Using \eqref{eq:g_def} and \eqref{eq:h_supp_vrai}, we have that for
  any $w_1, w_2 \in \rset^p$, $\mu_1, \mu_2 \in \MRp$,
  $t \in \ccint{0, 1}$
  \begin{equation}
    \funsig(t) = \int_{\msx \times \msy} \tilde{\rmf}(t,x,y) \tilde{\rmf}(t,x,y)^{\top} \rmd \datap(x,y) \eqsp .
  \end{equation}

  Combining this result, \eqref{eq:h_bound} and
  \Cref{assum:all}-\ref{item:compact} we get that for any
  $w_1, w_2 \in \rset^p$ and $\mu_1, \mu_2 \in \MRp$,
  $\funsig \in \rmc^2(\ccint{0,1}, \mathbb{S}_p(\rset))$ and, using
  the Cauchy-Schwarz inequality, there exist $C_1, C_2 \geq 0$ such that for any
  $w_1, w_2 \in \rset^p$ and $\mu_1, \mu_2 \in \MRp$,
  $t \in \ccint{0, 1}$ and $\rmu \in \rset^p$ with
  $\normLigne{\rmu} = 1$, we have
  \begin{align}
    \langle \rmu, \funsig''(t)  \rmu \rangle &= \int_{\msx \times \msy} \partial_1^2 \parenthese{\langle \rmu, \tilde{\rmf}(t,x,y) \rangle^2} \rmd \datap(x,y) \\
                                       &\leq 2 \int_{\msx \times \msy} \normLigne{\partial_1 \tilde{\rmf}(t,x,y)}^2 \rmd \datap(x,y) + 2 \int_{\msx \times \msy} \normLigne{\partial_1^2 \tilde{\rmf}(t,x,y)} \normLigne{\tilde{\rmf}(t,x,y)} \rmd \datap(x,y) \\
                                       &\leq C_1 \int_{\msx \times \msy} \parenthese{\Phi^{8}(x)+\Psi^4(y)} \chi^2(w_1, w_2,x,y) \rmd \datap(x,y)  \\
                                       & \leq C_2 \defEns{\norm{w_1- w_2} + \parenthese{\int_{\msx} \norm{\mu_1[G(\cdot, x, y)] - \mu_2[G(\cdot, x, y)]}^2 \rmd \datap(x,y)}^{1/2}}^2 \eqsp , \label{eq:ineq_a}
  \end{align}
  Therefore, we get that for any
  $w_1, w_2 \in \rset^p$, $\mu_1, \mu_2 \in \MRp$,
  $t \in \ccint{0, 1}$
  \begin{align}
    \norm{\funsig''(t)} &= \sup_{\rmu \in \rset^p, \norm{\rmu} = 1}\langle \rmu, \funsig''(t) \rmu \rangle \\
                  &\leq C \defEns{\norm{w_1- w_2} + \parenthese{\int_{\msx} \norm{\mu_1[G(\cdot, x, y)] - \mu_2[G(\cdot, x, y)]}^2 \rmd \datap(x,y)}^{1/2}}^2 \eqsp .
  \end{align}
  Combining this result and a straightforward adaptation of \cite[Lemma 3.2.3, Theorem 5.2.3]{stroock2006multi} we
  obtain that for any $w_1, w_2 \in \rset^p$, $\mu_1, \mu_2 \in \MRp$
    \begin{equation}
      \abs{\rmS_{i,j}(w_1, \mu_1) - \rmS_{i,j}(w_2, \mu_2)} \leq \Lip_4  \defEns{\norm{w_1- w_2} + \parenthese{\int_{\msx} \norm{\mu_1[G(\cdot, x, y)] - \mu_2[G(\cdot, x, y)]}^2 \rmd \datap(x,y)}^{1/2}} ,
  \end{equation}
  with $\Lip_4 = \sqrt{2C} p$.
\item Using \eqref{eq:h_supp_vrai}, we have for any $w \in \rset^p$ and
  $\mu \in \MRp$
  \begin{equation}
    \hspace{-0.2cm} \int_{\msx \times \msy} \norm{\xi(w, \mu, x, y)}^2 \rmd \datap(x,y) = \int_{\msx \times \msy} \trace\parenthese{\xi \xi^{\top}(w, \mu, x, y)} \rmd \datap(x,y)  = \sum_{i,j=1}^p \abs{\rmS_{i,j}(w, \mu)}^2 \leq p^2 \Lip^2 \eqsp .
  \end{equation}
\end{enumerate}
\end{proof}

\section{Quantitative propagation of chaos}
\label{sec:propagation-chaos}

\subsection{Existence of strong solutions to the particle SDE}
\label{sec:quant_prop}

In this section, for two functions
$A,B : \ \bigcup_{N \in \nsets} \defEns{\{1, \dots, N\} \times \rset_+
  \times (\rset^{p})^2 \times (\Pens_2(\rset^p))^2} \to \rset$, the
notation
$A_N(k,t,w_1,w_2,\mu_1,\mu_2) \lesssim B_N(k,t,w_1,w_2,\mu_1,\mu_2)$
stands for the statement that there exists $C \geq 0$ such that for
any $N \in \nsets$, $k \in \{1, \dots, N\}$, $t \in \rset_+$,
$w_1,w_2 \in \rset^p$, $\mu_1,\mu_2 \in \Pens_2(\rset^p)$,
$A_N(k, t,w_1,w_2,\mu_1,\mu_2) \leq C B_N(k, t,w_1,w_2,\mu_1,\mu_2)$, where $A_N$ and $B_N$ are
the restrictions of $A$ and $B$ to
$\{1, \dots, N\} \times \rset_+ \times (\rset^p)^2 \times
(\Pens_2(\rset^p))^2$.

We consider for $N \in \nsets$, $p$ dimensional particle system $(\bfw^{1:N}_t)_{t \geq 0}$ associated with the SDE: for
any $k \in \{1, \dots, N\}$
\begin{equation}
  \label{eq:sde_supp}
  \rmd \bfwkN_t = b_N(t, \bfwkN_t, \bfnu_t^N) \rmd t + \upsigma_N(t, \bfwkN_t, \bfnu_t^N) \rmd \bfB_t^{k} \eqsp , \qquad \bfnu_t^N = (1/N) \sum_{k=1}^N \updelta_{\bfwkN_t} \eqsp ,
\end{equation}
where $(\bfB_t^{k})_{k \in \nsets}$ are
independent $r$-dimensional Brownian motions and where
$(b_N)_{N\in\nsets}$ and $(\upsigma_N)_{N\in \nsets}$ are family of
measurable functions such that for any $N \in \nsets$,
$b_N : \ \rset_+ \times \rset^{p} \times \Pens_2(\rset^p) \to \rset^p$
and
$\upsigma_N : \ \rset_+ \times \rset^{p} \times \Pens_2(\rset^p) \to
\rset^{p \times r}$. We make the following assumption ensuring the
existence and uniqueness of solutions of \eqref{eq:sde_supp} for any
$N \in \nsets$. Consider in the sequel a measurable space
$(\msz, \mcz)$ and a  probability measure $\pi_{\msz}$ on this space.

\begin{assumptionB}
  \label{assum:propagation}
  There exist a measurable function
  $\rmg: \ \rset^p \times \msz \to \rset$, $\Mip_1 \geq 0$
  and $\mu_0 \in \Pens_2(\rset^p)$ such that for any $N \in \nsets$,
  the following hold.
    \begin{enumerate}[label=(\alph*), wide, labelwidth=!, labelindent=0pt]
    \item For any $w_1, w_2 \in \rset^p$ and $z \in \msz$ we have
      \begin{equation}
        \norm{\rmg(w_1,z) - \rmg(w_2,z)} \leq \zeta(z)  \norm{w_1 - w_2} \eqsp , \quad \norm{\rmg(w_1,z)} \leq \zeta(z) \eqsp , \text{       with $\int_{\msz} \zeta^2(z) \rmd \pi_{\msz}(z) < +\infty$} \eqsp.
      \end{equation}
    \item
      $b_N \in \rmc( \rset_+ \times \rset^p \times \MRpdeux, \rset^p)$
      and
      $\upsigma_N \in \rmc( \rset_+ \times \rset^p \times \MRpdeux,
      \rset^{p \times r})$.
    \item For any
      $w_1,w_2 \in \rset^p$ and $ \mu_1,\mu_2 \in \Pens_2(\rset^p)$
  \begin{align}
      \label{eq:lip_ineq}
    &\textstyle{\sup_{t \geq 0}\{ \norm{b_N(t, w_1, \mu_1) - b_N(t, w_2, \mu_2)}} + \norm{\upsigma_N(t, w_1, \mu_1) - \upsigma_N(t, w_2, \mu_2)} \}\\
    \nonumber
    & \qquad \qquad \qquad  \leq \Mip_1 \defEns{\norm{w_1 - w_2} + \parenthese{\int_{\msz} \abs{\mu_1[\rmg(\cdot, z)] - \mu_2[\rmg(\cdot, z)]}^2 \rmd \pi_{\msz}(z)}^{1/2} } \eqsp,
    \\
    \nonumber
    &\textstyle{\sup_{t \geq 0}}  \defEns{\norm{b_N(t, 0, \mu_0)} + \norm{\upsigma_N(t, 0, \mu_0)}} \leq \Mip_1 \eqsp .
  \end{align}
  \end{enumerate}
\end{assumptionB}

\begin{assumptionB}
  \label{assum:limit_b_sig}
  There exist $\Mip_2 \geq0$, $\upkappa > 0$,
  $b \in \rmc( \rset_+ \times \rset^p \times \MRpdeux, \rset^p)$ and
  $\upsigma \in \rmc( \rset_+ \times \rset^p \times \MRpdeux, \rset^{p
    \times r})$ such that
  \begin{equation}
    \sup_{t\geq 0, w \in \rset^p, \mu \in \Pens_2(\rset^p)}\defEns{ \norm{b_N(t, w, \mu) - b(t, w, \mu)} + \norm{\upsigma_N(t, w, \mu) - \upsigma(t, w, \mu)}} \leq \Mip_2 N^{-\upkappa} \eqsp .
  \end{equation}
\end{assumptionB}

Note that under \Cref{assum:propagation}, we have the following
estimate which will be used in our next result,
\begin{equation}
  \label{eq:majoration_b_sig}
  \norm{b_N(t, w, \mu)} + \norm{\upsigma_N(t, w, \mu)} \lesssim  \parentheseDeux{1 + \norm{w} + \parenthese{\int_{\rset^p} (1 + \norm{\tilde{w}}^2) \rmd \mu(\tilde{w})}^{1/2}} \eqsp,
\end{equation}
\begin{multline}
\sup_{t \geq 0}\{ \norm{b_N(t, w_1, \mu_1) - b_N(t, w_2, \mu_2)} + \norm{\upsigma_N(t, w_1, \mu_1) - \upsigma_N(t, w_2, \mu_2)} \} \\  \lesssim
    \norm{w_1 - w_2} + \wassersteinD[2](\mu_1, \mu_2)  \eqsp.
\end{multline}

\begin{theorem}
  \label{thm:strong_sol_part}
  Assume \rref{assum:propagation}. Then for any $N \in \nsets$,
  \eqref{eq:sde_supp} admits a unique strong  solution. If in
  addition, there exists $m \geq 1$ such that
  $ \sup_{N \in \nsets} \sup_{k \in \{1, \dots, N\}}
  \expeLigne{\normLigne{\bfwkN_0}^{2m}} < +\infty$, then for any
  $T \geq 0$, there exists $C \geq 0$ such that
  \begin{equation}
     \sup_{N \in \nsets} \sup_{k \in \{1, \dots, N\}} \expe{\sup_{t \in \ccint{0,T}} \norm{\bfwkN_t}^{2m}} \leq C \eqsp .
  \end{equation}
\end{theorem}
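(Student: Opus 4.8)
The plan is to treat \eqref{eq:sde_supp}, for each fixed $N$, as a single It\^o SDE on the product space $(\rset^p)^N$ driven by the concatenated Brownian motion $(\bfB_t^1,\dots,\bfB_t^N)$, to which the classical strong existence and uniqueness theorem applies, and then to obtain the moment bound by a Gr\"onwall argument preceded by a localization.

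\emph{Existence and uniqueness.} Write $\mathbf{w} = (w^1,\dots,w^N) \in (\rset^p)^N$ and $\hat\nu(\mathbf{w}) = N^{-1}\sum_{k=1}^N\updelta_{w^k} \in \MRpdeux$, and consider the drift $\mathbf{w} \mapsto (b_N(t,w^k,\hat\nu(\mathbf{w})))_{k=1}^N$ together with the block-diagonal dispersion matrix whose $k$-th block is $\upsigma_N(t,w^k,\hat\nu(\mathbf{w}))$. By \rref{assum:propagation}(b) these are continuous in $(t,\mathbf{w})$. By \rref{assum:propagation}(a) and (c), the empirical-measure term in the right-hand side of \eqref{eq:lip_ineq} satisfies $\int_{\msz}\absLigne{\hat\nu(\mathbf{w}_1)[\rmg(\cdot,z)] - \hat\nu(\mathbf{w}_2)[\rmg(\cdot,z)]}^2\rmd\pi_{\msz}(z) \le \bigl(\int_{\msz}\zeta^2(z)\rmd\pi_{\msz}(z)\bigr)\,N^{-1}\sum_{k=1}^N\normLigne{w_1^k - w_2^k}^2$, so the drift and dispersion are globally Lipschitz in $\mathbf{w}$, uniformly in $t$ (equivalently, this follows from the Wasserstein-Lipschitz estimate displayed after \eqref{eq:majoration_b_sig} combined with \Cref{prop:fm_bound}); by \eqref{eq:majoration_b_sig} they grow at most linearly in $\mathbf{w}$. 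Hence the classical theory of It\^o stochastic differential equations with globally Lipschitz, linearly growing coefficients (see e.g.\ \cite[Chapter 5]{stroock2006multi}) yields a unique strong solution $(\bfwkN_t)_{t\ge0,\,k\in\{1,\dots,N\}}$.

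\emph{Moment bound.} Suppose $C_0 := \sup_N\sup_{k\le N}\expe{\normLigne{\bfwkN_0}^{2m}} < \infty$ for some $m\ge1$, and for $R>0$ define the stopping time $\tau_R = \inf\{t\ge0 : \max_{k\le N}\normLigne{\bfwkN_t}\ge R\}$, which increases to $+\infty$ almost surely as $R\to\infty$ by path-continuity. Starting from the integral form of \eqref{eq:sde_supp} on $\ccint{0,t\wedge\tau_R}$, I would bound the drift contribution by H\"older's inequality and the stochastic-integral contribution by the Burkholder--Davis--Gundy inequality, then apply the linear-growth estimate \eqref{eq:majoration_b_sig} raised to the power $2m$; by convexity this introduces a term $N^{-1}\sum_{j\le N}\normLigne{\bfw^{j,N}_s}^{2m}$, and bounding $N^{-1}\sum_{j\le N}\expe{\sup_{s'\le s\wedge\tau_R}\normLigne{\bfw^{j,N}_{s'}}^{2m}}$ by $u_R^N(s) := \max_{k\le N}\expe{\sup_{s'\le s\wedge\tau_R}\normLigne{\bfwkN_{s'}}^{2m}}$ leads to an inequality of the form
\begin{equation*}
  u_R^N(t) \;\le\; C_T\Bigl(1 + C_0 + \int_0^t u_R^N(s)\,\rmd s\Bigr), \qquad t\in\ccint{0,T},
\end{equation*}
with $C_T$ depending only on $T$, $m$, $p$ and $\Mip_1$, in particular not on $N$ or $R$. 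Since $u_R^N(t)\le R^{2m}<\infty$, Gr\"onwall's lemma applies and yields $u_R^N(T)\le C$ uniformly in $N$ and $R$; letting $R\to\infty$ and using Fatou's lemma (as $\sup_{t\le T\wedge\tau_R}\normLigne{\bfwkN_t}^{2m}$ increases to $\sup_{t\le T}\normLigne{\bfwkN_t}^{2m}$) gives the claimed uniform bound.

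\emph{Main obstacle.} The genuinely substantive points, neither of them hard, are: (i) verifying that the measure-dependent coefficients, once lifted to $(\rset^p)^N$, remain globally Lipschitz --- this is where \Cref{prop:fm_bound} is used, to bound $\wassersteinD[2]$ of empirical measures by the Euclidean distance between particle configurations; and (ii) the localization via $\tau_R$, which is what provides the a priori finiteness of $u_R^N$ required before Gr\"onwall's lemma can be invoked. Uniformity in $N$ is automatic since all the constants appearing in \rref{assum:propagation} are independent of $N$.
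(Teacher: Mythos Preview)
Your proof is correct. The existence and uniqueness argument coincides with the paper's: both lift \eqref{eq:sde_supp} to a single SDE on $(\rset^p)^N$, verify global Lipschitz continuity of the aggregated drift and diffusion via \Cref{prop:fm_bound} (or the equivalent direct estimate you write out), and invoke the classical result (the paper cites \cite[Theorem~2.9]{karatzas1991brownian}).

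For the moment bound the two routes differ in structure. The paper proceeds in two stages: it first applies It\^o's formula to the Lyapunov function $V_m(w)=1+\normLigne{w}^{2m}$, combines a generator estimate with localization, Fatou and Gr\"onwall to obtain the pointwise-in-time bound $\sup_{t\le T}\sup_{N,k}\expeLigne{\normLigne{\bfwkN_t}^{2m}}\le C$, and only then upgrades to $\expeLigne{\sup_{t\le T}\normLigne{\bfwkN_t}^{2m}}$ via a separate BDG step that feeds the first-stage bound back in. Your argument collapses these into a single pass: you apply BDG directly to $\sup_{t'\le t\wedge\tau_R}\normLigne{\bfwkN_{t'}}^{2m}$, close the Gr\"onwall loop on $u_R^N$, and let $R\to\infty$. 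Both are standard; yours is somewhat more economical, while the paper's separation makes the intermediate pointwise bound explicit. One small remark: the inequality $u_R^N(t)\le R^{2m}$ is not quite right when the initial data is unbounded (if $\max_k\normLigne{\bfwkN_0}\ge R$ then $\tau_R=0$ and the supremum equals $\normLigne{\bfwkN_0}$), but $u_R^N(t)\le R^{2m}+C_0<\infty$ does hold, and finiteness is all you need before invoking Gr\"onwall.
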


\begin{proof}
  First, we show that for any $N \in \nsets$, \eqref{eq:sde_supp}
  admits a unique strong  solution.  Let
  $\tb_N: \ \rset_+ \times (\rset^p)^N \to (\rset^p)^N$ and
  $\tupsigma_N: \ \rset_+ \times (\rset^p)^N \to (\rset^{p \times r})^N$
  given, setting $\nu^{N,w} =     (1/N)\sum_{j=1}^N \updelta_{w^{j, N}}$ for any $t \geq 0$ and $w^{1:N} \in (\rset^p)^N$, by
  \begin{align}
    & \tb_N(t, w^{1:N}) = \parenthese{b_N\parenthese{t, w^{k,N}, \nu^{N,w}}}_{k \in \{1, \dots, N\}} \eqsp ,  \tupsigma_N(t, w^{1:N}) = \parenthese{\upsigma_N\parenthese{t, w^{k,N},\nu^{N,w}}}_{k \in \{1, \dots, N\}} \eqsp.
  \end{align}
  Let $w_1^{1:N}, w_2^{1:N} \in (\rset^p)^N$. Using
 \Cref{assum:propagation},   \Cref{prop:fm_bound} and that for any
  $a,b \geq 0$, $(a+b)^{1/2} \leq a^{1/2} + b^{1/2}$, we have
  \begin{align}
    &\normLigne{b_N\parentheseLigne{t, w_1^{k,N}, \nu^{N,w_1}}
      - b_N\parentheseLigne{t, w_2^{k,N}, \nu^{N,w_2}}}   \lesssim \normLigne{w_1^{k,N}- w_2^{k,N}} + \wassersteinD[2]\parentheseLigne{\nu^{N,w_1}, \nu^{N,w_2}} \\
    & \qquad \qquad \lesssim \normLigne{w_1^{k,N}- w_2^{k,N}} + \textstyle{\parentheseLigne{N^{-1}\sum_{j=1}^N \normLigne{w_1^{j, N} - w_2^{j, N}}^2}^{1/2}}  \lesssim \normLigne{w^{1:N} - w_2^{1:N}}\eqsp .
  \end{align}
  Similarly, we have
  $\normLigne{\upsigma_N\parentheseLigne{t, w_1^{k,N}, \nu^{N,w_1}} -
    \upsigma_N\parentheseLigne{t, w_2^{k,N}, \nu^{N,w_2}}} \lesssim \normLigne{w^{1:N} - w_2^{1:N}}$.  Therefore, we
  obtain that for any $N \in \nsets$, $\tb_N$ and $\tupsigma_N$ are
  Lipschitz-continuous and using \cite[Theorem
  2.9]{karatzas1991brownian}, we get that there exists a unique strong
   solution to \eqref{eq:sde_supp}.  Let $m \geq 1$ and assume
  that
  $ \sup_{N \in \nsets} \sup_{k \in \{1, \dots, N\}}
  \expeLigne{\normLigne{\bfwkN_0}^{2m}} < +\infty$, we now show that
  for any $T \geq 0$, there exists $C \geq 0$ such that
  \begin{equation}
    \sup_{t \in \ccint{0,T}} \sup_{N \in \nsets} \sup_{k \in \{1, \dots, N\}} \expe{\norm{\bfwkN_t}^{2m}} \leq C \eqsp .
  \end{equation}
  Let $V_m: \ \rset^{p} \to \rset_+$ given for any $w \in \rset^p$ by
  $V_m(w) = 1 + \norm{w}^{2m}$. For any $w \in \rset^p$ we have
  \begin{equation}
    \norm{\nabla V_m(w)} = 2 m \norm{w}^{2m-1} \eqsp , \qquad \norm{\nabla^2 V_m(w)} \leq 2 m (2 m - 1) \norm{w}^{2m - 2} \eqsp .
  \end{equation}
  Combining this result with \eqref{eq:majoration_b_sig}, the
  Cauchy-Schwarz inequality and the fact that for any $a,b \geq 0$ and
  $n_1,n_2 \in \nset$, $a^{n_1}b^{n_2} \leq a^{n_1+n_2}+b^{n_1+n_2}$, we get that
  \begin{align}
    &\abs{\langle \nabla V_m(w), b_N(t,w, \mu) \rangle} + \abs{\langle \nabla^2 V_m(w), \upsigma_N \upsigma_N^{\top}(t,w, \mu) \rangle}
    \\ & \qquad \lesssim \parentheseDeux{1 + \norm{w} + \parenthese{\int_{\rset^p} (1 + \norm{\tilde{w}}^2) \rmd \mu(\tilde{w})}^{1/2}}\norm{\nabla V_m(w)}
    \\ & \qquad \qquad + \parentheseDeux{1 + \norm{w} + \parenthese{\int_{\rset^p} (1 + \norm{\tilde{w}}^2) \rmd \mu(\tilde{w})}^{1/2}}^2\norm{\nabla^2 V_m(w)}
    \\ & \qquad \lesssim \parentheseDeux{1 + \norm{w} + \parenthese{\int_{\rset^p} (1 + \norm{\tilde{w}}^2) \rmd \mu(\tilde{w})}^{1/2}}\norm{w}^{2m-1}
    \\ & \qquad \qquad + \parentheseDeux{1 + \norm{w}^2 + \int_{\rset^p} (1 + \norm{\tilde{w}}^2) \rmd \mu(\tilde{w})}\norm{w}^{2m - 2}
    \\ & \qquad \lesssim 1 + \norm{w}^{2m} + \parenthese{\int_{\rset^p} (1 + \norm{\tilde{w}}^2) \rmd \mu(\tilde{w})}^m
     \lesssim 1 + \norm{w}^{2m} + \int_{\rset^p} (1 + \norm{\tilde{w}}^{2m}) \rmd \mu(\tilde{w}) \eqsp . \label{eq:majo_m}
  \end{align}
  Now let $\tau_n^{N} = \inf \ensembleLigne{t \geq 0}{\normLigne{\bfw_t^{k,N}} \geq n \text{ for some $k\in  \{1,\ldots,N\}$}}$.
  Using Itô's lemma, \eqref{eq:majo_m} and \eqref{eq:sde_supp}, we have
  \begin{align}
   \expe{V_m(\bfw_{t \wedge \tau_n^N}^{k,N})}& = \expe{V_m(\bfw_{0 \wedge \tau_n^N}^{k,N})} + \expe{\int_0^{t \wedge \tau_n^N}\left\langle \nabla V_m(\bfw_s^{k,N}), b_N\parenthese{s,\bfw_s^{k,N}, \nubf^{N}_s} \right\rangle  \rmd s }
                             \\ & \qquad + (1/2) \expe{\int_0^{t \wedge \tau_n^N}\left\langle \nabla^2 V_m(\bfw_s^{k,N}), \upsigma_N \upsigma_N^{\top}\parenthese{s,\bfw_s^{k,N}, \nubf_s^N} \right\rangle \rmd s } \\
    &\lesssim \expe{V_m(\bfw_{0 \wedge \tau_n^N}^{k,N})} + \expe{\int_0^{t \wedge \tau_n^N} \defEns{V_m(\bfw_s^{k,N}) + (1/N) \sum_{j=1}^N V_m(\bfw_s^{j,N})} \rmd s }
  \end{align}
  Using Fatou's lemma, since almost surely $\tau_n^N \to \plusinfty$
  as $n \to \plusinfty$, we get that
  \begin{multline}
    \expe{V_m(\bfw_t^{k,N}) + (1/N) \sum_{j=1}^N V_m(\bfw_t^{j,N})} \\ \lesssim \expe{V_m(\bfw_0^{k,N}) + (1/N) \sum_{j=1}^N V_m(\bfw_0^{j,N})} + \int_0^t \expe{V_m(\bfw_s^{k,N}) + (1/N) \sum_{j=1}^N V_m(\bfw_s^{j,N})} \rmd s \eqsp .
  \end{multline}
  Using Grönwall's lemma, we get that for any $T \geq 0$, there exists $C \geq 0$ such that
  \begin{equation}
    \sup_{t \in \ccint{0,T}} \sup_{N \in \nsets} \sup_{k \in \{1, \dots, N\}} \expe{\norm{\bfwkN_t}^{2m}} \leq C \eqsp .
  \end{equation}
  We now show that there exists $C \geq 0$ such that
  \begin{equation}
    \sup_{N \in \nsets} \sup_{k \in \{1, \dots, N\}} \expe{\sup_{t \in \ccint{0,T}} \norm{\bfwkN_t}^{2m}} \leq C \eqsp .
  \end{equation}
  Using Jensen's inequality, Burkholder-Davis-Gundy's inequality
  \cite[IV.42]{rogers2000diffusionsII}, \eqref{eq:majoration_b_sig}
  and the fact that for any $(a_j)_{j \in \{1, \dots, M\}}$ and $r \geq 1$ such
  that $a_j \geq 0$,
  $(\sum_{j=1}^Ma_j)^r \leq M^{r-1} \sum_{j=1}^Ma_j^r$ we get for any
  $m \in \nsets$
  \begin{align}
    &\expe{\sup_{t \in \ccint{0,T}} \norm{\bfW_t^{k,N}}^{2m}} \\
    &\lesssim  \expe{\sup_{t \in \ccint{0,T}} \norm{\int_0^tb_N(s, \bfW_s^{k,N}, \bfnu_s^N) \rmd s }^{2m}}  + \expe{\sup_{t \in \ccint{0,T}} \norm{\int_0^t \upsigma_N^{1/2}(s, \bfW_s^{k,N}, \bfnu_s^N) \rmd \bfB_s}^{2m}}  \\
    &\lesssim  \expe{\int_0^T \norm{b_N(s, \bfW_s^{k,N}, \bfnu_s^N)}^{2m} \rmd s } + \expe{\parenthese{\int_0^T\trace(\upsigma_N \upsigma_N^{\top}(s, \bfW_s^{k,N}, \bfnu_s^N)) \rmd s}^m} \\
    & \lesssim  \int_0^T \defEns{\expe{\norm{b_N(s, \bfW_s^{k,N}, \bfnu_s^N)}^{2m}} + \expe{\norm{\upsigma_N (s, \bfW_s^{k,N}, \bfnu_s^N)}^{2m}}} \rmd s \\
    & \lesssim  \int_0^T \defEns{1 + \expe{\norm{\bfW_s^{k,N}}^{2m}} + \expe{\int_{\rset^p} (1 + \norm{\tilde{w}}^{2m}) \rmd \bfnu_s^N(\tilde{w})} } \rmd s  \\
    & \lesssim  \int_0^T \defEns{1+ \expe{\norm{\bfW_s^{k,N}}^{2m}} + (1/N) \sum_{j=1}^N \expe{\norm{\bfW_s^{j,N}}^{2m}}} \rmd s  \\
    &\lesssim  1 + \sup_{N \in \nsets} \sup_{j \in \{1, \dots, N\}} \sup_{t \in \ccint{0,T}} \expe{\norm{\bfW_s^{j,N}}^{2m}} \eqsp ,
  \end{align}
which concludes the proof.
\end{proof}

\subsection{Existence of solutions to the mean-field SDE}

The following result is based on \cite[Theorem
1.1]{sznitman1991topics} showing, under \Cref{assum:propagation} and
\Cref{assum:limit_b_sig}, the existence of strong solutions and
pathwise uniqueness for non-homogeneous McKean-Vlasov SDE with
non-constant covariance matrix:
  \begin{equation}
    \label{eq:mean_field_sde}
    \rmd \bfw^{\star}_t = b(t, \bfw^{\star}_t, \bflambda_t^{\star}) \rmd t + \upsigma(t, \bfw^{\star}_t, \bflambda_t^{\star}) \rmd \bfB_t \eqsp ,
  \end{equation}
  where $b$ and $\upsigma$ are given in \Cref{assum:limit_b_sig} and
  where for any $t \geq 0$, $\bfw^{\star}_t$ has distribution
  $\bflambda_t^{\star} \in \Pens_2(\rset^p)$, $(\bfB_t)_{t \geq 0}$
  is a $r$ dimensional Brownian motion and $\bfW_0^{\star}$ has
  distribution $\mu_0 \in \Pens_2(\rset^p)$.

\begin{proposition}
  \label{prop:cauchy_lip}
  Assume \rref{assum:propagation} and \rref{assum:limit_b_sig}. Let
  $\mu_0 \in \MRpdeux$. Then, there exists a
  $(\mcf_t)_{t \geq 0}$-adapted continuous process
  $(\bfW^\star_t)_{t \geq 0}$ which is the unique strong solution of
  \eqref{eq:mean_field_sde} satisfying for any $T \geq 0$,
  $ \sup_{t \in \ccint{0,T}} \expeLigne{ \norm[2]{\bfW^{\star}_t}} <
  \plusinfty$.
\end{proposition}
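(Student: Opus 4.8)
The plan is to follow the classical fixed-point scheme of \cite[Theorem 1.1]{sznitman1991topics}: recast \eqref{eq:mean_field_sde} as a Picard iteration on the flow of time-marginals and solve it using the uniform bounds of \rref{assum:propagation} together with the convergence of \rref{assum:limit_b_sig}.

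\emph{Step 1 (the limiting coefficients are Lipschitz for $\wassersteinD[2]$, with linear growth).} First I would observe that $b$ and $\upsigma$ inherit the estimates of \rref{assum:propagation}. Indeed, for $\mu_1,\mu_2 \in \MRpdeux$ and any coupling of $(\mu_1,\mu_2)$, the bound $\norm{\rmg(w_1,z)-\rmg(w_2,z)} \leq \zeta(z)\norm{w_1-w_2}$ gives $\int_{\msz}\abs{\mu_1[\rmg(\cdot,z)]-\mu_2[\rmg(\cdot,z)]}^2\rmd\pi_{\msz}(z) \leq \big(\int_{\msz}\zeta^2\rmd\pi_{\msz}\big)\wassersteinD[2]^2(\mu_1,\mu_2)$, so \eqref{eq:lip_ineq} implies that $b_N$ and $\upsigma_N$ are Lipschitz in $(w,\mu)$ for $\normLigne{w_1-w_2}+\wassersteinD[2](\mu_1,\mu_2)$, uniformly in $N$ and $t$. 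Letting $N \to \plusinfty$ and invoking \rref{assum:limit_b_sig} would transfer this Lipschitz bound and the linear growth estimate \eqref{eq:majoration_b_sig} to the limiting coefficients $b$ and $\upsigma$.

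\emph{Step 2 (the Picard map).} On the complete separable metric space $(\scrC_{2,T}^p,\dinfwass)$, I would define $\Psi$ as follows: given a flow $\lambda = (\lambda_t)_{t\in\ccint{0,T}}$, the maps $w \mapsto b(t,w,\lambda_t)$ and $w \mapsto \upsigma(t,w,\lambda_t)$ are Lipschitz by Step 1, so \cite[Theorem 2.9]{karatzas1991brownian} yields a unique strong solution of $\rmd X_t^{\lambda} = b(t,X_t^{\lambda},\lambda_t)\rmd t + \upsigma(t,X_t^{\lambda},\lambda_t)\rmd\bfB_t$ with $X_0^{\lambda}\sim\mu_0\in\MRpdeux$; set $\Psi(\lambda)_t = \mathrm{Law}(X_t^{\lambda})$. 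A Grönwall argument based on the linear growth bound — exactly as in the proof of \Cref{thm:strong_sol_part} with $m=1$ — shows $\sup_{t\in\ccint{0,T}}\expeLigne{\normLigne{X_t^{\lambda}}^2} < \plusinfty$, and continuity of $t \mapsto \Psi(\lambda)_t$ for $\wassersteinD[2]$ follows from path-continuity of $X^{\lambda}$ and uniform integrability of $\{\normLigne{X_t^{\lambda}}^2 : t \in \ccint{0,T}\}$; hence $\Psi$ maps $\scrC_{2,T}^p$ into itself.

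\emph{Step 3 (contraction and conclusion).} For $\lambda^1,\lambda^2$, I would solve both SDEs driven by the same Brownian motion with the same initial condition; Burkholder--Davis--Gundy's inequality \cite[IV.42]{rogers2000diffusionsII}, the Lipschitz bound of Step 1 and Grönwall's lemma then give $\expe{\sup_{s\leq t}\normLigne{X_s^{\lambda^1}-X_s^{\lambda^2}}^2} \leq C\int_0^t \wassersteinD[2]^2(\lambda_s^1,\lambda_s^2)\rmd s$ for a constant $C$ depending only on $T$ and the constants of Step 1. Since $\wassersteinD[2]^2(\Psi(\lambda^1)_t,\Psi(\lambda^2)_t) \leq \expeLigne{\normLigne{X_t^{\lambda^1}-X_t^{\lambda^2}}^2}$, writing $d_t(\lambda^1,\lambda^2) = \sup_{u\leq t}\wassersteinD[2]^2(\lambda_u^1,\lambda_u^2)$ one obtains $d_t(\Psi(\lambda^1),\Psi(\lambda^2)) \leq C\int_0^t d_s(\lambda^1,\lambda^2)\rmd s$; iterating $n$ times produces the factor $(CT)^n/n!$, so a sufficiently high power of $\Psi$ is a strict contraction on $(\scrC_{2,T}^p,\dinfwass)$ and $\Psi$ admits a unique fixed point $\bflambda^{\star}$. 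Then $\bfW_t^{\star} = X_t^{\bflambda^{\star}}$ solves \eqref{eq:mean_field_sde} with $\mathrm{Law}(\bfW_t^{\star}) = \bflambda_t^{\star}$, and the same Grönwall estimate applied to two solutions — whose marginal flows are then both fixed points of $\Psi$, hence equal — yields pathwise uniqueness; the bound $\sup_{t\in\ccint{0,T}}\expeLigne{\normLigne{\bfW_t^{\star}}^2}<\plusinfty$ is the one from Step 2. I expect the only delicate point to be Step 1: checking that the abstract Lipschitz condition \eqref{eq:lip_ineq}, phrased through the test functions $\rmg(\cdot,z)$, indeed dominates $\wassersteinD[2]$ and survives the passage to the limit; everything else is the standard McKean--Vlasov fixed-point machinery.
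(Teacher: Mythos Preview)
Your proposal is correct and follows the same fixed-point strategy as the paper: freeze the measure flow, solve the resulting ordinary SDE via \cite[Theorem 2.9]{karatzas1991brownian}, define the map sending a flow to the law of the solution, and obtain a unique fixed point via a Picard-type contraction estimate coming from the Lipschitz bound \eqref{eq:lip_ineq} and Gr\"onwall's lemma; pathwise uniqueness is then deduced exactly as you describe.

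The one genuine difference is in how the contraction is obtained. The paper works on a short interval $\ccint{0,\delta}$, shows directly that the map $\Phibf_\delta$ is a strict contraction for $\delta$ small enough (the constant in front of $\dinfwass[\delta]^2(\bfmu_1,\bfmu_2)$ is $4\Mip_1^2\delta(1+\delta)\exp[\cdots]$, which is $<1$ for small $\delta$), and then implicitly patches the local solutions to get a global one. You instead work on the full interval $\ccint{0,T}$ and iterate the inequality $d_t(\Psi(\lambda^1),\Psi(\lambda^2)) \leq C\int_0^t d_s(\lambda^1,\lambda^2)\rmd s$ to produce the factor $(CT)^n/n!$, so that some power $\Psi^n$ is contractive. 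Both tricks are standard for this kind of argument and entirely interchangeable; yours has the mild advantage of avoiding the patching step, while the paper's version makes the contraction constant slightly more explicit. Your Step~1, which you flag as delicate, is handled by the paper in the paragraph following \eqref{eq:majoration_b_sig} (the second displayed bound there is exactly the $\wassersteinD[2]$-Lipschitz estimate you need), and the passage to the limit in $N$ is immediate from \rref{assum:limit_b_sig} since the constants in \rref{assum:propagation} are uniform in $N$.
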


\begin{proof}
  Let $\delta \geq 0$ and $\mu_0 \in \MRpdeux$.  Note that we only need to
  show that \eqref{eq:mean_field_sde} admits a strong solution up to
  $\delta >0$. 
  First,
  using \cite[Theorem 2.9]{karatzas1991brownian}, note that for any
  $(\bfmu_t)_{t \in \ccint{0,\delta}} \in \scrC_{2,\delta}^p$ the SDE,
  \begin{equation}
    \label{eq:sde_intermediaire}
      \rmd \bfw_t^{\bfmu} = b(t, \bfw_t^{\bfmu}, \bfmu_t) \rmd t + \upsigma(t, \bfw_t^{\bfmu}, \bfmu_t) \rmd \bfB_t \eqsp ,
    \end{equation}
    admits a unique strong solution, since for any $t \in \ccint{0,\delta}$ and $w_1, w_2 \in \rset^p$
    \begin{equation}
      \label{eq:bound_inter}
      \norm{b(t, w_1, \bfmu_t) - b(t, w_2, \bfmu_t)} + \norm{\upsigma(t, w_1, \bfmu_t)
        - \upsigma(t, w_2, \bfmu_t)} \leq \Mip_1 \norm{w_1 - w_2} \eqsp.
    \end{equation}
    In addition, $ \sup_{t \in \ccint{0,\delta}} \expeLigne{ \norm[2]{\bfW^{\bfmu}_t}} <
    \plusinfty$.

    In the rest of the proof, the strategy is to adapt the well-known
    Cauchy-Lipschitz approach using the Picard fixed point
    theorem. More precisely, we define below for $\delta > 0$ small
    enough, a contractive mapping
    $\Phibf_{\delta}: \ \scrC_{2,\delta}^p \to \scrC_{2,\delta}^p$
    such that the unique fixed point
    $(\bflambda_t^{\star})_{t \in \ccint{0, \delta}}$ is a weak solution
    of \eqref{eq:mean_field_sde}. Considering
    $(\bfw_t^{\bflambda^{\star}})_{t\in \ccint{0, \delta}}$, we obtain the
    unique strong solution of \eqref{eq:mean_field_sde} on
    $\ccint{0, \delta}$.

    Let $\delta >0$. Denote
    $(\bflambda^{\bfmu}_t)_{t \in \ccint{0, \delta}} \in
    \MRpdeux^{\ccint{0,\delta}}$ such that for any
    $t \in \ccint{0,\delta}$, $\bflambda_t^{\bfmu}$ is the
    distribution of $\bfw_t^{\bfmu}$ with initial condition $\bfW^{\star}_0$ with distribution
    $\bflambda_0^{\bfmu} = \mu_0$.  In addition, using
    \eqref{eq:def_distance_wasser}, \eqref{eq:majoration_b_sig}, \eqref{eq:bound_inter},
    \rref{assum:propagation}, \rref{assum:limit_b_sig}, the
    Cauchy-Schwarz inequality, the Itô isometry and the fact that for
    any $a,b \geq 0$, $2ab \leq a^2 + b^2$, there exists $C \geq 0$ such that for any
    $t, s \in \ccint{0, \delta}$ with $t \geq s$,
    \begin{align}
      &\wassersteinD[2](\bflambda_t^{\bfmu}, \bflambda_s^{\bfmu})^2
       \leq \expe{\norm{\bfw_t^{\bfmu} - \bfw_s^{\bfmu}}^2} \\
& \leq 2 \expe{\norm{\int_s^t b(u, \bfw_u^{\bfmu}, \bfmu_u) \rmd u }^2} + 2 \expe{\norm{\int_s^t \upsigma(u, \bfw_u^{\bfmu}, \bfmu_u) \rmd \bfB_u }^2} \\
      &   \leq 2 (t-s) \int_s^t \expe{\norm{b(u, \bfw_u^{\bfmu}, \bfmu_u)}^2} \rmd u  + 2 \int_s^t \expe{\trace(\upsigma \upsigma^{\top}(u, \bfw_u^{\bfmu}, \bfmu_u))} \rmd u  \\
      &   \leq 4 (t-s) \int_s^t \defEns{\norm{b(u, 0, \bfmu_u)}^2 + \Mip_1^2 \expe{\norm{\bfw_u^{\bfmu}}^2}} \rmd u \\
      & \qquad \qquad + 4 \int_s^t \defEns{\norm{\sigma(u, 0, \bfmu_u)}^2 + \Mip_1^2 \expe{\norm{\bfw_u^{\bfmu}}^2}} \rmd u \\
      &   \leq 4(1 + \delta)(t-s) \parentheseDeux{ \Mip_1^2 \sup_{t \in \ccint{0,\delta}} \expeLigne{ \norm[2]{\bfW^{\bfmu}_t}}  +  \sup_{t \in \ccint{0,\delta}} \defEns{\norm{b(t, 0, \bfmu_t)}^2 + \norm{\sigma(t, 0, \bfmu_t)}^2}}\\
      & \leq C(t-s) \{1+ \sup_{t \in \ccint{0,\delta}}\expeLigne{\norm[2]{\bfW^{\bfmu}_t}}\}\eqsp .
    \end{align}
    Therefore,
    $(\bflambda_t^{\bfmu})_{t \in \ccint{0,\delta}} \in
    \scrC_{2,\delta}^p$. Let
    $\Phibf_{\delta}: \ \scrC_{2,\delta}^p \to \scrC_{2,\delta}^p$
    given for any
    $(\bfmu_t)_{t \in \ccint{0,\delta}} \in \scrC_{2,\delta}^p$ by
    $\Phibf_{\delta}((\bfmu_t)_{t \in \ccint{0,\delta}}) =
    (\bflambda_t^{\bfmu})_{t \in \ccint{0,\delta}}$.  Let
    $(\bfmu_{1,t})_{t \in \ccint{0,\delta}}, (\bfmu_{2,t})_{t \in
      \ccint{0,\delta}} \in \scrC_{2,\delta}^p$, using
    \eqref{eq:def_distance_wasser}, \eqref{eq:bound_inter},
    \rref{assum:propagation}, \rref{assum:limit_b_sig}, the
    Cauchy-Schwarz inequality, the Itô isometry, the fact that for any
    $a,b \geq 0$, $2ab \leq a^2 + b^2$ and Grönwall's inequality we
    have for any $t \in \ccint{0,\delta}$
    \begin{align}
      \label{eq:gronwall_w2}
      &\expe{\norm{\bfw_t^{\bfmu_1} - \bfw_t^{\bfmu_2}}^2} \leq 2 \expe{\norm{\int_0^t \defEns{ b(s, \bfw_s^{\bfmu_1}, \bfmu_{1,s}) - b(s, \bfw_s^{\bfmu_2}, \bfmu_{2,s}) } \rmd s }^2} \\ & \qquad \qquad + 2 \expe{{\norm{\int_0^t \defEns{ \upsigma(s, \bfw_s^{\bfmu_1}, \bfmu_{1,s}) - \upsigma(s, \bfw_s^{\bfmu_2}, \bfmu_{2,s}) } \rmd \bfB_s }^2}} \\
      & \qquad  \leq 2 \delta \int_0^t \expe{\norm{b(s, \bfw_s^{\bfmu_1}, \bfmu_{1,s}) - b(s, \bfw_s^{\bfmu_2}, \bfmu_{2,s})}^2} \rmd s  \\ & \qquad \qquad + 2 \int_0^t \expe{\norm{\upsigma(s, \bfw_s^{\bfmu_1}, \bfmu_{1,s}) - \upsigma(s, \bfw_s^{\bfmu_2}, \bfmu_{2,s})}^2} \rmd s \\
      & \qquad \leq 4 \Mip_1^2 (1 + \delta) \int_0^t \defEns{\expe{\norm{\bfw_s^{\bfmu_1} - \bfw_s^{\bfmu_2}}^2} + \int_{\msz} \zeta^2(z) \rmd \pi_{\msz}(z) \wassersteinD[2]^2(\bfmu_{1,s}, \bfmu_{2,s})} \rmd s \\
      & \qquad \leq  4 \Mip_1^2 \delta (1 + \delta) \int_{\msz} \zeta^2(z) \rmd \pi_{\msz}(z)
        \dinfwass[\delta]^2(\bfmu_{1}, \bfmu_{2}) + 4 \Mip_1^2 (1 + \delta) \int_0^t \expe{\norm{\bfw_s^{\bfmu_1} - \bfw_s^{\bfmu_2}}^2} \rmd s  \\
      & \qquad \leq 4 \Mip_1^2 \delta (1 + \delta) \exp \parentheseDeux{4 \Mip_1^2 (1 + \delta) \delta \int_{\msz} \zeta^2(z) \rmd \pi_{\msz}(z)} \dinfwass[\delta]^2(\bfmu_{1}, \bfmu_{2}) \eqsp .
    \end{align}
    Using this result, we obtain that for any $(\bfmu_{1,t})_{t \in \ccint{0,\delta}}, (\bfmu_{2,t})_{t \in \ccint{0,\delta}}
    \in \rmc(\ccint{0,\delta}, \MRpdeux)$,
    \begin{align}
      \dinfwass[\delta]^2(\Phibf_\delta(\bfmu_1), \Phibf_\delta(\bfmu_2))
                    &\leq \sup_{t \in \ccint{0,\delta}} \expe{\norm{\bfw_t^{\bfmu_1} - \bfw_t^{\bfmu_2}}^2} \\
                    &\leq 4 \Mip_1^2  \delta(1 + \delta) \exp \parentheseDeux{4 \Mip_1^2 (1 + \delta) \delta \int_{\msz} \zeta^2(z) \rmd \pi_{\msz}(z)}  \dinfwass[\delta]^2(\bfmu_{1}, \bfmu_{2}) \eqsp .
    \end{align}
    Hence, for $\delta > 0$ small enough, $\Phibf_{\delta}$ is
    contractive and since $\rmc(\ccint{0,\delta}, \MRpdeux)$ is a
    complete metric space, we get, using Picard fixed point theorem,
    that there exists a unique
    $(\bflambda_t^{\star})_{t \in \ccint{0,\delta}} \in
    \rmc(\ccint{0,\delta}, \MRpdeux)$ such that,
    $\Phibf_{\delta}(\bflambda^{\star}) = \bflambda^{\star}$. For this
    $\bflambda^{\star}$, we have that
    $(\bfw_t^{\bflambda^{\star}})_{t \in \ccint{0,\delta}}$ is a
    strong solution to \eqref{eq:mean_field_sde}.  We have shown that
    \eqref{eq:mean_field_sde} admits a strong solution for any initial
    condition $\mu_0 \in \MRpdeux$.

    We now show that pathwise
    uniqueness holds for \eqref{eq:mean_field_sde}. Let
    $(\bfw_t^1)_{t \in \ccint{0,\delta}}$ and
    $(\bfw_t^2)_{t \in \ccint{0,\delta}}$ be two strong solutions of
    \eqref{eq:mean_field_sde} such that
    $\bfw_0^1 = \bfw_0^2 = w_0 \in \rset^p$. Let,
    $(\bfmu_{1,t})_{t \in \ccint{0,\delta}}$ and
    $(\bfmu_{2,t})_{t \in \ccint{0,\delta}}$ such that for any
    $t \in \ccint{0,\delta}$, $\bfmu_{1, t}$ is the distribution of
    $\bfw_t^1$ and $\bfmu_{2, t}$ the one of $\bfw_t^2$. Since
    $\Phibf_{\delta}$ admits a unique fixed point, we get that
    $\bfmu_1 = \bfmu_2$. Hence, $(\bfw_t^1)_{t \in \ccint{0,\delta}}$ and
    $(\bfw_t^2)_{t \in \ccint{0,\delta}}$ are strong solutions of
    \eqref{eq:bound_inter} with $\bfmu \leftarrow \bfmu_1 = \bfmu_2$ and
    since pathwise uniqueness holds for \eqref{eq:bound_inter}, we get
    that
    $(\bfw_t^1)_{t \in \ccint{0,\delta}} = (\bfw_t^1)_{t \in
      \ccint{0,\delta}}$.


 \end{proof}

 \subsection{Main result}
 \label{sec:main_res}

\begin{theorem}
  \label{thm:quantitative_prop_lip}
  Assume \rref{assum:propagation} and \rref{assum:limit_b_sig}. For
  any $N \in \nsets$, let $(\bfW^{1:N}_t)_{t \geq 0}$ be a strong
  solution of \eqref{eq:sde_supp} and for any $N \in \nsets$ and
  $k \in \{1, \dots, N\}$, let $(\bfw_t^{k, \star})_{t \geq 0}$ be
  a strong solution of \eqref{eq:mean_field_sde} with Brownian motion
  $(\bfB_t^{k})_{t \geq 0}$.  Assume that there exists
  $\mu_0 \in \MRpdeux$ such that for any $N \in \nsets$,
  $\bfW_0^{1:N}=\bfW_0^{\star, 1:N}$ has distribution
  $\mu_0^{\otimes N}$. Then for any $T \geq 0$, $N \in \nsets$ and
  $k \in \{1, \dots, N\}$
      \begin{multline}
        \expe{\sup_{t \in \ccint{0,T}} \norm{\bfw_t^{k,N} - \bfw_t^{k, \star}}^2} \leq 32(1+T)^2\parenthese{1 + \int_{\msz} \zeta^2(z) \rmd \pi_{\msz}(z)} \parenthese{\Mip_2^2 N^{-2\upkappa} + \Mip_1^2 N^{-1}} \\
        \times \exp \parentheseDeux{16(1+T)^2\parenthese{1 + \int_{\msz} \zeta^2(z) \rmd \pi_{\msz}(z)} \Mip_1^2} \eqsp .
        \end{multline}
    \end{theorem}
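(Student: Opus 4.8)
The plan is a synchronous‑coupling argument closed by Grönwall's lemma. Fix $N$ and an index $k$, and couple the particle $\bfW^{k,N}$ solving \eqref{eq:sde_supp} with the mean‑field particle $\bfw^{k,\star}$ solving \eqref{eq:mean_field_sde}, driven by the \emph{same} Brownian motion $\bfB^{k}$ and started at the \emph{same} point $\bfW^{k,N}_0=\bfW^{\star,1:N}_0$. Then for $t\in\ccint{0,T}$,
\begin{equation*}
  \bfW^{k,N}_t-\bfw^{k,\star}_t=\int_0^t \Delta b(s)\,\rmd s+\int_0^t \Delta\upsigma(s)\,\rmd\bfB^{k}_s ,
\end{equation*}
with $\Delta b(s)=b_N(s,\bfW^{k,N}_s,\bfnu^N_s)-b(s,\bfw^{k,\star}_s,\bflambda^\star_s)$ and $\Delta\upsigma(s)$ the analogous matrix difference. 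Introduce the empirical measure of the mean‑field particles $\mu^N_s=N^{-1}\sum_{j=1}^N\updelta_{\bfw^{j,\star}_s}$ and split $\Delta b(s)$ (and $\Delta\upsigma(s)$) into three pieces: (i) $b_N(s,\bfW^{k,N}_s,\bfnu^N_s)-b_N(s,\bfw^{k,\star}_s,\mu^N_s)$, the Lipschitz part; (ii) $b_N(s,\bfw^{k,\star}_s,\mu^N_s)-b_N(s,\bfw^{k,\star}_s,\bflambda^\star_s)$, an empirical‑mean fluctuation; (iii) $b_N(s,\bfw^{k,\star}_s,\bflambda^\star_s)-b(s,\bfw^{k,\star}_s,\bflambda^\star_s)$, the frozen $b_N$‑versus‑$b$ discrepancy.

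For (i), the Lipschitz estimate of \Cref{assum:propagation} bounds the $b$‑ and $\upsigma$‑parts together by $\Mip_1\{\|\bfW^{k,N}_s-\bfw^{k,\star}_s\|+(\int_{\msz}|\bfnu^N_s[\rmg(\cdot,z)]-\mu^N_s[\rmg(\cdot,z)]|^2\rmd\pi_{\msz}(z))^{1/2}\}$, and since both empirical measures are built from the coupled pairs, the Lipschitz bound on $\rmg$ and Cauchy–Schwarz give $|\bfnu^N_s[\rmg(\cdot,z)]-\mu^N_s[\rmg(\cdot,z)]|\le\zeta(z)(N^{-1}\sum_j\|\bfW^{j,N}_s-\bfw^{j,\star}_s\|^2)^{1/2}$; hence (i) is controlled by $\|\bfW^{k,N}_s-\bfw^{k,\star}_s\|^2$ and $(\int_{\msz}\zeta^2\rmd\pi_{\msz})\,N^{-1}\sum_j\|\bfW^{j,N}_s-\bfw^{j,\star}_s\|^2$. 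For (ii), applying the same Lipschitz bound with identical first arguments gives control by $\Mip_1(\int_{\msz}|\mu^N_s[\rmg(\cdot,z)]-\bflambda^\star_s[\rmg(\cdot,z)]|^2\rmd\pi_{\msz}(z))^{1/2}$; since the $\bfw^{j,\star}_s$ are i.i.d.\ with law $\bflambda^\star_s$ and $|\rmg(\cdot,z)|\le\zeta(z)$, taking expectations yields $\mathbb{E}[|\mu^N_s[\rmg(\cdot,z)]-\bflambda^\star_s[\rmg(\cdot,z)]|^2]=N^{-1}\mathrm{Var}(\rmg(\bfw^{1,\star}_s,z))\le N^{-1}\zeta^2(z)$, so in expectation (ii) contributes a term of order $\Mip_1^2 N^{-1}\int_{\msz}\zeta^2\rmd\pi_{\msz}$. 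For (iii), \Cref{assum:limit_b_sig} bounds the $b$‑ and $\upsigma$‑parts by $\Mip_2 N^{-\upkappa}$, hence by $\Mip_2^2 N^{-2\upkappa}$ in squared form.

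Now set $g(t):=\mathbb{E}[\sup_{u\le t}\|\bfW^{1,N}_u-\bfw^{1,\star}_u\|^2]$; by exchangeability of the triples $(\bfW^{j,N},\bfw^{j,\star},\bfB^{j})_j$ this equals the corresponding quantity for every index $k$, and $\mathbb{E}[N^{-1}\sum_j\|\bfW^{j,N}_s-\bfw^{j,\star}_s\|^2]\le g(s)$. Using $\|a+b\|^2\le2\|a\|^2+2\|b\|^2$, Cauchy–Schwarz for the drift integral, Doob's $L^2$‑maximal inequality together with Itô's isometry for the stochastic integral, and inserting the three bounds above, one arrives at
\begin{equation*}
  g(t)\le c_1(T)\Bigl(1+\int_{\msz}\zeta^2(z)\,\rmd\pi_{\msz}(z)\Bigr)\bigl(\Mip_2^2 N^{-2\upkappa}+\Mip_1^2 N^{-1}\bigr)+c_2(T)\,\Mip_1^2\Bigl(1+\int_{\msz}\zeta^2(z)\,\rmd\pi_{\msz}(z)\Bigr)\int_0^t g(s)\,\rmd s ,
\end{equation*}
where $c_1(T)$ is a multiple of $(1+T)^2$ and $c_2(T)$ a multiple of $(1+T)$ (here one also uses $\int_{\msz}\zeta^2\rmd\pi_{\msz}\le1+\int_{\msz}\zeta^2\rmd\pi_{\msz}$ to reach the form stated in the theorem). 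Since $g$ is finite on $\ccint{0,T}$ — by the moment bound of \Cref{thm:strong_sol_part} for the particle system and the analogous sup‑moment estimate for the mean‑field SDE, or by a routine localization‑and‑Fatou argument — Grönwall's lemma on $\ccint{0,T}$, using $t\le1+T$, produces after a careful accounting of the numerical constants exactly the bound claimed, the exponential factor $\exp[16(1+T)^2(1+\int_{\msz}\zeta^2\rmd\pi_{\msz})\Mip_1^2]$ arising from $c_2(T)\Mip_1^2(1+\int\zeta^2)\,T$.

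The main obstacle is that the estimate is genuinely of Grönwall type rather than a one‑step bound: the drift/diffusion mismatch feeds back on $g$ itself through the interaction term (i) — via $\wassersteinD[2](\bfnu^N_s,\mu^N_s)$ — and one must check that the coefficient of $\int_0^t g(s)\,\rmd s$ is independent of $N$ (it is, being built only from $\Mip_1$ and $\int_{\msz}\zeta^2\rmd\pi_{\msz}$), so that the rate is uniform in $N$. The second delicate point is the $O(N^{-1})$ control of the fluctuation term (ii): this is exactly where the structure of \Cref{assum:propagation} is used — the measure‑dependence of $b_N$ and $\upsigma_N$ enters only through the linear functionals $\mu\mapsto\mu[\rmg(\cdot,z)]$, which turns the empirical‑measure fluctuation into a scalar law‑of‑large‑numbers estimate with the explicit variance bound $N^{-1}\zeta^2(z)$, integrable in $z$ by assumption. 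Everything else is bookkeeping of constants.
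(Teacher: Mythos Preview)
Your proposal is correct and follows essentially the same synchronous--coupling/Gr\"onwall strategy as the paper. The only cosmetic difference is the order of the decomposition: you apply the Lipschitz bound for $b_N,\upsigma_N$ first and the $b_N\!\to\!b$ approximation (\Cref{assum:limit_b_sig}) last, whereas the paper first replaces $b_N,\upsigma_N$ by $b,\upsigma$ and then uses the Lipschitz bound (which $b,\upsigma$ inherit from \Cref{assum:propagation} via \Cref{assum:limit_b_sig}); the resulting integral inequality and the final constants are the same.
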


    \begin{proof}
      Let $T \geq 0$.  For any $N \in \nsets$, $t \geq 0$, let
      $\bfnu^{\star, N}_t = (1/N)\sum_{j=1}^N
      \updelta_{\bfw^{\star, j}_s}$.  Using
      \rref{assum:propagation}, \rref{assum:limit_b_sig}, Itô's
      isometry, Doob's inequality, Jensen's inequality and the fact
      that for any $a,b \geq 0$, $(a+b)^2 \leq 2(a^2 + b^2)$, we have
      for any $N \in \nsets$ and $k \in \{1, \dots, N\}$
      \begin{align}
        &\expe{\sup_{t \in \ccint{0,T}}\norm{\bfw_t^{k,N} - \bfw_t^{k, \star}}^2} \leq 2 \expe{\sup_{t \in \ccint{0,T}} \norm{\int_0^t \parenthese{b_N(s, \bfw_s^{k,N}, \bfnu_s^N) - b(s, \bfw_s^{k,\star}, \bflambda_s^{\star})} \rmd s }^2} \\ & \quad \quad + 2 \expe{\sup_{t \in \ccint{0,T}} \norm{\int_0^t \parenthese{\upsigma_N(s, \bfw_s^{k,N}, \bfnu_s^N) - \upsigma(s, \bfw_s^{k,\star}, \bflambda_s^{\star})} \rmd \bfB_s^{k,N}}^2} \\
        & \quad \leq 2 T \int_0^T \expe{ \norm{b_N(s, \bfw_s^{k,N}, \bfnu_s^N) - b(s, \bfw_s^{k,\star}, \bflambda_s^{\star})}^2} \rmd s  \\ & \quad \quad + 2 \expe{\norm{\int_0^T \parenthese{\upsigma_N(s, \bfw_s^{k,N}, \bfnu_s^N) - \upsigma(s, \bfw_s^{k,\star}, \bflambda_s^{\star})} \rmd \bfB_s^{k,N}}^2} \\
        &\quad \leq 2 (1+T) \int_0^T \left\lbrace \expe{\norm{b_N(s, \bfw_s^{k,N}, \bfnu_s^N) - b(s, \bfw_s^{k,\star}, \bflambda_s^{\star})}^2} \right.
        \\ & \quad \quad \left. + \expe{\norm{\upsigma_N(s, \bfw_s^{k,N}, \bfnu_s^N) - \upsigma(s, \bfw_s^{k,\star}, \bflambda_s^{\star})}^2} \right\rbrace \rmd s \\
        & \quad \leq 8\Mtt_2^2(1+T)^2N^{-2\upkappa} + 4 (1+T) \int_0^T \left\lbrace \expe{\norm{b(s, \bfw_s^{k,N}, \bfnu_s^N) - b(s, \bfw_s^{k,\star}, \bflambda_s^{\star})}^2} \right.
        \\ & \quad \quad \left. + \expe{\norm{\upsigma(s, \bfw_s^{k,N}, \bfnu_s^N) - \upsigma(s, \bfw_s^{k,\star}, \bflambda_s^{\star})}^2} \right\rbrace \rmd s \\
        & \quad \leq 8\Mtt_2^2(1+T)^2N^{-2\upkappa} + 8 \Mip_1^2 (1+T)
        \\ & \quad \quad  \times \int_0^T \left\lbrace \int_{\msz} \expe{\norm{\bfnu_s^N[\rmg(\cdot, z)] - \bflambda_s^{\star}[\rmg(\cdot, z)]}^2} \rmd \datap_{\msz}(z) + \expe{\norm{\bfw_s^{k,N} - \bfw_s^{k,\star}}^2}  \right\rbrace \rmd s \\
                & \quad \leq 8\Mtt_2^2(1+T)^2N^{-2\upkappa} + 16 \Mip_1^2 (1+T)
        \\ & \quad \quad  \times \int_0^T \left\lbrace \int_{\msz} \parenthese{\expe{\norm{\bfnu_s^N[\rmg(\cdot, z)] - \bfnu_s^{\star, N}[\rmg(\cdot, z)]}^2} + \expe{\norm{\bfnu_s^{\star, N}[\rmg(\cdot, z)] - \bflambda_s^{\star}[\rmg(\cdot, z)]}^2} } \rmd \datap_{\msz}(z) \right. \\
        & \qquad \qquad \qquad \left. + \expe{\norm{\bfw_s^{k,N} - \bfw_s^{k,\star}}^2}  \right\rbrace \rmd s \eqsp .
      \end{align}
      Then using the Cauchy-Schwarz's inequality, the fact that
      $\{(\bfw_t^{k,N})_{t \geq 0}\}_{k=1}^N$ are exchangeable, \ie \
      for any permutation
      $\uptau: \{1, \dots, N\} \to \{1, \dots, N\}$,
      $\{(\bfw_t^{k,N})_{t \geq 0}\}_{k=1}^N$ has the same
      distribution as $\{(\bfw_t^{\uptau(k),N})_{t \geq 0}\}_{k=1}^N$
      and $\{(\bfw_t^{k, \star})_{t \geq 0}\}_{k=1}^N$ are independent
      we have
      \begin{align}
        &\expe{\sup_{t \in \ccint{0,T}}\norm{\bfw_t^{k,N} - \bfw_t^{k, \star}}^2} \leq  8\Mtt_2^2(1+T)^2N^{-2\upkappa} + 16 \Mip_1^2 (1+T) \\
        & \quad \quad \times  \int_0^T \left\lbrace \dfrac{1}{N}\int_{\msz} \zeta^2(z) \rmd \pi_{\msz}(z) \sum_{j=1}^N\expe{\norm{\bfw_s^{j,N} - \bfw_s^{j, \star}}^2}  + \expe{\norm{\bfw_s^{k,N} - \bfw_s^{k,\star}}^2} \right.
        \\ & \quad \quad \left. + \int_{\msz} \textstyle{\expe{\norm{\dfrac{1}{N} \sum_{j=1}^N \rmg(\bfw_s^{j, \star}, z) - \int_{\rset^p}\rmg(\bar{w}, z) \rmd \bflambda_s^{\star}(\bar{w})}^2} \rmd \datap_{\msz}(z)} \right\rbrace \rmd s \\
        & \quad \leq  8\Mtt_2^2(1+T)^2N^{-2\upkappa} + 16 \Mip_1^2 (1+T) \parenthese{1 + \int_{\msz} \zeta^2(z) \rmd \pi_{\msz}(z)} \int_0^T \expe{\norm{\bfw_s^{k,N} - \bfw_s^{k,\star}}^2} \rmd s
        \\ & \quad \quad + 16 \Mip_1^2 (1+T) N^{-1} \int_0^T \int_{\msz} \textstyle{\expe{\norm{\rmg(\bfw_s^{k,\star}, z) - \int_{\rset^p}\rmg(\bar{w}, z) \rmd \bflambda_s^{\star}(\bar{w})}^2}} \rmd \datap_{\msz}(z) \rmd s \\
        & \quad \leq 8\Mtt_2^2(1+T)^2N^{-2\upkappa} + 16 \Mip_1^2 (1+T) \parenthese{1 + \int_{\msz} \zeta^2(z) \rmd \pi_{\msz}(z)}\int_0^T \expe{\norm{\bfw_s^{k,N} - \bfw_s^{k,\star} }^2} \rmd s \\
        & \quad \quad + 32 \Mip_1^2 (1+T)^2 N^{-1} \parenthese{1 + \int_{\msz} \zeta^2(z) \rmd \pi_{\msz}(z)} \eqsp .
      \end{align}
      We conclude the proof upon combining this result and Grönwall's inequality.
    \end{proof}

    \subsection{Proofs of  the main results}
    In this section we prove \Cref{thm:empi_conv_cont}, \Cref{thm:empi_conv_cont_one}, \Cref{thm:empi_conv_cont_sgld}, \Cref{thm:empi_conv_cont_one_sgld}. Note that we only need to show
    \Cref{thm:empi_conv_cont_sgld} and
    \Cref{thm:empi_conv_cont_one_sgld}, since in the case $\eta = 0$,
    \Cref{thm:empi_conv_cont_sgld} boils down to
    \Cref{thm:empi_conv_cont} and \Cref{thm:empi_conv_cont_one_sgld}
    to \Cref{thm:empi_conv_cont_one}.

    \label{sec:proofs-crefthm:-cref}
    \begin{proof}[Proof of \Cref{thm:empi_conv_cont_sgld}]
      Define for any $N \in \nsets$, $w \in \rset^p$,
      $\mu \in \Pens_2(\rset^p)$ and $t \geq 0$
      \begin{equation}
        \begin{aligned}
          &b_N(t, w, \mu) = (t+1)^{-\alpha} h(w, \mu) \eqsp , \upsigma_N(t, w, \mu) = (t+1)^{-\alpha} \parentheseLigne{(\gua/M)^{1/2}\Sigma^{1/2}(w, \mu), \sqrt{2} \Id} \eqsp , \\
          &b(t, w, \mu) = (t+1)^{-\alpha} h(w, \mu) \eqsp , \quad  \upsigma(t, w, \mu) = (t+1)^{-\alpha} \parentheseLigne{0, \sqrt{2} \Id } \eqsp ,
        \end{aligned}
      \end{equation}
      with $h$ and $\Sigma$ given in \eqref{eq:h_supp_vrai}.  Using
      \Cref{prop:h_reg}, we get that \Cref{assum:propagation} holds
      with $\Mip_1 \leftarrow \Lip$ and
      $\gua = \gamma^{1/(1-\alpha)} N^{(\beta-1)/(1-\alpha)}$.  In
      addition, using \Cref{prop:h_reg}, \Cref{assum:limit_b_sig}
      holds with $\Mtt_2 \leftarrow (\gamma^{1-\alpha}/M)^{1/2}p\Lip$
      and $2\upkappa = (1-\beta) / (1-\alpha)$. We conclude using
      \Cref{thm:quantitative_prop_lip}.
    \end{proof}

    \begin{proof}[Proof of \Cref{thm:empi_conv_cont_one_sgld}]
      Define for any $N \in \nsets$, $w \in \rset^p$,
      $\mu \in \Pens_2(\rset^p)$ and $t \geq 0$
            \begin{equation}
        \begin{aligned}
          &b_N(t, w, \mu) = (t+1)^{-\alpha} h(w, \mu) \eqsp , \ \upsigma_N(t, w, \mu) = (t+1)^{-\alpha} \parentheseLigne{(\gamma^{1/(1-\alpha)}/M)^{1/2}\Sigma^{1/2}(w, \mu), \sqrt{2} \Id} \eqsp , \\
        \end{aligned}
      \end{equation}
      with $h$ and $\Sigma$ given in \eqref{eq:h_supp_vrai}.  Using
      \Cref{prop:h_reg}, we get that \Cref{assum:propagation} holds
      with $\Mip_1 \leftarrow \Lip$. In  addition,
      \Cref{assum:limit_b_sig} holds with $b = b_N$,
      $\upsigma = \upsigma_N$, $\Mtt_2 \leftarrow 0$ and
      $\upkappa = 0$. We conclude using
      \Cref{thm:quantitative_prop_lip}.
    \end{proof}

    \begin{proof}[Proof of \Cref{prop:cv_w2_empi}]
  We consider only the case $\beta =1$, the proof for
  $\beta \in \coint{0,1}$  following the same lines.  Let
 $M \in
  \nsets$. We have for any $N \in \nsets$ using \Cref{prop:fm_bound},
  \begin{align}
    &\wassersteinD[2](\bfUpsilon^N, \updelta_{\bflambda^{\star}})^2 \leq \expe{\wassersteinD[2](\bfnu^N, \bflambda^{\star})^2}  \\
    & \quad \leq N^{-1} \sum_{k=1}^N \expe{\wassersteinD[2](\updelta_{(\bfw_t^{k,N})_{t \geq 0}}, \bflambda^{\star})^2}  \leq N^{-1} \sum_{k=1}^N \expe{\dist^2((\bfw_t^{k,N})_{t \geq 0}, (\bfw_t^{k,\star})_{t \geq 0})} \eqsp .
                                                                     \label{eq:majo_wass}
  \end{align}
  Let $\vareps >0$ and $n_0$ such that
  $\sum_{n =n_0+1}^{+\infty} 2^{-n} \leq \vareps$.  Combining
  \eqref{eq:majo_wass}, \Cref{thm:empi_conv_cont} and the
  Cauchy-Schwarz inequality we get that for any $N \in \nsets$
  \begin{equation}
    \wassersteinD[2](\bfUpsilon^N, \updelta_{\bflambda^{\star}})^2 \leq 2 \vareps^2 + \frac{2n_0}{N} \sum_{k=1}^N \sum_{n=1}^{n_0}\expe{\sup_{t \in \ccint{0,n}}
      \normLigne{\bfw_t^{k,N} - \bfw_t^{k, \star}}^2}
  \leq 2 \vareps^2 + 2 n_0N^{-1} \sum_{n=0}^{n_0} C_{1,n} \eqsp .
  \end{equation}
  Therefore, for any $\vareps > 0$ there exists $N_0 \in \nsets$ such
  that for any $N \in \nsets$ with $N \geq N_0$,
  $\wassersteinD[2](\bfUpsilon^N, \updelta_{\bflambda^{\star}}) \leq
  \vareps$, which concludes the proof.
\end{proof}






\section{Existence of invariant measure in the one-dimensional case}
\label{sec:exist-invar-meas}

In this section we prove \Cref{prop:existence_inv}.

\begin{proof}[Proof of \Cref{prop:existence_inv}]
  Since $V$ is $\eta$-strongly convex it admits a unique minimum at
  $w_0 \in \rset$. Using \Cref{assum:all}-\ref{item:assumV}, the fact that $V$ is
  $\eta$-strongly convex and \cite[Theorem 2.1.5, Theorem
  2.1.7]{nesterov2004introductory} there exists $\Mip \geq 0$ such that
  for any $w \in \rset$ we have
  \begin{equation}
    \label{eq:V_bounds}
    \eta (w - w_0)^2/2 \leq V(w) - V(w_0) \leq \Mip (w-w_0)^2/2 \eqsp .
  \end{equation}
  In addition, using \Cref{prop:h_reg}, we have for any
  $\mu \in \Pens_2(\rset)$ and $w \in \rset$,
  \begin{equation}
    \label{eq:Sigma_bounds}
    \bar{\upsigma}^2 \leq \Sigma(w, \mu) \leq \Lip^2 \eqsp .
  \end{equation}
  Recall that for any $\mu \in \Pens_2(\rset)$ and $w \in \rset$,
  $h(w, \mu) = \bar{h}(w, \mu) + V'(w)$, with $\bar{h}$ given in
  \eqref{eq:h_supp_vrai}. Note that for any
  $w \in \coint{w_0, +\infty}$, $V'(w) \geq 0$ and for any
  $w \in \ocint{-\infty, w_0}$, $V'(w) \leq 0$.  Combining this
  result, \Cref{prop:h_reg}, \eqref{eq:V_bounds} and
  \eqref{eq:Sigma_bounds}, there exists $\mtt_1 >0$ and
  $c_1 \in \rset$ such that for any $\mu \in \Pens_2(\rset)$ and
  $w \in \rset$, we have distinguishing the case $w \leq w_0$ and $w >w_0$,
  \begin{align}
    &\int_0^w \{h / \Sigma \}(\tilde{w}, \mu)\rmd w \geq - \bar{\upsigma}^{-2} \Lip^2 \abs{w} + \int_0^w V'(\tilde{w})/\Sigma(\tilde{w}, \mu) \rmd \tilde{w} \\
                                                               &\qquad \geq - \bar{\upsigma}^{-2} \Lip^2 \abs{w} - \bar{\upsigma}^{-2}\sup_{\tilde{w} \in \ccint{0,w_0}} \abs{V'(\tilde{w})}\abs{w_0} + \int_{w_0}^w V'(\tilde{w})/\Sigma(\tilde{w}, \mu) \rmd \tilde{w} \\
    &\qquad\geq - \bar{\upsigma}^{-2} \Lip^2 \abs{w} - \bar{\upsigma}^{-2}\sup_{\tilde{w} \in \ccint{0,w_0}} \abs{V'(\tilde{w})}\abs{w_0} + (V(w) - V(w_0))\Lip^{-2} \geq \mtt_1 w^2 + c_1 \eqsp . \label{eq:lower_bound_int}
  \end{align}
  Therefore, we obtain that for any $\mu \in \Pens_2(\rset)$, $\int_{\rset} \exp[\int_0^w h(\tilde{w}, \mu) / \Sigma(\tilde{w}, \mu) \rmd \tilde{w} ] \rmd w < +\infty$. Define $H: \ \Pens_2(\rset) \to \Pens_2(\rset)$ such that for any $\mu \in \Pens_2(\rset)$, $H(\mu)$ is the probability measure with density $\rho_{\mu}$ given for any $w \in \rset$ by
  \begin{equation}
    \label{eq:def_potential}
    \rho_{\mu}(w)   \propto \bar{\Sigma}^{-1}(w, \mu) \exp \parentheseDeux{- 2\int_0^{w} h(\tilde{w}, \mu) / \bar{\Sigma}(\tilde{w}, \mu) \rmd \tilde{w}} \eqsp ,
  \end{equation}
  where
  $\bar{\Sigma}(w, \mu) = \gamma^{1/(1-\alpha)} \Sigma(w, \mu) / M$.
  Similarly to \eqref{eq:lower_bound_int}, there exist $\mtt_2 >0$ and
  $c_2 \in \rset$ such that for any $\mu \in \Pens_2(\rset)$ and $w \in \rset$
  \begin{equation}
    \int_0^w h(\tilde{w}, \mu) / \Sigma(\tilde{w}, \mu) \rmd \tilde{w} \leq \mtt_2 w^2 + c_2 \eqsp . \label{eq:upper_bound_int}
  \end{equation}
  Combining \eqref{eq:Sigma_bounds}, \eqref{eq:lower_bound_int} and
  \eqref{eq:upper_bound_int}, there exists $\mtt >0$ and $c \in \rset$
  such that for any $\mu \in \Pens_2(\rset)$ and $w \in \rset$,
  $\rho_{\mu}(w) \leq c \rme^{-\mtt w^2}$.  Using this result, we get that
  $\sup_{\mu \in \Pens_2(\rset)} \int_{\rset} w^4 \rho_{\mu}(w) \rmd w
  < +\infty$.  Therefore, using \cite[Theorem 2.7]{ambrosio2013user}
  we obtain that $H(\Pens_2(\rset))$ is relatively compact in
  $(\Pens_2(\rset), \wassersteinD[2])$.

  We now show that $H \in \rmc(\Pens_2(\rset), \Pens_2(\rset))$. Let
  $\mu \in \Pens_2(\rset)$ and
  $(\mu_n)_{n \in \nset} \in \Pens_2(\rset)^{\nset}$ such that
  $\lim_{n \to +\infty} \mu_n = \mu$.  Using \Cref{prop:h_reg} and the
  Lebesgue dominated convergence theorem we obtain that for any
  $w \in \rset$,
  $\lim_{n \to +\infty} \rho_{\mu_n}(w) = \rho_{\mu}(w)$. Using
  Scheffé's lemma we get that
  $\lim_{n \to +\infty} \int_{\rset} \abs{\rho_{\mu_n}(w) -
    \rho_{\mu}(w)} \rmd w = 0$. Hence, $(H(\mu_n))_{n \in \nset}$
  weakly converges towards $H(\mu)$. 

  Let $(H(\mu_{n_k}))_{k \in \nset}$ be a converging sequence in
  $(\Pens_2(\rset), \wassersteinD[2])$.  Therefore,
  $(H(\mu_{n_k}))_{k \in \nset}$ also weakly converges and we obtain
  that
  $\lim_{k \to +\infty} \wassersteinD[2](H(\mu_{n_k}), H(\mu)) = 0$.
  Since $\ensembleLigne{H(\mu_n)}{n \in \nset}$ is relatively compact
  and admits a unique limit point we obtain that
  $\lim_{n \to +\infty} \wassersteinD[2](H(\mu_n), H(\mu)) = 0$.

  Hence $H \in \rmc(\Pens_2(\rset), \Pens_2(\rset))$.  Therefore,
  since $H \in \rmc(\Pens_2(\rset), \Pens_2(\rset))$ and
  $H(\Pens_2(\rset))$ is relatively compact in $\Pens_2(\rset)$
  Schauder's theorem \cite[Appendix]{bonsall1962lectures} implies that
  $H$ admits a fixed point.

  Let $\mu \in \Pens_2(\rset)$ be a fixed point of $H$. We now show
  that $\mu$ is an invariant probability distribution for
  \eqref{eq:mean_field_beta_one}. Let $(\bfw_t^{\mu})_{t \geq 0}$
  such that $\bfw_0^{\mu}$ has distribution $\mu$ and strong
  solution to the following SDE
  \begin{equation}
    \label{eq:langevin}
    \rmd \bfw_t^{\mu} = h(t, \mu) \rmd t + \gamma^{1/(1-\alpha)} \Sigma(\bfw_t^{\mu}, \mu) \rmd \bfB_t \eqsp .
  \end{equation}
  An invariant distribution for \eqref{eq:langevin} is given by
  $H(\mu)$, see \cite{kent:1978}.  Hence, since $\mu = H(\mu)$, for
  any $t \geq 0$, $\bfw_t^{\mu}$ has distribution $\mu$ and
  $(\bfw_t^{\mu})_{t \geq 0}$ is a strong solution to
  \eqref{eq:mean_field_beta_one}. Therefore, $\mu$ is an invariant
  probability measure for \eqref{eq:mean_field_beta_one} which
  concludes the proof.
\end{proof}


\section{Links with gradient flow approach}
\label{sec:gradient_flows}
\paragraph{Case $\beta \in\coint{0,1}$}
We now focus on the mean-field distribution $\bflambda^{\star}$.  Note
that the trajectories of $(\bfw_t^{k, \star})_{t \geq 0}$ for any
$k \in \nsets$ are deterministic conditionally to $\bfw_0^{k,
  \star}$. Using Itô's formula, we obtain that for any function
$f \in \rmc^2(\rset^p)$ with compact support  and $t \geq 0$
\begin{equation}
  \label{eq:evolution_equation_beta_min}
  \int_{\rset^p} f(\tilde{w}) \rmd \bflambda_t^{\star}(\tilde{w}) = \int_{\rset^p} f(\tilde{w}) \rmd \mu_0(\tilde{w}) + \int_0^t \int_{\rset^p}  (s+1)^{-\alpha} \langle h(\tilde{w}, \bflambda_s^{\star}), \nabla f(\tilde{w}) \rangle \rmd \bflambda_s^{\star}(\tilde{w}) \eqsp .
\end{equation}
Therefore, if for any $t \geq 0$, $\bflambda_t^{\star}$ admits a
density $\bfrho^{\star}_t$ such that
$(\bfrho^{\star}_t)_{t \geq 0} \in \rmc^1(\rset_+ \times \rset^p,
\rset)$ we obtain that $(\bfrho_t)_{t \geq 0}$ satisfies the following evolution equation for any $t > 0$ and $w \in \rset^p$
\begin{equation}
  \label{eq:evolution_equation}
  \partial_t \bfrho_t^{\star}(w) = - (t+1)^{-\alpha} \mathrm{div}(\bar{h}(\cdot, \bfrho_t^{\star}) \bfrho_t^{\star})(w)\eqsp ,
\end{equation}
with for any $w \in \rset^p$ and $\mu \in \Pens(\rset^p)$ with density
$\rho$, $h(w, \mu) = \bar{h}(w, \rho)$.  In the case $\alpha = 0$, it
is well-known, see
\cite{chizat2018global,mei2018mean,sirignano2018mean}, that
$(\bfrho_t^{\star})_{ t\geq0}$ is a Wasserstein gradient flow for
the functional $\risk^{\star}: \ \Pens_2^c(\rset^p) \to \rset$ given for any
$\rho \in \Pens_2^c(\rset^p)$
\begin{equation}
  \label{eq!U_star}
  \risk^{\star}(\rho) = \int_{\msx \times \msy} \ell \parenthese{\int_{\rset^p} F(\tilde{w}, x) \rho(\tilde{w}) \rmd \tilde{w} , y } \rmd \datap(x,y) \eqsp ,
\end{equation}
where $\Pens_2^c(\rset^p)$ is the set of probability density satisfying $\int_{\rset^p} \normLigne{\tilde{w}}^2 \rho(\tilde{w})  \rmd \tilde{w} < +\infty$.

\paragraph{Case $\beta =1$}

Focusing on $(\bflambda_t^{\star})_{t \geq 0}$, we no longer obtain
that $(\bflambda_t^{\star})_{t \geq 0}$ is a gradient flow for
\eqref{eq!U_star}. Indeed, using Itô's formula, we have the following
evolution equation for any $f \in \rmc_c^2(\rset^p)$ and $t \geq 0$
\begin{multline}
  \label{eq:evolution_equation_beta_one}
  \int_{\rset^p} f(\tilde{w}) \rmd \bflambda_t^{\star}(\tilde{w}) = \int_{\rset^p} f(\tilde{w}) \rmd \mu_0(\tilde{w}) + \int_0^t \int_{\rset^p}  (s+1)^{-\alpha} \langle h(\tilde{w}, \bflambda_s^{\star}), \nabla f(\tilde{w}) \rangle \rmd \bflambda_s^{\star}(\tilde{w}) \\ + \int_0^t \int_{\rset^p} (s+1)^{-\alpha} \trace(\Sigma(\tilde{w}, \bflambda_s^{\star}) \nabla^2f(\tilde{w})) \rmd \tilde{w} \eqsp .
\end{multline}
We higlight that the additional term in
\eqref{eq:evolution_equation_beta_one} from
\eqref{eq:evolution_equation_beta_min} corresponds to some entropic
regularization of the risk $\risk^{\star}$. Indeed, if for any
$w \in \rset^p$ and $\mu \in \Pens(\rset^p)$, $\Sigma = \beta \Id$
then, in the case $\alpha = 0$, we obtain that
$(\bfrho_t^{\star})_{t \geq 0}$ is a gradient flow for
$\rho \mapsto U^{\star}(\rho) + \beta \mathrm{Ent}(\rho)$, where
$\mathrm{Ent}: \ \msk_2 \to \rset$ is given for any $\rho \in \msk_2$ by
\begin{equation}
  \mathrm{Ent}(\rho) = - \int_{\rset^p} \rho(x) \log(\rho(x)) \rmd x \eqsp .
\end{equation}
This second regime emphasizes that large stepsizes act as an implicit
regularization procedure for SGD.

\section{Additional Experiments}
\label{sec:addit-exper}
In this section we present additional experiments illustrating the convergence results of the empirical measures. Contrary to the main document we illustrate our results with histograms of the weights of the first and second layers of the network, with a large number of different values of the parameters $\alpha$, $\beta$ and $N$.

\paragraph{Setting.}

In order to perform the following experiments we implemented a two-layer fully connected neural network on PyTorch. The input layer has the size of the input data, \ie, $N_{\text{input}}=28\times 28$ units in the case of the MNIST dataset~\cite{mnist} and $N_{\text{input}}=32\times 32\times 3$ in the case of the CIFAR-10 dataset~\cite{krizhevsky2009learning}. We use a varying number of $N$ units in the hidden layer and the output layer has $10$ units corresponding to the $10$ possible labels of the classification tasks. We use a ReLU activation function and the cross-entropy loss.

The linear layers' weights are initialized with PyTorch default initialization function which is a uniform initialization between $-1/N_{\text{input}}^{1/2}$ and $1/N_{\text{input}}^{1/2}$. In all our experiments, if not specified, we consider an
initialization $\bfw_0^{1:N}$ with distribution
$\mu_0^{\otimes N}$ where $\mu_0$ is the uniform distribution on
$\ccint{-0.04, 0.04}$.

In order to train the network we use SGD as described in \Cref{sec:setting} with an initial learning rate of $\gamma N^{\beta}$. In the case where $\alpha>0$ we decrease this stepsize at each iteration to have a learning rate of $\gamma N^{\beta} (n+\gua\pinv)^{-\alpha}$.
All experiments on the MNIST dataset are run for a finite time horizon $T=100$ and the ones on the CIFAR-10 dataset are run for $T=10000$. The average runtime of the experiments for $N=50000$ on the MNIST dataset is one day and the experiments on the CIFAR-10 dataset run during two days. The experiments were run on a cluster of 24 CPUs with 126Go of RAM.

All the histograms represented below correspond to the first coordinate of the weights' vector.

\paragraph{Experiments.} \Cref{fig:cv_M100_alpha0_layer1} shows that the empirical distributions of the weights converge as the number of hidden units $N$ goes to infinity. Those figures illustrate also the fact that we obtain two different limiting distributions one for $\beta<1$ (represented on the 3 first figures) and one for $\beta=1$ (on the last figure). The results presented on \Cref{fig:cv_M100_alpha0_layer2} illustrate the same fact, one the second layer. This means that the results we stated in \Cref{sec:mean-field-appr} are also true for the weights of the second layer, thanks to the procedure described for example in \cite{chizat2018global}.

\begin{figure}
	\centering
	\centerfloat
	\includegraphics[width=1.2\linewidth]{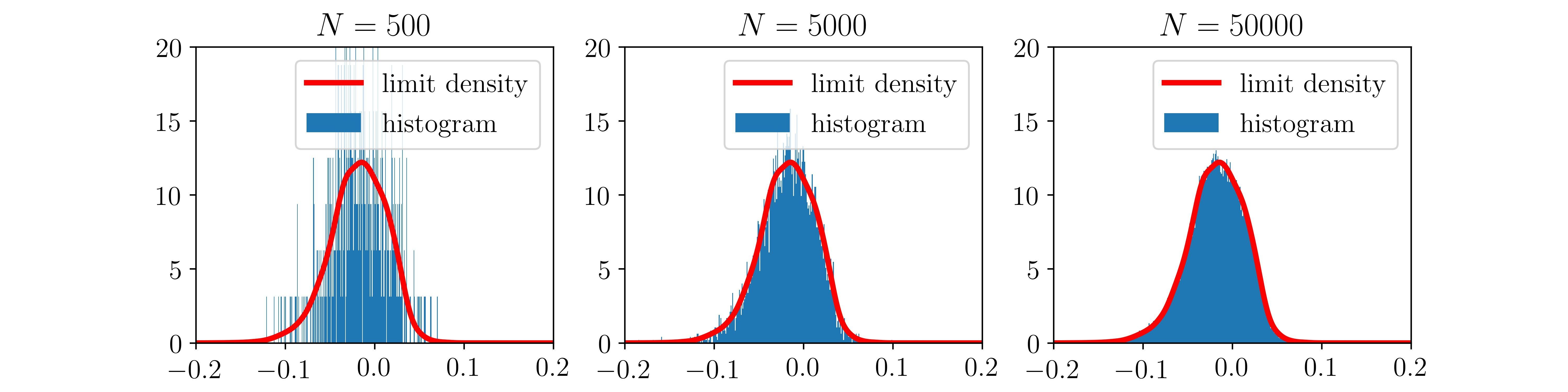}
	\includegraphics[width=1.2\linewidth]{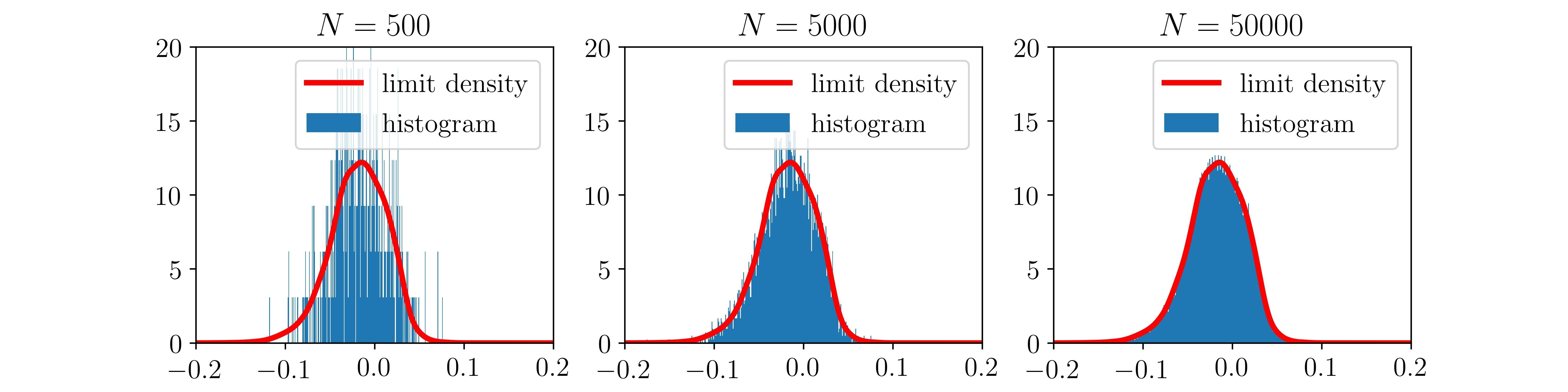}
	\includegraphics[width=1.2\linewidth]{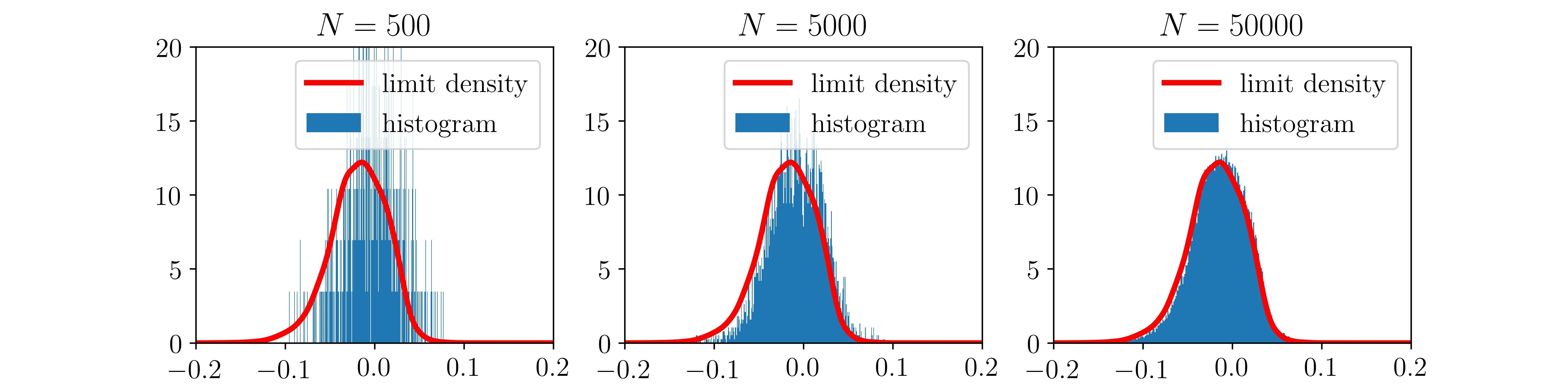}
	\includegraphics[width=1.2\linewidth]{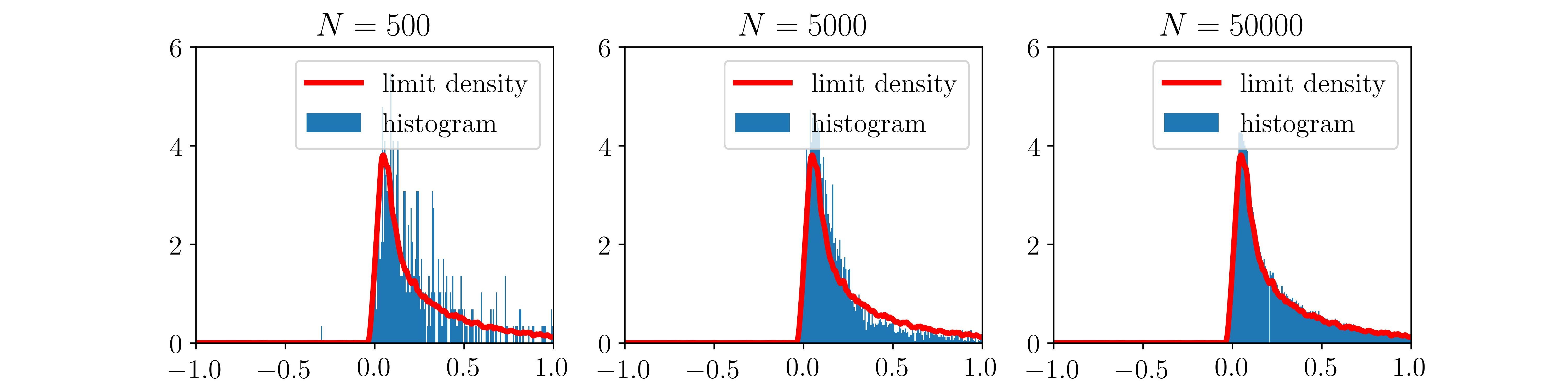}
	\caption{Convergence of the weights of the first layer as $N \to \plusinfty$ for $\alpha=0$ and $M=100$. The first line corresponds to $\beta=0.25$, the second to $\beta=0.5$, the third to $\beta=0.75$ and the last line to $\beta=1.0$.
	\label{fig:cv_M100_alpha0_layer1}}
\end{figure}

\begin{figure}
	\centering
	\centerfloat
	\includegraphics[width=1.2\linewidth]{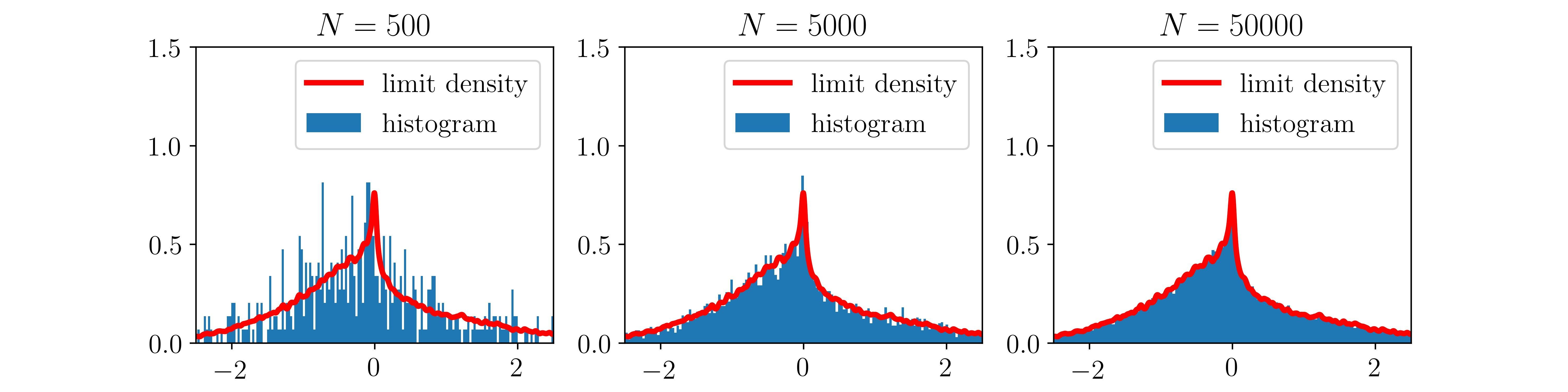}
	\includegraphics[width=1.2\linewidth]{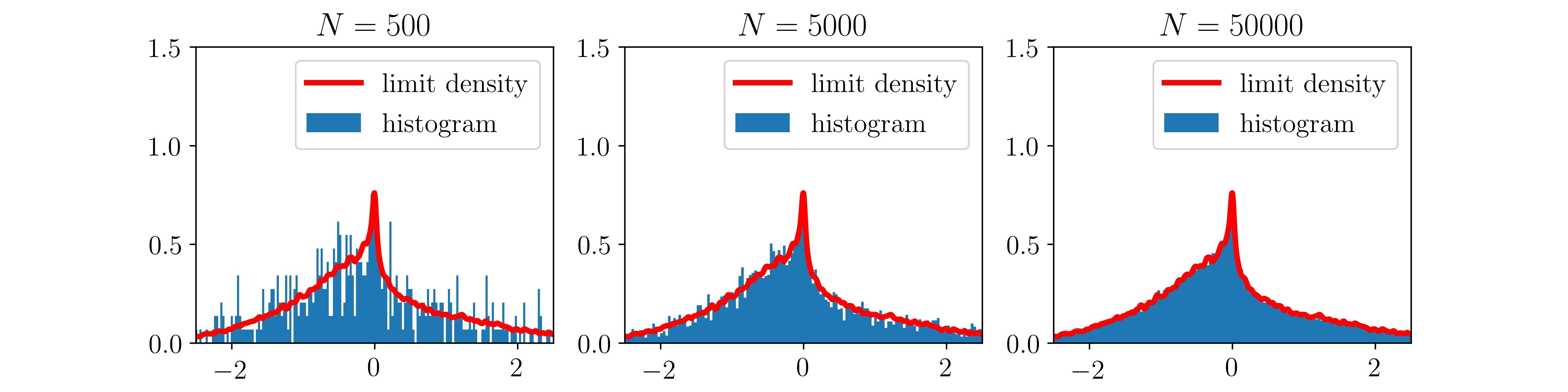}
	\includegraphics[width=1.2\linewidth]{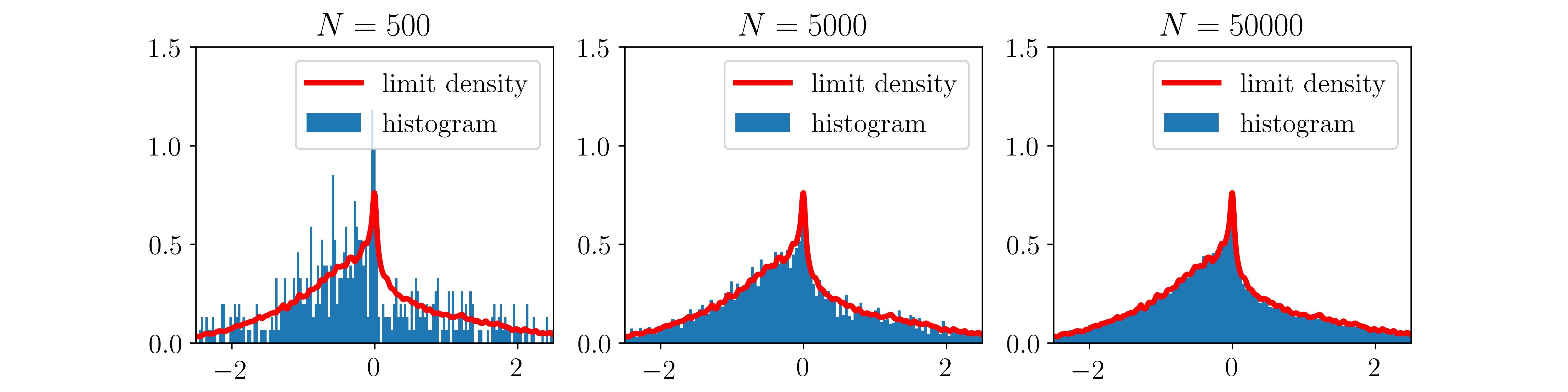}
	\includegraphics[width=1.2\linewidth]{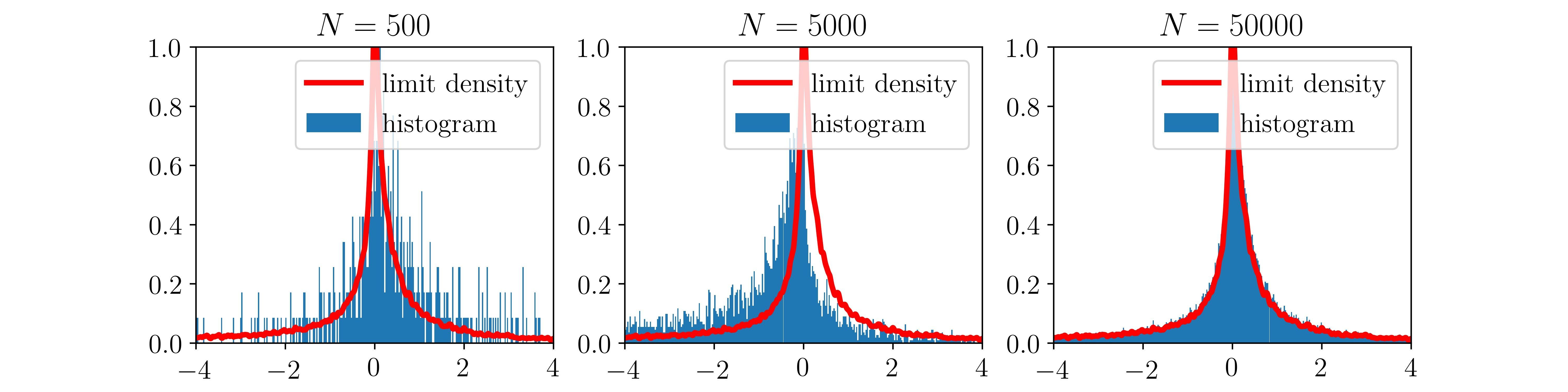}
	\caption{Convergence of the weights of the first layer as $N \to \plusinfty$ for $\alpha=0$ and $M=100$. The first line corresponds to $\beta=0.25$, the second to $\beta=0.5$, the third to $\beta=0.75$ and the last line to $\beta=1.0$.
	\label{fig:cv_M100_alpha0_layer2}}
\end{figure}

On \Cref{fig:cv_M100_alpha025_layer1} and \Cref{fig:cv_M100_alpha025_layer2} we show the results of the exact same experiments but this time using decreasing stepsizes and a parameter $\alpha=0.25$. Once again our experiments illustrate the convergence of the empirical distributions to some limiting distribution, and we can also identify two regimes. Note that the limiting distribution satisfying \eqref{eq:evolution_equation_beta_min} or \eqref{eq:evolution_equation_beta_one} (depending on the value of $\beta$), it depends on the parameter $\alpha$. Therefore the limiting distribution obtained in the case where $\alpha=0.25$ is different from the one obtained when $\alpha=0$. This is particularly visible in the case where $\beta=1$ (as shown in green on  \Cref{fig:cv_M100_alpha025_layer1} and \Cref{fig:cv_M100_alpha025_layer2}).

\begin{figure}
	\centering
	\centerfloat
	\includegraphics[width=1.2\linewidth]{Figures/CV_emp_measure_beta05_alpha025_batchsize100_layer1.jpg}
	\includegraphics[width=1.2\linewidth]{Figures/CV_emp_measure_beta075_alpha025_batchsize100_layer1.jpg}
	\includegraphics[width=1.2\linewidth]{Figures/CV_emp_measure_beta10_alpha025_batchsize100_layer1.jpg}
	\caption{Convergence of the weights of the first layer as $N \to \plusinfty$ for $\alpha=0.25$ and $M=100$. The first line corresponds to $\beta=0.5$, the second to $\beta=0.75$ and the last line to $\beta=1.0$.
	\label{fig:cv_M100_alpha025_layer1}}
\end{figure}

\begin{figure}
	\centering
	\centerfloat
	\includegraphics[width=1.2\linewidth]{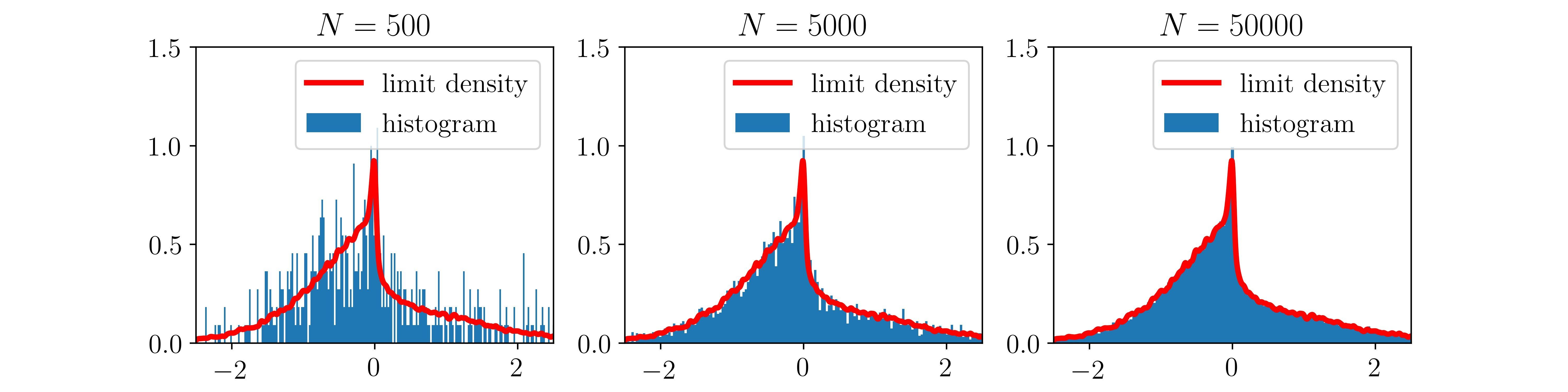}
	\includegraphics[width=1.2\linewidth]{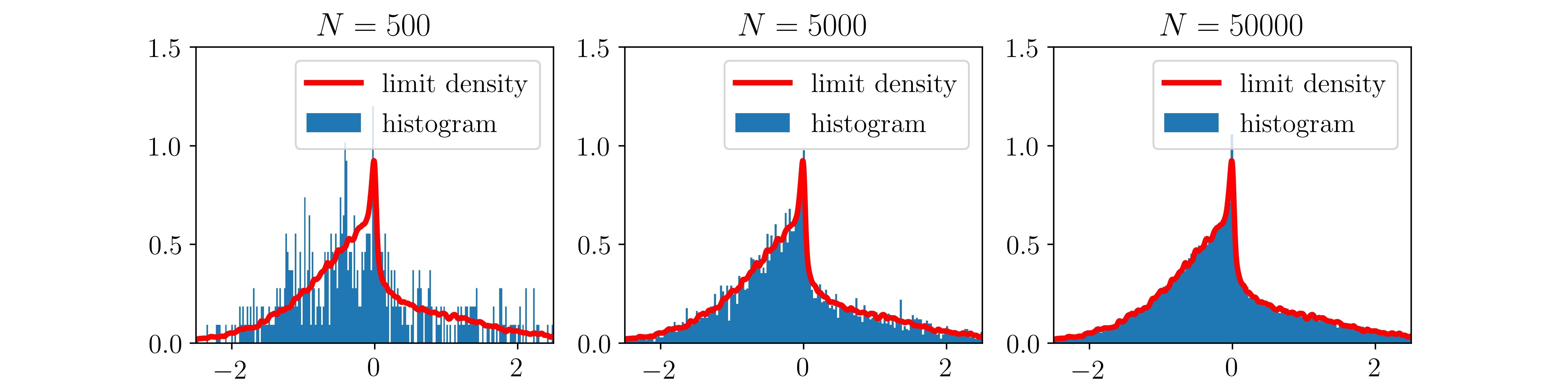}
	\includegraphics[width=1.2\linewidth]{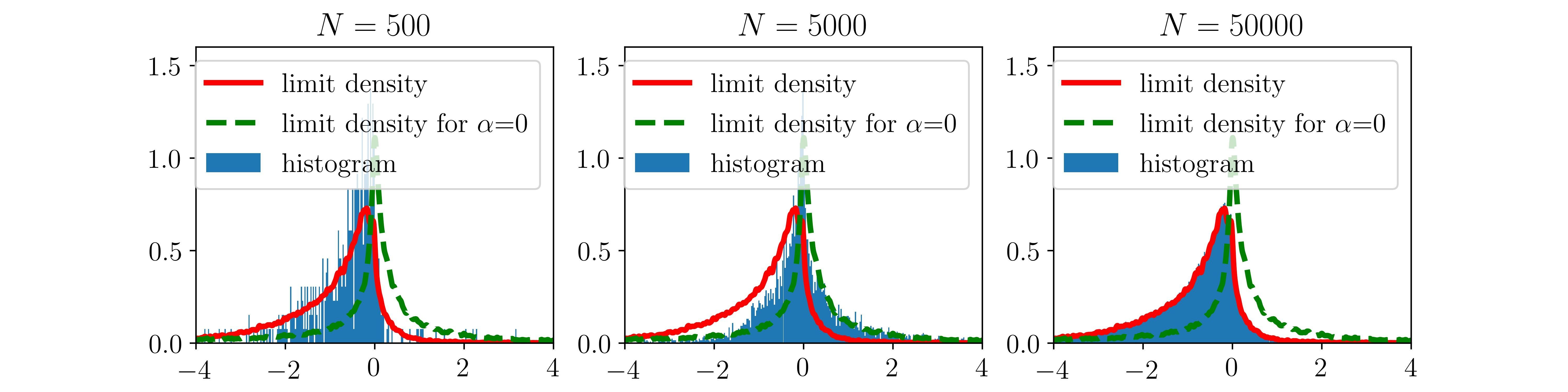}
	\caption{Convergence of the weights of the first layer as $N \to \plusinfty$ for $\alpha=0.25$ and $M=100$. The first line corresponds to $\beta=0.5$, the second to $\beta=0.75$ and the last line to $\beta=1.0$.
	\label{fig:cv_M100_alpha025_layer2}}
\end{figure}

We now study the role of the batch size $M$ on the convergence toward the mean-field regime. \Cref{fig:CV_BS} illustrates the convergence of the empirical measures in the case where $\beta<1$ (here $\beta=0.75$) of the weights of the hidden layer of the neural network, for a fixed number of neurons $N=10000$ for different batch sizes $M$. We indeed observe convergence with $M$.

\begin{figure}
	\centering
	\centerfloat
	\includegraphics[width=1.1\linewidth]{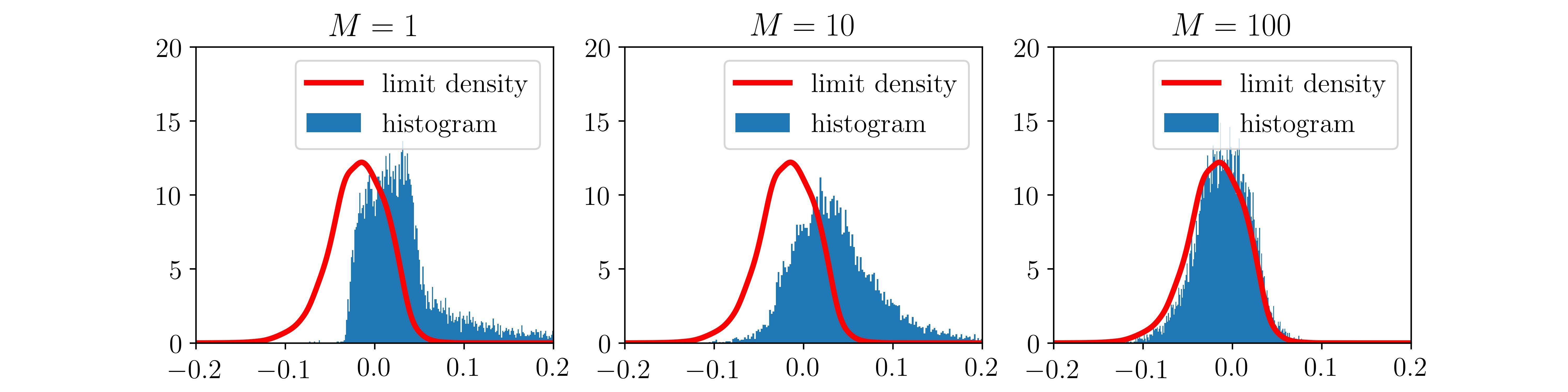}
	\caption{Convergence of the weights as $M \to \infty$\label{fig:CV_BS}}
\end{figure}


\end{document}